\documentclass[11pt]{article}

\usepackage[margin=1.05in]{geometry}
\usepackage{amsmath,amssymb,amsthm,mathtools}
\usepackage{microtype}
\usepackage{natbib}
\usepackage{hyperref}
\usepackage{xcolor}
\usepackage{enumitem}
\usepackage{tikz}
\usetikzlibrary{arrows.meta,positioning,calc}
\usepackage{pgfplots}
\pgfplotsset{compat=1.18}

\hypersetup{
  colorlinks=true,
  linkcolor=blue!60!black,
  citecolor=blue!60!black,
  urlcolor=blue!60!black
}

\newtheorem{theorem}{Theorem}[section]
\newtheorem{lemma}[theorem]{Lemma}
\newtheorem{proposition}[theorem]{Proposition}
\newtheorem{corollary}[theorem]{Corollary}
\theoremstyle{definition}
\newtheorem{definition}[theorem]{Definition}
\newtheorem{remark}[theorem]{Remark}

\newcommand{\eps}{\varepsilon}
\newcommand{\calA}{\mathcal A}
\newcommand{\calC}{\mathcal C}
\newcommand{\calD}{\mathcal D}
\newcommand{\calG}{\mathcal G}
\newcommand{\calH}{\mathcal H}
\newcommand{\calL}{\mathcal L}
\newcommand{\calX}{\mathcal X}

\newcommand{\bbE}{\mathbb E}
\newcommand{\bbP}{\mathbb P}
\newcommand{\one}{\mathbf 1}

\newcommand{\dist}{\operatorname{dist}}
\newcommand{\supp}{\operatorname{supp}}

\newcommand{\clip}{\operatorname{clip}}

\newcommand{\At}{\operatorname{At}}

\newcommand{\TCal}{\operatorname{TCal}}
\newcommand{\MA}{\operatorname{MA}}

\newcounter{algorithmctr}
\newenvironment{algorithm}[1]{%
  \refstepcounter{algorithmctr}%
  \par\medskip\noindent\textbf{Algorithm \thealgorithmctr. #1}\par\smallskip
  \begin{enumerate}[leftmargin=2.5em,label=\arabic*.]
}{%
  \end{enumerate}\medskip
}

\title{Optimal Deterministic Multicalibration and Omniprediction}
\author{Georgy Noarov \\ University of Pennsylvania \and Aaron Roth \\ University of Pennsylvania}
\date{\today}

\begin{document}
\hypersetup{pageanchor=false}
\pagenumbering{gobble}
\maketitle
\vspace{-1em}

\begin{abstract}
A model is multicalibrated on a collection of group weights $\calG$ if it is calibrated---i.e. unbiased even conditional on its prediction---not just overall, but also after reweighting contexts by each $g \in \calG$. It is a useful property for many downstream applications and is a basic desideratum of trustworthy machine learning. Before this work, all algorithms known to attain the minimax-optimal $\widetilde O(\eps^{-3})$ sample complexity rate for $\eps$-multicalibration output \emph{randomized} predictors, while deterministic predictors were known only with substantially worse sample complexity. Whether randomization is \emph{necessary} for optimal sample complexity in multicalibration was explicitly asked by \cite{collina2026sample} and implicitly in several prior works.

We resolve this  open problem by giving a minimax-optimal multicalibration algorithm that outputs a deterministic predictor. We then generalize the algorithm to produce deterministic predictors that satisfy outcome indistinguishability with respect to finite or finitely covered collections of tests. As an application, this also gives deterministic \emph{omnipredictors} and \emph{panpredictors} with optimal sample complexity, resolving open problems posed by \cite{okoroafor2025near} and \cite{balakrishnan2025panprediction}.
\end{abstract}

\clearpage
\tableofcontents
\clearpage
\pagenumbering{arabic}
\hypersetup{pageanchor=true}

\section{Introduction}\label{sec:intro}

\paragraph{Calibration and multicalibration.}
A predictor is \emph{calibrated} if, conditional on the value it predicts, that value equals the expected outcome~\citep{dawid1982well}.
The standard quantitative measure of miscalibration is the \emph{expected calibration error} (ECE) which sums the magnitude of the prediction-conditional bias across all predicted values.  \emph{Multicalibration}~\citep{hebert2018multicalibration} strengthens calibration by requiring it to hold not just marginally but simultaneously after reweighting by every group function in a collection $\calG$.  The \emph{ECE multicalibration error} is then the maximum, over groups $g\in\calG$, of the group-weighted ECE (Definition~\ref{def:mc}).  Since its introduction, multicalibration and closely related notions have found a wide range of applications---from learning predictors that are simultaneously optimal for many loss functions~\citep{gopalan2021omnipredictors,gopalan2023loss,noarov2025high,roth2024forecasting}, to strengthening complexity-theoretic constructions~\citep{dwork2025supersimulators,casacuberta2024complexity}, to low-complexity algorithms for agreement and information aggregation~\citep{collina2025tractable,collina2026collaborative,kearns2026networked}---which motivates a sharp understanding of what it costs to achieve.

\paragraph{Omniprediction.}
A closely connected goal is \emph{omniprediction}~\citep{gopalan2021omnipredictors}: learning a single predictor that can be simultaneously and cheaply post-processed to optimize a wide variety of downstream loss functions in a way that is competitive with some benchmark class of models---avoiding the need to train a separate predictor for each loss function. Multicalibration is one route towards omniprediction: \citet{gopalan2021omnipredictors} show that a predictor with $\eps$-ECE multicalibration error with respect to an appropriately defined class of groups is also an $\eps$-omnipredictor.  Recent work gives more direct routes through outcome indistinguishability (OI)\footnote{The original paper of \citet{dwork2021outcome} defines a hierarchy of outcome-indistinguishability notions and tests, depending on the access given to distinguishers.  Throughout this paper, outcome indistinguishability refers to their  ``one-sample sample-access'' notion, in which a test sees a context, the prediction on that context, and one outcome drawn either from Nature or from the predictor-induced outcome model.}~\citep{gopalan2023loss,garg2024oracle,okoroafor2025near}: indeed, omniprediction can be obtained via online algorithms with regret $\widetilde O(\sqrt{T})$, equivalently average error $\widetilde O(1/\sqrt{T})$ \citep{garg2024oracle,okoroafor2025near,gibbs-tibshirani-omnipredict}, which is impossible for (multi)calibration \citep{qiao2021stronger,dagan2025breaking,collina2026optimal,collina2026sample}, and so the two problems are distinct.

\paragraph{Randomized versus deterministic predictors.}
In the batch setting, a learner sees $n$ i.i.d. samples from an unknown distribution $P$ over context--outcome pairs and must output a predictor whose population multicalibration error with respect to $\calG$ is at most $\eps$.  \cite{collina2026sample} established the minimax optimal sample complexity of $\eps$-multicalibration as $\widetilde\Theta(\eps^{-3})$ (in the regime in which $|\calG|$ lies anywhere between polylog$(1/\eps)$ and poly$(1/\eps)$). Their lower bound holds for all algorithms, randomized or not, but their upper bound follows from online-to-batch reductions that output randomized predictors \citep{gupta2021online,noarov2025high,ghugeimproved}. \cite{gupta2021online}, who gave the first sample-complexity upper bounds for multicalibration via online-to-batch reductions, already noted that the best known bounds were achieved by randomized predictors. \cite{haghtalab2023unifying} emphasized the same point, and \cite{collina2026sample} explicitly asked whether the minimax sample complexity of multicalibration can be achieved by a learner whose output predictor is deterministic. For the closely related problem of multi-distribution learning \citep{peng2024sample,zhang2024optimal}, randomized predictors can be statistically easier to learn than deterministic predictors, and recent work shows that derandomization can also be computationally hard in the fully general setting \citep{larsen2024derandomizing}. Is multicalibration another setting where randomization gives a statistical advantage?

The question is important because prediction-time randomness complicates auditing, reproducibility, and downstream decision-making. It also sits uneasily with multicalibration as a notion of ``trustworthiness'': two identical individuals can receive different predictions from a randomized predictor solely because it flips different coins.

The same issue arises for omniprediction. The sample-optimal omnipredictors of \citet{okoroafor2025near} are obtained by online-to-batch reductions and output randomized predictors, leading them to ask explicitly whether these omnipredictors can be derandomized, or whether randomness is necessary for sample-optimal omniprediction. Similarly, \cite{balakrishnan2025panprediction} ask the same question for \emph{panprediction} (a group conditional notion of omniprediction) for which the same seeming sample complexity gap between deterministic and randomized predictors arises.

Quantitatively, the best prior deterministic rates were substantially worse than the randomized ones.  Translating the deterministic  multicalibration guarantee of \citet[Theorem~4.2]{haghtalab2023unifying} to ECE multicalibration gives a sample complexity of about $\widetilde O(\eps^{-6})$, while the optimal randomized rate is $\widetilde O(\eps^{-3})$ \citep{collina2026sample}.  For omniprediction, \citet{okoroafor2025near} obtain sample-optimal $\widetilde O(d/\eps^2)$  rates for broad bounded-variation loss classes, where $d$ is the relevant complexity of the loss-derived auditor class, but their construction outputs randomized predictors.  Earlier deterministic constructions had much worse dependence on $\eps$---for example, the rate recalled by \citet[Theorem~C.1]{okoroafor2025near} for \citet{gopalan2023loss} scales as $O(d/\eps^4+\eps^{-10}\log(1/\eps))$. \citet{gibbs-tibshirani-omnipredict} give a direct, deterministic, and sample-optimal result for the special case of proper losses---but their construction uses specific structural properties of proper losses and does not extend to the general case.

\begin{center}
    \textit{We resolve all of these questions and show that prediction-time randomization is not necessary \\ for sample-optimal multicalibration, omniprediction, or panprediction, \\or more generally for \emph{outcome indistinguishability}: \\ deterministic predictors can attain the minimax optimal sample complexity for each.}
\end{center}

\subsection{Our results}
Throughout, when stating simplified rates, we assume that the group/constraint/benchmark family has polynomially bounded cardinality: $|\calG|\le\eps^{-\kappa}$ for a fixed constant $\kappa$, and $\widetilde O(\cdot)$ hides polylogarithmic factors in $1/\eps$ and $|\calG|$.  In the body we allow the learner to use randomness \emph{during training}, while its output is always a fixed \emph{deterministic} function.  Appendix~\ref{app:training-derandomization} shows that this remaining training randomness can also be removed, with only logarithmic changes in the sample bounds.

\begin{enumerate}[leftmargin=2em,label=\textbf{\arabic*.}]
\item \emph{Optimal Deterministic Multicalibration (Theorem~\ref{thm:main}).}
We give an algorithm that outputs a deterministic predictor $h$ with ECE multicalibration error at most $\eps$ using
\[
        n = \widetilde O\left(\frac{1/\eps+\log |\calG|}{\eps^2}\right)
\]
samples.  This is $\widetilde O(\eps^{-3})$, matching the minimax optimal rate \citep{collina2026sample}.  We also show how to implement the learning algorithm in polynomial time. 

\item \emph{Optimal Deterministic Outcome Indistinguishability (Theorem~\ref{thm:linear-tests}).}
We then generalize our result to any finite family $\calA$ of bounded outcome-indistinguishability tests $a:\calX\times[0,1]\to[-1,1]$.  The algorithm uses
\[
        \widetilde O\left(\frac{\log|\calA|}{\eps^2}\right)
\]
samples and outputs a deterministic predictor $h$ satisfying
\[
        \max_{a\in\calA}\left|\bbE[a(X,h(X))(h(X)-Y)]\right|\le\eps .
\]
These rates are optimal, since the tight bounds we give for multicalibration and omniprediction are special cases. 

\item \emph{Optimal Deterministic Omniprediction (Theorem~\ref{thm:det-omni} and Corollaries~\ref{cor:cover-det-omni}--\ref{cor:okk-det-omni}).}
Applying the OI theorem to the tests that certify omniprediction gives deterministic versions of the sample-optimal randomized offline omnipredictors of \citet{okoroafor2025near}. Here is the concrete specialization. For a loss class $\calL$ and benchmark class $\calH$, omniprediction reduces to threshold-calibration tests together with multiaccuracy tests for the loss-derived class
\[
        \Delta\calL\circ\calH
        =
        \{x\mapsto \ell(f(x),1)-\ell(f(x),0):\ell\in\calL,\ f\in\calH\}.
\]
If this class is represented to accuracy $O(\eps)$ by a finite auditor class $\calC$---for example, by a finite cover or finite approximate basis---then the sample complexity is
\[
        \widetilde O\left(\frac{\log|\calC|+\log(1/\eps)}{\eps^2}\right)
\]
matching the randomized offline rate up to logarithmic factors. In particular, if this loss-derived class has pseudo-dimension $p$, then deterministic omnipredictors use $\widetilde O((p+\log(1/\eps))/\eps^2)$ samples. In Appendix~\ref{app:panprediction} we extend this result to panprediction.

\end{enumerate}

\subsection{Technical overview}

To understand our algorithm in the special case of multicalibration it is useful to first consider a simple/naive approach to derandomization that does not work. Given a randomized predictor obtaining low multicalibration error, perhaps we could just ``fix its randomness up front'' to make it deterministic. Put aside representation concerns for now (how do we ``fix the randomness'' of continuously many prediction values?) Indeed if the underlying distribution on contexts was continuous (or at least put only tiny probability on any context $x$) then this would succeed.\footnote{In atomless finite-action settings, this intuition goes back to the purification theorem of \citet{dvoretzky1951elimination}, which eliminates randomization in games while preserving finitely many expected payoffs.  Our setting allows atoms in the feature distribution where exact purification can fail; the machinery we develop is  designed to handle this obstruction.} If in a large sample we never see the same context twice, then there is no observable difference between flipping coins at test time vs. pre-flipping them at training time. Thus the only conceptual obstacle to derandomization is distributions that have non-trivial atoms --- i.e. place large mass on particular contexts $x$. To understand the issue, which arises already for marginal calibration before imposing any group structure, consider the following minimal example.
\paragraph{A two-point obstruction.}
Consider a distribution over two contexts $x_1$ and $x_2$, both of which have probability $1/2$. Suppose $x_1$ is always paired with label $y = 0$ and $x_2$ is always paired with label $y = 1$.  Consider the randomized predictor that on input $x_1$ outputs prediction $\tilde y = 1/3$ with probability $2/3$ and otherwise outputs prediction $\tilde y = 2/3$, and that on input $x_2$ outputs prediction $\tilde y = 2/3$ with probability $2/3$ and otherwise outputs prediction $\tilde y = 1/3$.
 
This predictor has ECE zero: the expected bias of the prediction conditional on predicting $\tilde y = 1/3$ is
\[
        \frac12\cdot\frac23\cdot(1/3-0)
        +
        \frac12\cdot\frac13\cdot(1/3-1)
        =0,
\]
and the analogous bias conditional on predicting $\tilde y = 2/3$ is also zero.  But any rounding that assigns each context a prediction from the support of its randomized prediction distribution produces a predictor $h$ with $h(x_i)\in\{1/3,2/3\}$. Any such predictor has ECE at least $1/6$.  The randomized predictor is calibrated because conditioning on each prediction value gives exactly the right mixture of the two contexts, and this mixture is destroyed by deterministic rounding.  Thus any rounding-based derandomization must explicitly account for atoms.

\paragraph{A first attempt: handle large atoms separately.}
If the distribution has no atoms, or if all atoms have very small mass, then fixing the predictor's randomness up front works: an atom $x$ of mass $p_x$ contributes variance on the scale of $p_x^2$ to the realized multicalibration error. As all $p_x$ tend to $0$, the multicalibration error is unaffected by fixing the randomness.  At the other extreme, if an atom has sufficiently large mass that it appears many times in the training data, we can estimate its conditional label mean directly. This would solve the two-context example above, because each of $x_1$ and $x_2$ accounts for roughly half of the sample. These two observations suggest a hybrid algorithm: fix the randomness of a randomized multicalibrated predictor on ``light''/infrequent contexts, and separately estimate the conditional label mean of each sufficiently frequent ``heavy'' context.

Unfortunately the numbers do not work out. 
Let $L$ denote the log of the number of signed calibration tests---informally, the exponent in the union bound used to control multicalibration error after fixing the randomness of a randomized predictor. $L$ is on the order of the cardinality of the predictor's domain, so $L\approx 1/\eps$ up to logarithmic factors. Suppose an atom is declared to be ``heavy'' if it has mass more than $\tau$, and is otherwise declared to be ``light''. For any fixed signed calibration test, the variance proxy for the rounding noise from the light atoms is bounded by a constant times $\sum_x p_x^2$. Since every light atom has mass $p_x\le\tau$, this satisfies $\sum_x p_x^2\le \tau\sum_x p_x\le\tau$.  Simultaneous concentration over the $L$ signed calibration tests then asks for $\sqrt{L\tau}\lesssim\eps$ by a union bound, and so the threshold for ``light'' atoms that guarantees that they do not contribute more than $\eps$ additional multicalibration error is roughly
\[
        \tau_{\rm light}\approx \frac{\eps^2}{L},
\]
which is about $\eps^3$.  On the other hand, directly estimating the conditional label mean of an atom to constant multiples of $\eps$ requires about $\eps^{-2}$ occurrences of that atom.  If we have $N$ samples, an atom of mass $p_x$ appears about $Np_x$ times, so direct estimation is only guaranteed when
\[
        Np_x\gtrsim \eps^{-2},
        \qquad\text{or equivalently}\qquad
        p_x\gtrsim \tau_{\rm heavy}(N):=\frac{1}{N\eps^2}.
\]
For a hard threshold $\tau$ to handle every atom, we would need $\tau_{\rm heavy}(N)\lesssim\tau \lesssim \tau_{\rm light}$, which requires
\[
        N\gtrsim \frac{L}{\eps^4}.
\]
Thus the hard split approach works only at a sample size much larger than the target rate $N\approx L/\eps^2$.  At that target rate, $\tau_{\rm heavy}(N)$ is about $1/L$, which is about $\eps$, leaving a large range of atom masses between $\eps^3$ and $\eps$ that are too large for rounding but too small for conditional mean estimation.  Figure~\ref{fig:atoms} illustrates this mismatch and the confidence-interval approach that avoids it.

\begin{figure}[ht]
\centering
\begin{tikzpicture}[x=1cm,y=1cm,>=Stealth,font=\footnotesize]
\def\xL{0.85}
\def\xA{3.35}
\def\xB{8.05}
\def\xR{11.15}

\node[font=\small\bfseries] at (6.0,3.75) {Hard split versus adaptive intervals};

\node[anchor=east,font=\small\bfseries] at (\xL-0.2,2.75) {Hard Split};
\draw[fill=green!10,draw=green!45!black] (\xL,2.25) rectangle (\xA,3.25);
\draw[fill=orange!15,draw=orange!75!black] (\xA,2.25) rectangle (\xB,3.25);
\draw[fill=green!10,draw=green!45!black] (\xB,2.25) rectangle (\xR,3.25);
\node[text width=2.0cm,align=center] at (2.10,2.75) {Rounding OK};
\node[text width=3.6cm,align=center] at (5.70,2.75) {Gap: Neither Rule Applies};
\node[text width=2.3cm,align=center] at (9.60,2.75) {Mean Estimation OK};

\node[anchor=east,font=\small\bfseries] at (\xL-0.2,1.35) {Our Approach};
\draw[fill=blue!7,draw=blue!60!black,rounded corners=2pt] (\xL,0.85) rectangle (\xR,1.85);
\node[text width=2.1cm,align=center,blue!70!black] at (2.10,1.35) {Rare Points:\\Wide Intervals};
\node[text width=3.6cm,align=center,blue!70!black] at (5.70,1.35) {Intermediate Length Intervals};
\node[text width=2.4cm,align=center,blue!70!black] at (9.60,1.35) {Frequent Points:\\Narrow Intervals};

\draw[dashed] (\xA,0.42)--(\xA,3.42);
\draw[dashed] (\xB,0.42)--(\xB,3.42);
\draw[->,thick] (\xL,0.35) -- (\xR+0.35,0.35);
\node[below=4pt,align=center] at (\xA,0.35) {$\tau_{\rm light}$\\[-1pt]$\approx\eps^3$};
\node[below=4pt,align=center] at (\xB,0.35) {$\tau_{\rm heavy}(N)$\\[-1pt]$\approx\eps$};
\node[below=25pt] at (6.0,0.35) {atom mass $p_x$ };
\node[anchor=east] at (\xL,0.35) {smaller};
\node[anchor=west] at (\xR+0.35,0.35) {larger};
\end{tikzpicture}
\caption{The obstruction to a hard split.  The gap region contains atom masses that are too large for blind rounding but too small for accurate conditional-mean estimation at the target sample size.  Confidence intervals interpolate between these two regimes.}
\label{fig:atoms}
\end{figure}

\paragraph{Our solution.} Our solution avoids the hard split by instead smoothly incorporating however much statistical information we have about the conditional label mean of a point $x$ as a ``hint'' when producing the randomized predictor that we ultimately round---rather than either relying on the information we have as sufficient or else ignoring it entirely. From the training sample we record how many times each context $x$ appears, and build a confidence interval for its conditional label mean whose width depends on this count $N_x$: frequent contexts receive narrow intervals, while rare contexts receive the trivial interval $[0,1]$.  To use these confidence intervals, we develop a new online learning primitive: an online multicalibration algorithm that receives with each context $x$ a valid interval hint, consisting of an interval $I_x$ containing the conditional label mean and a set of allowed grid values near $I_x$. The algorithm guarantees multicalibration error at the optimal rate against any adversary, provided the hints are valid, and it only makes predictions using the allowed grid values. We then use an online-to-batch reduction using this online learning algorithm paired with confidence intervals computed from a sample of the data. This results in a randomized predictor, but one with context conditional prediction variance that scales inversely with the mass of each point (since frequently seen contexts are endowed with narrower confidence intervals which constrain the support of the randomized predictor). The variance from fixing the randomness of this predictor therefore scales smoothly with the mass of each context, rather than depending on a brittle heavy/light threshold. Finally, we use the data to partition the context space into finitely many rounding cells and use one sampler seed per cell, giving us a finite derandomization even for continuous context spaces. We now map our results in more detail.

\paragraph{Multicalibration upper bound.}
The learner splits its sample into three independent parts: a confidence sample $S_0$ used to learn confidence intervals for the conditional mean around each context, an online-learning sample $S_1$ used in the online-to-batch reduction, and a partition sample $S_2$ used to define the cells used for deterministic rounding. The upper-bound proof has four ingredients.

\begin{enumerate}[leftmargin=2em,label=(\arabic*)]
\item \emph{Online-to-batch reduction with interval hints} (Section~\ref{sec:intervalhints}).  Consider an online adversarial learning setting in which, after the adversary reveals context $x$, the learner also receives a valid interval hint: an interval $I_x$ containing the conditional mean $\mu(x)$ and allowed grid values near that interval.  A minimax and exponential-weights argument gives an online learning algorithm (Lemma~\ref{lem:hinted-online}) that obtains multicalibration against any such adversary at the optimal rate, while predicting only allowed grid values. With valid interval hints in the batch setting, a standard martingale online-to-batch reduction for multicalibration \citep{gupta2021online} then produces a randomized batch predictor with small ECE multicalibration error whose support at each context is restricted to those allowed grid values.
\item \emph{Building the intervals} (Section~\ref{sec:intervals}).  The confidence sample $S_0$ is used to compute the confidence intervals $I_x$.  Atoms seen repeatedly get narrow empirical confidence intervals; unseen and infrequently seen contexts get the trivial interval $[0,1]$.  
\item \emph{Making the rounding finite} (Section~\ref{sec:partition}).  The final step will fix one sampler seed for each rounding cell.  The contexts observed in $S_0$ already give finitely many singleton cells.  The remaining issue is the unobserved region, which may be infinite. We need to partition it into a finite number of ``rounding cells'' whose sum of squared masses is small (so that they will not substantially harm multicalibration error even when rounded with a common seed in the next step). In one dimension, the natural version of this idea is to draw a fresh sample of points, sort them, and use the gaps between consecutive sample points as cells; an exchangeability argument then shows that two fresh points are unlikely to land in the same gap. We use the same rank-based idea for general contexts by imposing a lexicographic order, sorting the partition sample $S_2$ in that order, and taking the cells induced by adjacent sampled contexts.

\item \emph{Rounding the Predictor} (Section~\ref{sec:rounding}).  Using one sampler seed per rounding cell turns the randomized predictor produced by the online-to-batch reduction on $S_1$ into a deterministic function.  The rounding itself is simple: each rounding cell gets one shared random draw from the randomized predictor, and every context in that cell is rounded using that shared draw.  The proof is driven by a general finite-test rounding proposition: if the learned intervals make the mass-squared weighted interval radii small on observed atoms and the partition makes $\sum_C P_X(C)^2$ small on the unobserved region, then one sampler seed per cell preserves every test in any finite family.  This is why the same derandomization step applies both to multicalibration and to outcome indistinguishability.
\end{enumerate}

Figure~\ref{fig:pipeline} summarizes how the three sample parts combine into the deterministic predictor.

\begin{figure}[ht]
\centering
\begin{tikzpicture}[
  >=Stealth, font=\small,
  blk/.style={draw,rounded corners,align=center,inner sep=4pt,text width=27mm,minimum height=10mm},
  sm/.style={draw,rounded corners,align=center,inner sep=3pt,minimum width=9mm},
  every path/.style={thick}]
\node[sm,fill=gray!12] (S) at (0,0) {i.i.d.\ \\ sample $S$};
\node[sm,fill=blue!10] (S0) at (3,1.7) {$S_0$};
\node[sm,fill=blue!10] (S1) at (3,0) {$S_1$};
\node[sm,fill=blue!10] (S2) at (3,-1.7) {$S_2$};
\draw[->] (S) -- (S0);
\draw[->] (S) -- (S1);
\draw[->] (S) -- (S2);
\node[blk,fill=green!10] (I) at (6.6,1.7) {intervals $I_x$};
\node[blk,fill=green!10] (Q) at (6.6,0) {randomized $Q$ \\ (online-to-batch)};
\node[blk,fill=green!10] (P) at (6.6,-1.7) {lexicographic \\ cells $\Pi$};
\draw[->] (S0) -- (I);
\draw[->] (S1) -- (Q);
\draw[->] (S2) -- (P);
\draw[->,densely dashed] (I) -- node[right,font=\scriptsize,align=left]{constrains \\ support} (Q);
\node[blk,fill=orange!18,font=\bfseries] (h) at (10.8,0) {deterministic \\ predictor $h$};
\draw[->] (Q) -- node[above,font=\scriptsize]{round} (h);
\draw[->] (P) -- node[below=5pt,font=\scriptsize,pos=0.55,fill=white,inner sep=1pt]{rounding-cell seed} (h);
\end{tikzpicture}
\caption{The learner splits its sample into three independent parts.  The confidence sample $S_0$ produces valid interval hints $I_x$ and their allowed grid values; the online-learning sample $S_1$ produces a randomized predictor $Q$ supported on those allowed values (Theorem~\ref{thm:constrained-online}); the partition sample $S_2$ produces a finite family of lexicographic rounding cells $\Pi$ (Section~\ref{sec:partition}).  Drawing one independent sampler seed per rounding cell turns $Q$ into the deterministic predictor $h$ (Lemma~\ref{lem:one-seed-rounding}).}
\label{fig:pipeline}
\end{figure}

Our online multicalibration algorithm is most naturally described as an exponential-weights subroutine over an exponentially large space, with one weight per \emph{sign pattern} over possible prediction values.  In this form it would take exponential time.  We show, however, that the exponential-weights distribution factors across prediction values, so the algorithm can be implemented implicitly in polynomial time (Theorem~\ref{thm:main}).

\paragraph{Outcome indistinguishability and omniprediction.}
The scalar multicalibration proof uses the special form of ECE multicalibration only through the finite family of signed OI tests that it induces. This lets us state a more general theorem: fix any finite family $\calA$ of bounded OI tests $a(x,v)$, and ask that
\[
        \left|\bbE[a(X,h(X))(h(X)-Y)]\right|
\]
be small for every $a\in\calA$.  Our deterministic learner for this finite-test OI problem uses
\[
        \widetilde O\left(\frac{\log|\calA|}{\eps^2}\right)
\]
samples to make all these correlations at most $\eps$. Omniprediction fits this template because the outcome-indistinguishability characterizations \citep{gopalan2023loss,okoroafor2025near} reduce it to threshold-calibration tests together with multiaccuracy tests for a loss-derived auditor class. Instantiating our theorem with these tests yields sample-optimal omniprediction with a deterministic predictor.

\subsection{Additional Related Work}

\paragraph{Uniform convergence for fixed predictor classes.}
A related but distinct line of work \citep{shabat2020uniform,rosenberg2022exploration} studies uniform convergence of multicalibration error over a fixed predictor class. Models found via ERM over a fixed model class can be deterministic, but these results only control the gap between empirical and population multicalibration error uniformly over a class of candidate predictors---they do not by themselves imply that the class contains any low-error predictor, whereas our results study the sample complexity of producing a predictor with small population multicalibration error.

\paragraph{Multicalibration upper and lower bounds.}
\citet{noarov2025high,collina2026optimal} establish the optimal rate for multicalibration in the online adversarial setting, and \cite{collina2026sample} establish the optimal sample complexity in the batch setting (the setting we study here). The upper bound in \cite{collina2026sample} is randomized, and they ask whether the minimax optimal sample complexity is obtainable via a deterministic predictor. We resolve this question.

Before this work, the strongest deterministic guarantees had worse dependence on the target ECE error.  \citet{haghtalab2023unifying} give deterministic algorithms for bucketed multicalibration; translating their bucketed $L_\infty$ guarantee to ECE requires roughly $1/\eps$ buckets and bucket bias about $\eps^2$, leading to a sample complexity of about $\widetilde O(\eps^{-6})$.  Older deterministic constructions, such as the original multicalibration algorithm of \citet{hebert2018multicalibration}, give still weaker ECE rates after the same translation.  

\paragraph{Multi-distribution learning and derandomization.}
A closely related derandomization question has been studied for agnostic multi-distribution learning.  \citet{larsen2024derandomizing} show that, in the fully general multi-distribution setting, converting randomized multi-distribution predictors into deterministic ones can be computationally hard even when ERM over the benchmark class is efficient.  They also give a positive black-box derandomization under a label-consistency condition, where all distributions share the same conditional label law given the context.  Our setting is complementary: multicalibration, outcome indistinguishability, and omniprediction impose many residual tests or group reweightings, but all are evaluated under a single joint distribution and hence a single conditional mean function $\mu(x)$.  Thus the obstruction exploited in their hardness result is absent, while the central challenge is statistical and distributional: controlling the effect of fixing the prediction-time randomness, especially on atoms.

\paragraph{Omniprediction and outcome indistinguishability.}
Omniprediction was introduced by \citet{gopalan2021omnipredictors} as a way to learn a single predictor whose loss-specific postprocessings compete with a benchmark class across many losses. Their original construction proceeds through multicalibration. Subsequent work developed a more direct outcome-indistinguishability viewpoint (see \cite{dwork2021outcome}): \citet{gopalan2023loss} introduced Loss OI and showed how to decompose it into calibration plus multiaccuracy for a loss-derived class, while \citet{gopalan2023swap} characterized swap variants of omniprediction through swap multicalibration. \citet{garg2024oracle} studied oracle-efficient online multicalibration and omniprediction. 

\citet{hu2022metric} study the sample complexity of outcome indistinguishability, giving metric-entropy characterizations in the distribution-specific setting and fat-shattering characterizations in the distribution-free setting.  Their distinguishers are ``no-access'' distinguishers: they see the context and outcome, but not the prediction value.  Equivalently, their tests are context-only residual tests, which coincide with multiaccuracy.  Our finite-test OI theorem includes this case, but is aimed at the more general prediction-dependent tests needed for calibration and omniprediction. 

\citet{hu2023comparative} study a complementary setting, characterizing distribution-free realizable multiaccuracy and multicalibration by a mutual fat-shattering dimension of the source and distinguisher classes.  Their results exploit realizable structure in the conditional label mean, whereas our results are agnostic in the label distribution and focus on derandomizing optimal-rate multicalibration and OI guarantees.

\citet{okoroafor2025near} gave near-sample-optimal online and offline algorithms for omniprediction, with sample complexity scaling as $\widetilde O(\eps^{-2})$ times the relevant auditor-class complexity. Their offline constructions output randomized predictors through online-to-batch style conversions, and they ask whether these randomized omnipredictors can be derandomized without losing sample optimality. We resolve this question affirmatively.

The deterministic landscape before our work was mixed.  The Loss-OI construction of \citet{gopalan2023loss} outputs deterministic predictors, but its general sample complexity has substantially worse $\eps$-dependence; \citet[Theorem~C.1]{okoroafor2025near} record the bound as $O(d/\eps^4+\eps^{-10}\log(1/\eps))$ for the relevant complexity parameter $d$.  Thus it did not match the optimal $\widetilde O(d/\eps^2)$ rate achieved by randomized online-to-batch methods.  The proper-loss setting is an important exception: \citet{gibbs-tibshirani-omnipredict} give a direct  algorithm outputting a deterministic predictor with the optimal proper-loss omniprediction rate.  Their result exploits the special structure of proper losses, whereas our OI theorem gives a derandomization route for the broader finite-test and finitely covered loss-derived auditor classes used in \citet{okoroafor2025near}.

\paragraph{Panprediction.}
\citet{balakrishnan2025panprediction} introduce panprediction, which generalizes omniprediction by requiring guarantees simultaneously across downstream losses and subgroups. They reduce panprediction to step calibration and obtain deterministic and randomized panpredictors at $\widetilde O(\eps^{-3})$ and $\widetilde O(\eps^{-2})$ sample complexity rates, respectively, and ask whether the gap is inherent. After their reduction, the relevant objectives are again OI tests, now indexed by prediction thresholds, comparator thresholds, and groups. The OI extension we give in Section~\ref{sec:linear-tests} therefore applies, resolving their question of whether optimal panprediction bounds can be obtained with deterministic predictors; Appendix~\ref{app:panprediction} gives the formal statement.

\section{Setting and calibration definitions}\label{sec:setting}

\subsection{Distributions, atoms, and conditional means}

Fix a finite dimension $d<\infty$ and a context space $\calX\subseteq[0,1]^d$.  We consider distributions $P$ over pairs $(X,Y)\in\calX\times[0,1]$.  Let $P_X$ denote the marginal distribution of $X$, and let $\mu(x) = \bbE[Y\mid X=x]$ 
denote the conditional label mean. 

For $x\in\calX$, write
$ p_x = P_X(\{x\})$.
Let
$\At(P_X)=\{x\in\calX: p_x>0\}$ 
denote the set of atoms.  This set is at most countable for every probability measure, and
$\sum_{x\in\At(P_X)}p_x\le1.$ Indeed, for each $j\ge1$ there are at most $2^j$ atoms with mass at least $2^{-j}$, and $\At(P_X)$ is the union of these finite sets.  Points with $p_x=0$ never affect population calibration quantities individually, but the learning procedure still defines predictions on them.

A group is a nonnegative weight function on contexts.  A group family is a finite nonempty set $\calG\subseteq[0,1]^{\calX}$ 
of such functions.   All logarithms are natural.  We suppress universal numerical constants, but all choices below can be made by taking a single sufficiently large universal constant $C$.

\subsection{Grid predictors and ECE multicalibration}

Fix a finite grid $\Lambda\subseteq[0,1]$.  A randomized grid predictor assigns to each context a distribution over grid values:
$Q:\calX\to\Delta(\Lambda),$ 
where $Q_x(v)$ denotes the probability assigned to $v\in\Lambda$.  A deterministic predictor is a function $h:\calX\to\Lambda$, identified with the point-mass randomized predictor $Q_x=\delta_{h(x)}$.

\begin{definition}[Calibration bias at a prediction value]
For a randomized grid predictor $Q$, a group $g\in\calG$, and a grid value $v\in\Lambda$, define
\[
        B_P(Q;g,v)
        =
        \bbE\big[g(X)Q_X(v)(v-Y)\big].
\]
This is the calibration bias contributed by group $g$ among predictions that place mass on value $v$.
Since $Q_X(v)$ and $v$ are functions of $X$,
\[
        B_P(Q;g,v)
        =
        \bbE\big[g(X)Q_X(v)(v-\mu(X))\big].
\]
For a deterministic predictor $h$, this becomes
\[
        B_P(h;g,v)
        =
        \bbE\big[g(X)\one\{h(X)=v\}(v-Y)\big].
\]
\end{definition}

\begin{definition}[ECE multicalibration error]\label{def:mc}
The ECE multicalibration error of $Q$ with respect to $\calG$ is
\[
        \operatorname{MC}_P(Q;\calG)
        =
        \max_{g\in\calG}\sum_{v\in\Lambda}|B_P(Q;g,v)|.
\]
It is sometimes convenient to think about multicalibration error in its dual ``signed'' form:
\end{definition}

\begin{lemma}[Signed form of ECE multicalibration]\label{lem:signed-form}
For every randomized grid predictor $Q$,
\[
        \operatorname{MC}_P(Q;\calG)
        =
        \max_{g\in\calG}\max_{\sigma\in\{\pm1\}^{\Lambda}}
        \bbE\left[g(X)\sum_{v\in\Lambda}\sigma(v)Q_X(v)(v-Y)\right].
\]
For deterministic $h$, this specializes to:
\[
        \operatorname{MC}_P(h;\calG)
        =
        \max_{g\in\calG}\max_{\sigma\in\{\pm1\}^{\Lambda}}
        \bbE\left[g(X)\sigma(h(X))(h(X)-Y)\right].
\]
\end{lemma}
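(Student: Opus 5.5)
The plan is to prove the identity pointwise in $g$, by linearity of expectation followed by the elementary fact that $\sum_v |b_v| = \max_{\sigma} \sum_v \sigma(v) b_v$. Fix $g\in\calG$. Because $\Lambda$ is finite, the sum inside the expectation has finitely many terms. Every term is bounded, since $g\in[0,1]$, $Q_X(v)\in[0,1]$ and $|v-Y|\le 1$. Linearity of expectation therefore gives
\[
\bbE\left[g(X)\sum_{v\in\Lambda}\sigma(v)Q_X(v)(v-Y)\right]=\sum_{v\in\Lambda}\sigma(v)\,B_P(Q;g,v)
\]
for every sign vector $\sigma\in\{\pm1\}^\Lambda$.

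Next, maximize over $\sigma$. The right-hand side is separable across coordinates $v$, so the maximum over $\sigma$ decouples. It is attained by choosing $\sigma(v)=\operatorname{sign}(B_P(Q;g,v))$, with either sign allowed when the bias is zero. This gives $\max_\sigma \sum_v \sigma(v)B_P(Q;g,v)=\sum_v|B_P(Q;g,v)|$. For the matching upper bound, note that $\sigma(v)b\le|b|$ holds termwise. Taking the maximum over $g\in\calG$ on both sides, which is legitimate because $\calG$ is finite, yields the first claimed identity via Definition~\ref{def:mc}.

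For the deterministic specialization, substitute $Q_x=\delta_{h(x)}$, so that $Q_X(v)=\one\{h(X)=v\}$. Then
\[
\sum_{v}\sigma(v)\one\{h(X)=v\}(v-Y)=\sigma(h(X))(h(X)-Y)
\]
holds pointwise, because exactly one indicator equals one and on it $v=h(X)$. Plugging this into the first identity gives the second.

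There is no real obstacle. The only points needing care are integrability, so that linearity of expectation applies, and the observation that the maximum over $\sigma$ decouples across $v$; both are immediate.
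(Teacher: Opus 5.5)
Your proposal is correct and follows essentially the paper's own argument. The paper justifies the identity in one line by the same elementary fact that $\sum_v|b_v|=\max_\sigma\sum_v\sigma(v)b_v$, applied to $b_v=B_P(Q;g,v)$ after linearity of expectation. Your deterministic specialization via $Q_X(v)=\one\{h(X)=v\}$ is exactly what is intended.
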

The identity is the elementary fact that $\sum_v |b_v|$ is the largest signed sum $\sum_v\sigma(v)b_v$.

\section{Learning a randomized predictor from valid interval hints}
\label{sec:intervalhints}

We first prove the randomized learning guarantee that the derandomization argument builds on.  In this section the learner is handed context-dependent \emph{interval hints}: intervals $U_x$ together with allowed grid sets $\Lambda_x$.  This is exactly the information that the confidence sample $S_0$ will later supply (Section~\ref{sec:intervals}): the learner does not know $\mu(x)$, but on the coverage event it has an interval $U_x$ containing $\mu(x)$ and a grid set $\Lambda_x$ that approximates every value in $U_x$.  The theorem says that valid interval hints are enough to learn a randomized batch predictor with small ECE multicalibration error whose support stays near $U_x$.  The algorithm behind the theorem is an online multicalibration algorithm, run through the standard multicalibration online-to-batch reduction~\citep{gupta2021online}.

An \emph{interval-hint system} for a finite grid $\Lambda$ assigns to each context $x$ a nonempty closed interval $U_x=[a_x,b_x]\subseteq[0,1]$ and a nonempty set $\Lambda_x\subseteq\Lambda$ of allowed grid values.  Call an interval-hint system \emph{$\gamma$-valid} for $P$ if, for every $x$,
\[
        \mu(x)\in U_x
        \quad\text{and}\quad
        \forall m\in U_x\ \exists a\in\Lambda_x\text{ with }|a-m|\le\gamma .
\]
In the final algorithm the hint system is random because it is learned from the confidence sample $S_0$.  We apply the theorem conditionally on the high-probability event that the learned hints are $\gamma$-valid.

\begin{theorem}[Learning from valid interval hints]\label{thm:constrained-online}
There is a universal constant $C_{\rm on}$ such that the following holds.  Let $\calG$ be a finite nonempty group family, let $\gamma\in(0,1)$, let $\delta\in(0,1)$, and let $\Lambda\subseteq[0,1]$ be a $\gamma$-net of $[0,1]$ with $|\Lambda|\le C_{\rm grid}/\gamma$ for a universal constant $C_{\rm grid}$.  Suppose the learner is given a $\gamma$-valid interval-hint system $(U_x,\Lambda_x)_{x\in\calX}$ on this grid.
Then there is a learning algorithm which, given $T$ i.i.d. samples from $P$ and the interval-hint system, outputs a randomized grid predictor $Q$ satisfying
\[
        \supp(Q_x)\subseteq\Lambda_x\qquad\text{for every }x,
\]
and, with probability at least $1-\delta$,
\[
        \operatorname{MC}_P(Q;\calG)
        \le
        C_{\rm on}\left(\gamma+\sqrt{\frac{1/\gamma+\log|\calG|+\log(1/\delta)}{T}}\right).
\]
\end{theorem}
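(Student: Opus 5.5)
We will prove the theorem by constructing an online multicalibration algorithm that respects the hints and then applying the online-to-batch reduction of \citet{gupta2021online}. The online protocol runs for $t=1,\dots,T$. At each round the adversary reveals $x_t$ together with the hint $(U_{x_t},\Lambda_{x_t})$. The learner chooses a distribution $Q_t\in\Delta(\Lambda_{x_t})$, and then $y_t$ is revealed. By Lemma~\ref{lem:signed-form}, we index the tests by pairs $(g,\sigma)\in\calG\times\{\pm1\}^\Lambda$, with per-round payoff
\[
\ell_t(g,\sigma)=g(x_t)\sum_{v}\sigma(v)Q_t(v)(v-y_t)\in[-1,1].
\]
We maintain exponential weights $w_t(g,\sigma)\propto\exp\bigl(\eta\sum_{s<t}\ell_s(g,\sigma)\bigr)$ over all $N=|\calG|2^{|\Lambda|}$ tests. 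This gives $\log N\lesssim 1/\gamma+\log|\calG|$, which is the source of the $1/\gamma$ term in the bound.

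At round $t$, the learner plays the minimax-optimal response to the mixture $w_t$ of tests. The adversary is restricted to $\bbE[y_t\mid\text{past},x_t]=\mu(x_t)\in U_{x_t}$, which holds in the batch application. We therefore view the round as a zero-sum game between the learner's $Q\in\Delta(\Lambda_{x_t})$ and a mean $m\in U_{x_t}$, with payoff
\[
\sum_v s(v)Q(v)(v-m),\qquad s(v)=\bbE_{w_t}[g(x_t)\sigma(v)]\in[-1,1].
\]
The payoff is bilinear, and both strategy sets are convex and compact, so we may swap min and max. For any fixed $m\in U_{x_t}$, validity provides $a\in\Lambda_{x_t}$ with $|a-m|\le\gamma$. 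The point mass on $a$ then yields payoff at most $\gamma$. Hence there is $Q_t$ supported on $\Lambda_{x_t}$ whose payoff is at most $\gamma$ against every $m\in U_{x_t}$, and in particular against $\mu(x_t)$. This is the step where the hint replaces the usual argument over all of $[0,1]$. It is also why $\supp(Q_x)\subseteq\Lambda_x$ holds by construction.

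The standard exponential-weights (soft-max) analysis then bounds $\max_{g,\sigma}\sum_t\ell_t(g,\sigma)$. The bound is $\log N/\eta+\eta T$ plus $\sum_t\bbE_{w_t}[\ell_t]$. The last sum equals $\le\gamma T$ plus a martingale difference term $\sum_t s_t\cdot Q_t\cdot(\mu(x_t)-y_t)$, whose increments are bounded. To control that martingale uniformly over the tests, we will not union-bound over the exponentially many $\sigma$. Instead we apply Freedman/Azuma directly to the single scalar martingale that enters the potential. More carefully, we run the potential argument on $\Phi_t=\sum\exp(\eta\sum_{s\le t}\ell_s)$. We show $\bbE[\Phi_t\mid\mathcal F_{t-1}]\le\Phi_{t-1}e^{\eta\gamma+\eta^2}$ using the round-$t$ minimax guarantee, because the conditional mean of $y_t$ is $\mu(x_t)$. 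Markov's inequality then gives, with probability $1-\delta$,
\[
\max\sum_t\ell_t\lesssim \gamma T+\sqrt{T(\log N+\log(1/\delta))}.
\]
We take $\eta\asymp\sqrt{\log(N/\delta)/T}$.

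For the online-to-batch step, the output is $Q_x=\frac1T\sum_t Q_t^{x}$. Here $Q_t^x$ is the minimax response the algorithm would play at context $x$ with hint $(U_x,\Lambda_x)$ under weights $w_t$. This is a convex combination of distributions on $\Lambda_x$, so its support stays in $\Lambda_x$. For each test, $T\cdot$ (the population quantity $\bbE[g(X)\sum_v\sigma(v)Q_X(v)(v-Y)]$) differs from the realized online sum by a martingale with bounded increments. The quantity $w_t$ is $\mathcal F_{t-1}$-measurable and $(x_t,y_t)$ is a fresh draw, so the population term is exactly the conditional expectation of the realized term. The naive union bound over $N$ tests costs only $\log N$ in Azuma, which already matches the target rate. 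Dividing by $T$ then gives the bound. The main obstacle is measurability and determinism: $Q_t^x$ must be a fixed function of $(w_t,U_x,\Lambda_x)$. We therefore fix a deterministic minimax selection rule, for example the lexicographically least optimal $Q$, so that $Q$ is well defined at every $x$, including unobserved ones. We also check that the minimax value bound uses only $\mu(x)\in U_x$.
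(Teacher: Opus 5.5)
Your proposal is correct and follows essentially the paper's route: exponential weights over $\calG\times\{\pm1\}^{\Lambda}$, a one-step minimax response restricted to $\Lambda_x$ whose value is at most $\gamma$ by validity, a martingale online-to-batch conversion with a union bound over the $|\calG|2^{|\Lambda|}$ signed tests, and Lemma~\ref{lem:signed-form} to pass to ECE multicalibration. The only cosmetic difference is in how you handle the conditional-mean step: you fold it into a supermartingale potential $\Phi_t$, while the paper applies Azuma to the scalar sequence $W_t=\sum_r\pi_t(r)z_t^r$ and then invokes the deterministic exponential-weights regret bound; both give the same rate.
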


We next spell out the one-step action rule used by the online algorithm.  After the past history has fixed the current exponential-weights distribution, each context $x$ and grid value $v$ receives a coefficient $c(v)$.  This coefficient is the current aggregate signed ``weight'' for predicting $v$ at $x$.  The learner wants a distribution $q$ over the allowed grid values $\Lambda_x$ whose signed payoff is small for every mean value that could be consistent with the interval hint $U_x=[a_x,b_x]$ (but not necessarily with mean values outside of the interval hint).

For a fixed $q$, the payoff is affine in the unknown mean $m\in U_x$, so the worst case over the interval occurs at one of the two endpoints.  Thus the one-step prediction distribution can be computed by the following finite linear program.  For a context $x$ and coefficients $c:\Lambda\to[-1,1]$, let $\operatorname{LP}_x(c)$ denote the deterministic tie-broken solution $q\in\Delta(\Lambda)$ of
\begin{equation}\label{eq:endpoint-lp}
\begin{array}{ll}
\text{minimize} & \lambda\\
\text{subject to}
    & \sum_{v\in\Lambda}q(v)=1,\qquad q(v)\ge0\quad(v\in\Lambda),\\[0.6ex]
    & q(v)=0\quad(v\notin\Lambda_x),\\[0.6ex]
    & -1\le\lambda\le1,\\[0.6ex]
    & \sum_{v\in\Lambda}q(v)c(v)(v-a_x)\le\lambda,\\[0.6ex]
    & \sum_{v\in\Lambda}q(v)c(v)(v-b_x)\le\lambda .
\end{array}
\end{equation}
The constraints $q(v)=0$ for $v\notin\Lambda_x$ enforce the interval hint's allowed grid set, and $\lambda$ upper-bounds the two endpoint payoffs.

The interval-hint subroutine has two equivalent implementations.  Algorithm~\ref{alg:known-interval-ew} is the direct exponential-weights version used in the proof.  Algorithm~\ref{alg:known-interval-factored} computes the same one-step objectives without enumerating $\{\pm1\}^{\Lambda}$; Appendix~\ref{app:efficient} gives the short factorization argument.

\noindent
\begin{minipage}[t]{0.48\textwidth}
\footnotesize
\begin{algorithm}{Enumerating tests}\label{alg:known-interval-ew}
    \item Set $\mathcal T=\calG\times\{\pm1\}^{\Lambda}$, $M=|\mathcal T|$, and $\eta=\sqrt{(\log M+\log(3/\delta))/T}$.
    \item Initialize $\pi_1$ uniformly on $\mathcal T$.
    \item For $t=1,\ldots,T$:
    \begin{enumerate}[leftmargin=1.5em,label=(\alph*)]
        \item For every query context $x$ and grid value $v\in\Lambda$, set
        \[
            c_t(x,v)=\sum_{r=(g,\sigma)\in\mathcal T}\pi_t(r)g(x)\sigma(v).
        \]
        \item Define $q_t(x)=\operatorname{LP}_x(c_t(x,\cdot))$ for every $x$.
        \item On the realized sample $(X_t,Y_t)$, set
        \[
            z_t^r=g(X_t)\sum_{v\in\Lambda} q_t(X_t)(v)\sigma(v)(v-Y_t)
        \]
        for each $r=(g,\sigma)$.
        \item Update $\pi_{t+1}(r)\propto\pi_t(r)\exp(\eta z_t^r)$.
    \end{enumerate}
    \item Output $Q_x=\frac1T\sum_{t=1}^Tq_t(x)$.
\end{algorithm}
\end{minipage}\hfill
\begin{minipage}[t]{0.48\textwidth}
\footnotesize
\begin{algorithm}{Factored implementation}\label{alg:known-interval-factored}
    \item Set $\eta=\sqrt{(\log(|\calG|2^{|\Lambda|})+\log(3/\delta))/T}$.
    \item Initialize $S_{g,v}^0=0$ for all $g\in\calG$ and $v\in\Lambda$.
    \item For $t=1,\ldots,T$:
    \begin{enumerate}[leftmargin=1.5em,label=(\alph*)]
        \item Set
        \[
            \pi_t(g)\propto\prod_{v\in\Lambda}2\cosh(\eta S_{g,v}^{t-1}).
        \]
        \item For every query context $x$ and $v\in\Lambda$, set
        \[
            c_t(x,v)=\sum_{g\in\calG}\pi_t(g)g(x)\tanh(\eta S_{g,v}^{t-1}).
        \]
        \item Define $q_t(x)=\operatorname{LP}_x(c_t(x,\cdot))$ for every $x$.
        \item On $(X_t,Y_t)$, update
        \[
            S_{g,v}^t=S_{g,v}^{t-1}+g(X_t)q_t(X_t)(v)(v-Y_t),
        \]
        with $q_t(X_t)(v)=0$ for $v\notin\Lambda_{X_t}$.
    \end{enumerate}
    \item Store the states $\{S^{t-1}_{g,v}\}_{t=1}^T$ and output the implicit predictor $Q_x=\frac1T\sum_tq_t(x)$.
\end{algorithm}
\end{minipage}

\begin{lemma}[Online learning with interval hints]\label{lem:hinted-online}
Fix a finite group family $\calG$, a finite grid $\Lambda$ with $K=|\Lambda|$, interval hints $U_x$, allowed grid sets $\Lambda_x$, a covering radius $\gamma$, and $\delta\in(0,1)$.  Let
\[
        \mathcal T=\calG\times\{\pm1\}^{\Lambda},
        \qquad
        M=|\mathcal T|=|\calG|2^K .
\]
Consider any possibly adaptive sequence of rounds with the following structure.  Before round $t$, the past history is fixed; then a context $X_t$ is revealed, the learner chooses $q_t(X_t)\in\Delta(\Lambda_{X_t})$, and then a label $Y_t\in[0,1]$ is drawn with conditional mean
\[
        m_t=\bbE[Y_t\mid X_1,Y_1,\ldots,X_{t-1},Y_{t-1},X_t]\in U_{X_t}.
\]
Assume that, on every realized context, $\Lambda_{X_t}$ $\gamma$-covers $U_{X_t}$.  When  Algorithm~\ref{alg:known-interval-ew} is run,  $q_t(x)$ is supported on $\Lambda_x$ for every $t,x$, and with probability at least $1-\delta/3$,
\[
        \max_{r=(g,\sigma)\in\mathcal T}
        \frac1T\sum_{t=1}^T
        g(X_t)\sum_{v\in\Lambda}q_t(X_t)(v)\sigma(v)(v-Y_t)
        \le
        \gamma+C\sqrt{\frac{\log M+\log(3/\delta)}{T}} .
\]
\end{lemma}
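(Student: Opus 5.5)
The plan is the usual three-step argument: a regret bound for exponential weights, a minimax step showing that the LP rule makes the learner's expected payoff small, and martingale concentration to pass from conditional expectations to the realized payoffs. Support is immediate, since the LP constraints force $q(v)=0$ for $v\notin\Lambda_x$, so $q_t(x)\in\Delta(\Lambda_x)$. For the main inequality, write $z_t^r\in[-1,1]$ for the realized payoff of test $r$ at round $t$. This range holds because $g\in[0,1]$, $|\sigma|=1$, $q_t$ is a distribution, and $|v-Y_t|\le1$.

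\textbf{Regret bound.} We apply the standard exponential-weights (Hedge) bound for gains in $[-1,1]$ to the update $\pi_{t+1}(r)\propto\pi_t(r)e^{\eta z_t^r}$:
\[
\max_r \sum_t z_t^r \le \sum_t \sum_r \pi_t(r) z_t^r + \frac{\log M}{\eta} + \eta T .
\]
With the prescribed $\eta$, the last two terms contribute $O(\sqrt{T(\log M+\log(3/\delta))})$. This inequality holds deterministically for any adaptive sequence.

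\textbf{Minimax step.} Now bound the learner's mixed payoff $\sum_r\pi_t(r)z_t^r = \sum_{v} q_t(X_t)(v)\,c_t(X_t,v)(v-Y_t)$. Its conditional expectation given the history and $X_t$ is $\sum_v q_t(v)c_t(v)(v-m_t)$ with $m_t\in U_{X_t}$. This expression is affine in $m_t$, so it is at most the larger of the two endpoint values, which in turn is at most the LP optimum $\lambda^*$. It remains to show $\lambda^*\le\gamma$.

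For each fixed $m\in U_x$ there is a point mass $q=\delta_a$ with $a\in\Lambda_x$ and $|a-m|\le\gamma$, which gives payoff $c(a)(a-m)\le\gamma$ since $|c|\le1$. We need a single $q$ that works against both endpoints simultaneously. For this we use the minimax theorem on the game between $q\in\Delta(\Lambda_x)$ and $m\in U_x$: the payoff is bilinear in $(q,\text{distribution over } m)$, and by affinity in $m$ the mixed strategies over $m$ reduce to points of $U_x$. This gives
\[
\min_q\max_{m\in U_x}\sum_v q(v)c(v)(v-m) = \max_m\min_q(\cdot)\le\gamma .
\]
The maximum over $m$ is attained at $a_x$ or $b_x$, so the LP value satisfies $\lambda^*\le\gamma$. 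The LP's constraint $\lambda\ge -1$ is harmless, because $\gamma>-1$. This is the step where validity of the hints, namely $m_t\in U_{X_t}$ together with the $\gamma$-cover, is used, and I expect it to be the main conceptual point.

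\textbf{Concentration and conclusion.} The differences $D_t=\sum_r\pi_t(r)z_t^r - \bbE[\cdot\mid\text{history},X_t]$ form a martingale difference sequence bounded by $2$. Azuma's inequality then gives $\sum_t D_t\le O(\sqrt{T\log(3/\delta)})$ with probability at least $1-\delta/3$. Combining the three pieces and dividing by $T$ yields
\[
\max_r \frac1T\sum_t z_t^r\le \gamma + C\sqrt{\frac{\log M+\log(3/\delta)}{T}} .
\]
Only one concentration event is needed, because the regret bound is deterministic and the maximum over $r$ is handled inside it. One technical point to check is that $\pi_t$, and hence $c_t$ and $q_t$, are determined by the past history, so that they are predictable with respect to the filtration used in the martingale argument. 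They are, since the update at round $t$ uses only $(X_s,Y_s)_{s<t}$.
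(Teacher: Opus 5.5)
Your proposal is correct and follows essentially the paper's proof: a one-step minimax bound showing the endpoint LP attains value at most $\gamma$ (via the $\gamma$-cover and the minimax theorem), Azuma--Hoeffding applied to the learner's mixed payoff $\sum_r\pi_t(r)z_t^r$, and the exponential-weights regret bound, combined in the same way. The differences are cosmetic: you spell out why the LP solution, including its $\lambda\ge-1$ constraint, inherits the $\gamma$ bound, whereas the paper handles the case $T<\log M+\log(3/\delta)$ separately to ensure $\eta\le1$; that case split is unnecessary with the Hoeffding-lemma form of the Hedge bound, and the case is trivial in any event.
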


\begin{proof}
The analysis follows a standard reduction from multiobjective optimization to no regret learning (as in e.g. \cite{lee2022online,noarov2025high}) in this case instantiated with exponential weights. The difference is that the minimax step restricts the learner's strategy space to the subset of grid points captured by the interval hints. This constructively restricts the support of the learner's distribution, and does not change the value of the game (and hence the convergence analysis of the algorithm) so long as the interval hints are valid.  Appendix~\ref{app:online-to-batch} gives the details.
\end{proof}

The next lemma is the online-to-batch conversion.  Running the online procedure on i.i.d. samples from a fixed distribution and averaging its iterates gives a randomized batch predictor with comparable population multicalibration error.

\begin{lemma}[Martingale online-to-batch conversion]\label{lem:martingale-otb}
Fix a finite group family $\calG$, a finite grid $\Lambda$, and $\delta\in(0,1)$.  Let
\[
        \mathcal T=\calG\times\{\pm1\}^{\Lambda},
        \qquad
        M=|\mathcal T|.
\]
Let $(X_t,Y_t)_{t=1}^T$ be i.i.d.\ samples from $P$.  Suppose that, before observing $(X_t,Y_t)$, an online procedure chooses a grid predictor $q_t:\calX\to\Delta(\Lambda)$ as a function of the previous samples.  Define
\[
        Q_x(v)=\frac1T\sum_{t=1}^T q_t(x)(v),
        \qquad v\in\Lambda .
\]
For each signed calibration test $r=(g,\sigma)\in\mathcal T$, let
\[
        z_t^r=
        g(X_t)\sum_{v\in\Lambda}q_t(X_t)(v)\sigma(v)(v-Y_t)
\]
and
\[
        A_r(S)=
        \bbE_{(X,Y)\sim P}\left[
        g(X)\sum_{v\in\Lambda}\sigma(v)Q_X(v)(v-Y)
        \right].
\]
Then, with probability at least $1-\delta/3$,
\[
        \max_{r\in\mathcal T}
        \left|
        A_r(S)-\frac1T\sum_{t=1}^Tz_t^r
        \right|
        \le
        C\sqrt{\frac{\log M+\log(3/\delta)}{T}} .
\]
\end{lemma}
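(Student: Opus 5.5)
The plan is the standard martingale-difference argument followed by Azuma--Hoeffding and a union bound over the $M$ tests. Let $\mathcal F_{t-1}$ be the $\sigma$-algebra generated by $(X_1,Y_1,\ldots,X_{t-1},Y_{t-1})$ and any internal randomness used before round $t$. By hypothesis $q_t$ is $\mathcal F_{t-1}$-measurable as a whole function $\calX\to\Delta(\Lambda)$, and $(X_t,Y_t)\sim P$ is independent of $\mathcal F_{t-1}$. For a fixed test $r=(g,\sigma)$ I will define
\[
a_t^r=\bbE_{(X,Y)\sim P}\Big[g(X)\sum_{v\in\Lambda}\sigma(v)q_t(X)(v)(v-Y)\Big],
\]
which is the population value of the test applied to the round-$t$ predictor. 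Independence gives $\bbE[z_t^r\mid\mathcal F_{t-1}]=a_t^r$.

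Next I will use linearity of the test in the predictor. Because $Q_x(v)=\frac1T\sum_t q_t(x)(v)$ and the integrand is linear in $Q_X(v)$,
\[
A_r(S)=\frac1T\sum_{t=1}^T a_t^r .
\]
It follows that $A_r(S)-\frac1T\sum_t z_t^r=-\frac1T\sum_t D_t^r$, where $D_t^r=z_t^r-a_t^r$ is a martingale difference sequence with respect to $(\mathcal F_t)$.

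To bound the increments, note that $g\in[0,1]$, $|\sigma(v)|=1$, $|v-Y|\le1$ and $q_t(X_t)$ is a probability vector. Hence $|z_t^r|\le1$ and $|a_t^r|\le1$, so $|D_t^r|\le2$. Azuma--Hoeffding then gives
\[
\Pr\Big[\Big|\tfrac1T\textstyle\sum_t D_t^r\Big|>s\Big]\le2\exp(-Ts^2/8).
\]
I will set $s=\sqrt{8(\log M+\log(6/\delta))/T}$, which makes each tail at most $\delta/(3M)$. A union bound over the $M=|\mathcal T|$ tests gives total failure probability at most $\delta/3$. Since $\log(6/\delta)\le\log 2+\log(3/\delta)$ and $\log M\ge\log 2$ (because $K\ge1$), the constant can be absorbed into $C$.

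The only step needing care is measurability. The key point is that $q_t$ is determined before $(X_t,Y_t)$ is seen, which is exactly what makes $\bbE[z_t^r\mid\mathcal F_{t-1}]=a_t^r$ hold. This also covers the fact that in our application $q_t$ depends on the hint system learned from $S_0$. I will handle that by conditioning on $S_0$, or equivalently including it in $\mathcal F_0$, since $S_0$ is independent of $S_1$. The rest of the argument is routine.
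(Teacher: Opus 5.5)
Your proposal is correct and follows the paper's proof essentially verbatim: by linearity in $Q$, you write $A_r(S)$ as the average of the per-round population values. You then note that the differences from $z_t^r$ form a martingale difference sequence bounded by $2$, because $q_t$ is fixed before $(X_t,Y_t)$ is drawn, and you finish with Azuma--Hoeffding plus a union bound over the $M$ signed tests. Your explicit choice $s=\sqrt{8(\log M+\log(6/\delta))/T}$, the absorption of constants into $C$, and your remark about conditioning on $S_0$ just fill in details the paper leaves implicit.
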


\begin{proof}
For a fixed signed test, the population residual of the averaged predictor differs from its online empirical average by a martingale difference sequence: at time $t$, the predictor $q_t$ is fixed before the fresh sample $(X_t,Y_t)$ is drawn.  Azuma--Hoeffding controls this difference for one signed test, and a union bound controls all signed tests at once.  Appendix~\ref{app:online-to-batch} records the details.
\end{proof}
The pieces combine to prove Theorem~\ref{thm:constrained-online}: Lemma~\ref{lem:hinted-online} controls the empirical signed residuals of the online iterates, Lemma~\ref{lem:martingale-otb} transfers this control to the population residuals of the averaged predictor $Q$, and Lemma~\ref{lem:signed-form} converts signed residual control into ECE multicalibration.  Appendix~\ref{app:online-to-batch} spells out the constants and the support guarantee.

\section{Learning the interval hints from repeated contexts}\label{sec:intervals}

This section constructs, from the confidence sample $S_0$, the interval-hint system $(I_x,\Lambda_x)$ used by Theorem~\ref{thm:constrained-online}.  The idea is simple: contexts seen repeatedly get empirical confidence intervals, while contexts seen zero or one times receive the full interval $[0,1]$.  The key point will be that an atom $x$ with a confidence interval of width $r_x$ will eventually contribute to the rounding variance on the order of $r_x^2p_x^2$, which we show is controlled---because wide intervals correspond to infrequent contexts and vice versa.

Fix an internal accuracy parameter $\alpha\in(0,1/10)$.  Let $\gamma=\Theta(\alpha)$, let $\Lambda$ be a $\gamma$-net of $[0,1]$ containing $0$ and $1$, and write $K=|\Lambda|=O(1/\gamma)=O(1/\alpha)$.  Set $L=K+\log(|\calG|+1)+100$, and let $J=\Theta(\log(L/\alpha))$ be the logarithmic confidence parameter.  The confidence sample has size
\[
        n_0=C_0\frac{LJ}{\alpha^2}
\]
for a sufficiently large universal constant $C_0$.  Appendix~\ref{app:interval-hints} records one concrete choice of constants and grid.

Let $O_0=\{X_i:i\le n_0\}$ be the finite set of contexts observed in $S_0$.  For $x\in O_0$, let $N_x=\#\{i\le n_0:X_i=x\}$ and $\widehat\mu_x=N_x^{-1}\sum_{i:X_i=x}Y_i$.  Set
\[
        r_x=\min\left\{1,\sqrt{\frac{J}{N_x}}\right\},
        \qquad
        I_x=[\widehat\mu_x-r_x,\widehat\mu_x+r_x]\cap[0,1].
\]
For $x\notin O_0$, set $N_x=0$, $r_x=1$, and $I_x=[0,1]$.
For every $x\in\calX$, define the allowed grid set
\[
        \Lambda_x=\{v\in\Lambda:\dist(v,I_x)\le\gamma\}.
\]
Together, $(I_x,\Lambda_x)$ form the learned interval-hint system that will be passed to the learning algorithm of Theorem~\ref{thm:constrained-online}.  Proposition~\ref{prop:learned-intervals} records the two facts needed later: the hints are valid, and the atom-weighted radius budget is small.

\begin{proposition}[Learned interval hints]\label{prop:learned-intervals}
There are events $\mathcal E_{\rm cov}$ and $\mathcal E_{\rm rad}$, depending only on $S_0$, with probabilities at least $0.99$ and $0.90$, respectively, such that:
\begin{enumerate}[label=(\roman*)]
    \item on $\mathcal E_{\rm cov}$, the learned interval-hint system $(I_x,\Lambda_x)_{x\in\calX}$ is $\gamma$-valid for $P$;
    \item on $\mathcal E_{\rm rad}$,
    \[
        \sum_{x\in\At(P_X)}p_x^2r_x^2
        \le
        C_1\frac{\alpha^2}{L}
    \]
    for a universal constant $C_1$.
\end{enumerate}
Consequently, with probability at least $0.89$, both conclusions hold.
\end{proposition}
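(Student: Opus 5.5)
We prove the two parts separately and finish with a union bound: $\Pr[\mathcal E_{\rm cov}\cap\mathcal E_{\rm rad}]\ge 1-0.01-0.10=0.89$.

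\emph{Part (i), coverage.} Take $\mathcal E_{\rm cov}$ to be the event that $|\widehat\mu_x-\mu(x)|\le r_x$ for every $x\in O_0$. Contexts not in $O_0$ receive $I_x=[0,1]$, so $\mu(x)\in I_x$ holds trivially for them.

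On $\mathcal E_{\rm cov}$ we have $\mu(x)\in I_x$ for every $x$. The covering condition is then deterministic. Take any $m\in I_x$. Since $\Lambda$ is a $\gamma$-net of $[0,1]$, some $a\in\Lambda$ satisfies $|a-m|\le\gamma$. This gives $\dist(a,I_x)\le\gamma$, so $a\in\Lambda_x$.

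To bound $\Pr[\mathcal E_{\rm cov}^c]$, condition on the multiset of contexts in $S_0$. Given the contexts, the labels at a fixed $x$ are $N_x$ i.i.d. $[0,1]$-valued draws with mean $\mu(x)$. Hoeffding then gives failure probability at most $2e^{-2N_xr_x^2}$. If $r_x<1$, this equals $2e^{-2J}$. If $r_x=1$, coverage holds automatically, since $\widehat\mu_x$ and $\mu(x)$ both lie in $[0,1]$.

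There are at most $n_0$ distinct observed contexts, so a union bound gives total failure probability at most $2n_0e^{-2J}$. Because $n_0=C_0LJ/\alpha^2$ is polynomial in $L/\alpha$, choosing $J=\Theta(\log(L/\alpha))$ with a large enough constant makes this at most $0.01$.

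\emph{Part (ii), radius budget.} We bound the expectation $\bbE\sum_x p_x^2r_x^2$ and then apply Markov's inequality with the factor $10$. This is why the stated probability is $0.90$.

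Since $r_x^2=\min\{1,J/N_x\}$ with $N_x\sim\mathrm{Bin}(n_0,p_x)$, we have $r_x^2\le \one\{N_x=0\}+J/\max(N_x,1)$. This yields the pointwise bound
\[
\bbE[r_x^2]\le \Pr[N_x=0]+J\,\bbE\Big[\tfrac{1}{\max(N_x,1)}\Big]\le (1-p_x)^{n_0}+\frac{2J}{(n_0+1)p_x},
\]
using the standard identity $\bbE[1/(N+1)]\le 1/((n+1)p)$ for binomial $N$, together with $1/\max(N,1)\le 2/(N+1)$.

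Multiplying by $p_x^2$ and summing gives two terms.

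\emph{Second term.} It is $\sum_x p_x\cdot 2J/n_0\le 2J/n_0=2\alpha^2/(C_0L)$.

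\emph{First term.} It is $\sum_x p_x^2(1-p_x)^{n_0}\le \sum_x p_x\cdot \max_p p(1-p)^{n_0}\le 1/(en_0)$, which is even smaller.

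Hence $\bbE\sum_x p_x^2r_x^2\le C'\alpha^2/L$. Markov's inequality then gives the bound with $C_1=10C'$ on an event $\mathcal E_{\rm rad}$ of probability at least $0.9$.

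\emph{Main obstacle.} The hardest step is the unbounded-atoms sum in (ii), where $\At(P_X)$ may be countably infinite. The per-atom bound must be summable in $p_x$, which is why we trade one factor of $p_x$ against $1/(n_0p_x)$ rather than using a heavy/light split.

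In (i), the only subtlety is that $O_0$ is random. Conditioning on the contexts before applying Hoeffding, and using the deterministic count bound $|O_0|\le n_0$, handles this cleanly.
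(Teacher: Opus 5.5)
Your proof is correct and follows the paper's structure. Part (i) uses Hoeffding conditional on the contexts plus a union bound over the at most $n_0$ observed contexts, with coverage automatic when $N_x\le 1$ since then $r_x=1$; part (ii) sums a per-atom estimate $\bbE[p_x^2r_x^2]\le CJp_x/n_0$ over the countably many atoms and applies Markov's inequality. The only difference is how you derive that per-atom estimate: you use the inverse-moment identity $\bbE[1/(N_x+1)]\le 1/((n_0+1)p_x)$ together with $\max_p p(1-p)^{n_0}\le 1/(en_0)$, whereas the paper splits on whether $n_0p_x\le 2J$ and applies a Chernoff lower-tail bound; both give the same bound.
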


\begin{proof}
We sketch the proof here; the full concentration details are deferred to Appendix~\ref{app:interval-hints}.  With probability one, no non-atom appears twice in $S_0$.  Thus every context with $N_x\ge2$ is an atom, and Hoeffding's inequality controls $|\widehat\mu_x-\mu(x)|$ for each such context.  A union bound over the at most $n_0$ repeated contexts gives the coverage event $\mathcal E_{\rm cov}$.  Contexts with $N_x=0$ or $N_x=1$ receive the full interval $[0,1]$, and the $\gamma$-net property of $\Lambda$ gives the allowed grid value near every point of $I_x$.

For the radius bound, fix an atom $x$ of mass $p$ and write $N_x\sim\operatorname{Bin}(n_0,p)$.  The elementary binomial estimate
\[
        \bbE[p^2r_x^2]\le C\frac{J}{n_0}p
\]
captures the desired tradeoff: either the atom is rare, in which case the factor $p^2$ is already small, or it is frequent, in which case $N_x$ is typically of order $n_0p$ and $r_x^2$ is of order $J/(n_0p)$.  Summing over atoms gives
\[
        \bbE\sum_{x\in\At(P_X)}p_x^2r_x^2
        \le
        C\frac{J}{n_0}
        \le
        C\frac{\alpha^2}{L},
\]
and Markov's inequality gives $\mathcal E_{\rm rad}$.
\end{proof}

\section{Building the finite list of rounding cells}\label{sec:partition}

The final rounding step in Section~\ref{sec:rounding} will use one independent seed per ``rounding cell'', so we need a finite list of such cells.  The partition sample builds such a list on the part of the space not already observed in $S_0$.  The goal is to cut this unobserved region into cells whose summed squared masses are small; this is the term that will control the part of the rounding error not already controlled by the confidence intervals on observed atoms.

Use the lexicographic order inherited from $[0,1]^d$: for distinct $x,z\in\calX$, write $x\prec z$ if, at the first coordinate where they differ, $x$ has the smaller coordinate.  Let the partition sample be $S_2=(W_1,\ldots,W_m)$, an independent sample of contexts from $P_X$, where
\[
        m=C_0\frac{L}{\alpha^2}.
\]
Sort the distinct values appearing in $S_2$:
\[
        w_{(1)}\prec w_{(2)}\prec\cdots\prec w_{(r)} .
\]
Let $U_0=\calX\setminus O_0$ be the region unobserved in $S_0$.  Define $\Pi_{\rm unobs}$ to be the finite family of rounding cells induced by these comparisons, intersected with $U_0$:
\[
\begin{aligned}
        &U_0\cap\{x:x\prec w_{(1)}\},\\
        &U_0\cap\{w_{(j)}\}\quad(1\le j\le r),\\
        &U_0\cap\{x:w_{(j)}\prec x\prec w_{(j+1)}\}\quad(1\le j<r),\\
        &U_0\cap\{x:w_{(r)}\prec x\}.
\end{aligned}
\]

The full rounding-cell family is
\[
        \Pi
        =
        \bigl\{\{x\}:x\in O_0\bigr\}\cup \Pi_{\rm unobs}.
\]

The goal is to show that this partition of the unobserved space does not result in any cell of high mass. The next lemma shows via exchangeability that two distinct draws from the distribution are unlikely to lie in the same cell. 

\begin{lemma}\label{lem:lex-adjacency}
Let $A,B,W_1,\ldots,W_m$ be i.i.d. draws from $P_X$.  Then
\[
        \bbP\left(
        A\ne B
        \text{ and no }W_i\text{ lies in the closed lexicographic interval between }A\text{ and }B
        \right)
        \le
        \frac{2}{m+2}.
\]
\end{lemma}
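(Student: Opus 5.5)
Exchangeability reduces this to a counting problem. Consider the $m+2$ i.i.d.\ points $Z_1,\ldots,Z_{m+2}=(W_1,\ldots,W_m,A,B)$. The event in question is that the pair in positions $(m+1,m+2)$ is ``bad'': the two points are distinct and no other point of the sample lies in the closed lexicographic interval between them. Note that a $W_i$ equal to $A$ or $B$ lies in the closed interval, so it blocks badness.

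By exchangeability, the probability that a fixed pair $(i,j)$ with $i\ne j$ is bad is the same for every pair. Therefore
\[
        \bbP(\text{bad}) = \frac{\bbE[\#\text{bad unordered pairs}]}{\binom{m+2}{2}}.
\]
It suffices to show that deterministically there are at most $m+1$ bad unordered pairs among any $m+2$ points. That gives
\[
        \bbP(\text{bad})\le \frac{m+1}{\binom{m+2}{2}}=\frac{2}{m+2}.
\]

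For the counting step I use the fact that $\prec$ is a strict total order on distinct points of $[0,1]^d$. Take a bad pair $\{i,j\}$. The values $Z_i\ne Z_j$ carry no multiplicity: if another index $k$ had $Z_k=Z_i$, then $Z_k$ would lie in the closed interval and the pair would not be bad. Also, no other point lies strictly between them. So $Z_i$ and $Z_j$ are consecutive in the sorted list of distinct values, and each has multiplicity one.

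The map from bad pairs to consecutive pairs of distinct values is injective. The values identify the indices, since the multiplicity is one, and the orientation of the pair does not matter. With at most $m+2$ distinct values there are at most $m+1$ consecutive pairs, which gives the bound.

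The main obstacle is measurability and well-definedness: the event must be measurable so that exchangeability applies. I will handle this by noting that lexicographic comparison is a Borel relation on $[0,1]^d\times[0,1]^d$, so the badness indicator is a Borel function of the sample. Then the symmetry of the i.i.d.\ law under permutations gives the equal pair probabilities. Ties between distinct sample points are handled by the closed-interval convention, which is why repeated values are harmless.
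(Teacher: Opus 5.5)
Your proof is correct and is essentially the paper's argument: exchangeability combined with the deterministic count showing that an isolated pair must consist of two adjacent distinct values, each of multiplicity one. This gives at most $m+1$ unordered (equivalently $2(m+1)$ ordered) such pairs; the only cosmetic difference is that you average over all pairs by linearity of expectation, whereas the paper conditions on the multiset and uses that the labels of $(A,B)$ form a uniformly random ordered pair.
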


\begin{proof}
Condition on the multiset of the $m+2$ sampled contexts, and sort its distinct values in lexicographic order.  Call an ordered pair of sampled labels isolated if the two labels have distinct contexts and no other sampled context lies in the closed lexicographic interval between them.  Such a pair can only come from two adjacent distinct context values in the sorted list, and both of those values must appear with multiplicity one.  Hence there are at most $2(m+1)$ isolated ordered pairs.

By exchangeability, conditional on the multiset, the ordered pair of labels corresponding to $(A,B)$ is uniformly distributed over the $(m+2)(m+1)$ ordered pairs of distinct sampled labels.  Therefore the conditional probability that $(A,B)$ is isolated is at most
\[
        \frac{2(m+1)}{(m+2)(m+1)}
        =
        \frac{2}{m+2}.
\]
Averaging over the multiset proves the lemma.
\end{proof}

Finally this lets us control the sum of squared masses of the partition of the unseen context space, which is what we will need to control the rounding error. 

\begin{lemma}[The partition sample controls squared cell masses]\label{lem:partition}
There is an event $\mathcal E_{\rm part}$ with probability at least $0.95$ over $(S_0,S_2)$ such that
\[
        \sum_{C\in\Pi_{\rm unobs}}P_X(C)^2
        \le
        C_2\frac{\alpha^2}{L}
\]
for a universal constant $C_2$.
\end{lemma}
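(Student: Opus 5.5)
The plan is to write $\sum_{C\in\Pi_{\rm unobs}}P_X(C)^2$ as a collision probability and then bound its expectation using Lemma~\ref{lem:lex-adjacency}. Let $A,B$ be i.i.d.\ draws from $P_X$, independent of $(S_0,S_2)$. Conditional on $(S_0,S_2)$,
\[
        \sum_{C\in\Pi_{\rm unobs}}P_X(C)^2=\bbP\bigl(A,B\text{ lie in a common cell of }\Pi_{\rm unobs}\mid S_0,S_2\bigr).
\]
Taking expectations over $(S_0,S_2)$ then gives the unconditional probability of this collision event. Once that is bounded by $C\alpha^2/L$, Markov's inequality with factor $20$ yields $\mathcal E_{\rm part}$ with probability at least $0.95$ and $C_2=20C$.

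The collision event splits into two cases. \emph{Case $A\ne B$.} The cells of $\Pi_{\rm unobs}$ are the open gaps between consecutive sorted $w_{(j)}$, the two unbounded end pieces, and the singletons $\{w_{(j)}\}$, all intersected with $U_0$. If distinct $A,B$ lie in a common cell, that cell cannot be a singleton, so it is a gap or an end piece. In that case no $W_i$ lies in the closed lexicographic interval between $A$ and $B$: a $W_i$ strictly between them would lie in the same open gap, which is impossible, and $W_i$ equal to $A$ or $B$ would put that point in a singleton cell instead. So this case is contained in the event of Lemma~\ref{lem:lex-adjacency}, whose probability is at most $2/(m+2)$, and $S_0$ can simply be ignored.

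\emph{Case $A=B$.} For a non-atom this has probability zero. If $A=B=x$ is an atom, collision additionally requires $x\in U_0$, that is, $x\notin O_0$. The contribution is therefore
\[
        \sum_{x\in\At(P_X)}p_x^2(1-p_x)^{n_0}\le\sum_x p_x\cdot\max_{p}p(1-p)^{n_0}\le\frac{1}{n_0}.
\]
We will not use any help from $S_2$ here, since $x$ appearing in $S_2$ only shrinks its cell to a singleton, whose squared mass is still $p_x^2$.

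Adding the two cases gives an expected collision probability of at most $2/m+1/n_0$. With $m=C_0L/\alpha^2$ and $n_0=C_0LJ/\alpha^2$, this is $O(\alpha^2/L)$. The main point of care is the first case: one has to check that the cell structure, together with the intersection with $U_0$, really forces the closed lexicographic interval between $A$ and $B$ to be free of partition points, including at the endpoints. The end pieces are handled the same way as the interior gaps. Beyond that, the argument is routine.
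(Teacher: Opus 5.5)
Your proposal is correct and follows the paper's proof. Both write $\sum_{C\in\Pi_{\rm unobs}}P_X(C)^2$ as a same-cell (collision) probability, bound the distinct-point part by Lemma~\ref{lem:lex-adjacency} and the coincident-atom part by $\sum_x p_x^2(1-p_x)^{n_0}=O(1/n_0)$, and finish with Markov's inequality; the only cosmetic difference is that you work directly with auxiliary draws $A,B$ in the joint space rather than conditioning on $S_0$ first, and you make the constant explicit via $\max_p p(1-p)^{n_0}\le 1/n_0$.
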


\begin{proof}
Fix $S_0$ and let $\nu(A)=P_X(A\cap U_0)$.  Conditional on $S_0$,
\[
        \sum_{C\in\Pi_{\rm unobs}}P_X(C)^2
        =
        \sum_{C\in\Pi_{\rm unobs}}\nu(C)^2
\]
is the $\nu\times\nu$ mass of pairs that fall in the same unobserved cell.  The contribution from pairs in which the two draws are exactly the same atom is
\[
        \sum_{x\in\At(P_X)\cap U_0}p_x^2.
\]
For distinct $x\prec x'$, the two points can fall in the same cell only if no partition-sample cutpoint lies in the closed lexicographic interval
\[
        [x,x']_\prec=\{z\in\calX:x\preceq z\preceq x'\}.
\]
Since $\nu\le P_X$, the off-diagonal contribution is at most the probability of the following event: draw independent $A,B,W_1,\ldots,W_m\sim P_X$, have $A\ne B$, and have no $W_i$ in the closed lexicographic interval between $A$ and $B$.
By Lemma~\ref{lem:lex-adjacency}, this probability is at most $2/(m+2)$.  Therefore
\begin{equation}\label{eq:partition-conditional}
        \bbE_{S_2}\left[
        \sum_{C\in\Pi_{\rm unobs}}P_X(C)^2
        \,\middle|\, S_0
        \right]
        \le
        \sum_{x\in\At(P_X)\cap U_0}p_x^2+\frac{2}{m+2}.
\end{equation}

Now average over $S_0$.  An atom $x$ of mass $p_x$ lies in $U_0$ exactly when it is not observed in $S_0$, so
\[
        \bbE_{S_0}\sum_{x\in\At(P_X)\cap U_0}p_x^2
        =
        \sum_{x\in\At(P_X)}p_x^2(1-p_x)^{n_0}.
\]
For $p\in[0,1]$, $p(1-p)^{n_0}\le c_{\rm atom}/(n_0+1)$ for a universal constant $c_{\rm atom}$; hence
\[
        p_x^2(1-p_x)^{n_0}
        \le
        \frac{c_{\rm atom}}{n_0+1}p_x .
\]
Summing over atoms and using $\sum_xp_x\le1$ gives
\[
        \bbE_{S_0,S_2}
        \sum_{C\in\Pi_{\rm unobs}}P_X(C)^2
        \le
        \frac{c_{\rm atom}}{n_0}+\frac{2}{m+2}
        \le
        c_{\rm part}\frac{\alpha^2}{L}
\]
for a universal constant $c_{\rm part}$.  Choosing $C_2$ large enough and applying Markov's inequality gives the event $\mathcal E_{\rm part}$ with probability at least $0.95$.
\end{proof}

\begin{remark}[The third split is optional]\label{rem:two-split-partition}
We use a separate partition sample $S_2$ to keep the proof bookkeeping clean: $S_0$ is used to learn the interval hints, $S_1$ is used to learn the randomized predictor, and $S_2$ defines the rounding cells. But the third sample $S_2$ is not necessary.   One can instead sort the distinct contexts appearing in the confidence sample $S_0$ and use them as the lexicographic cutpoints for the unobserved region.  The same exchangeability argument applies.  Thus the third split can be removed at the cost of coupling the interval and partition bookkeeping.
\end{remark}

\section{Rounding the randomized predictor}\label{sec:rounding}

The lemma below rounds the randomized predictor constructed in Section~\ref{sec:intervals}.  The rounding is by shared inverse-CDF seeds: each cell $C\in\Pi$ receives one independent uniform seed $U_C$, and every context $x\in C$ is assigned the grid value obtained by applying the inverse-CDF sampler for $Q_x$ to that same seed.  One independent sampler seed per rounding cell preserves all signed calibration tests when the observed-atom radius term and the unobserved cell-mass term are small.
The previous two sections establish exactly these bounds: Proposition~\ref{prop:learned-intervals} controls the mass-squared weighted interval radii on observed atoms, and Lemma~\ref{lem:partition} controls the summed squared masses of unobserved cells.

A small optimization improves the run-time using the structure of the online to batch predictors we produce. If $Q_x=T^{-1}\sum_{t=1}^Tq_t(x)$, then drawing from $Q_x$ is the same as first drawing $\tau\sim{\rm Unif}(\{1,\ldots,T\})$ and then drawing from $q_\tau(x)$.  Thus a cell sampler seed can be stored as a pair $(\tau_C,U_C)$; on query $x\in C$, the evaluator reconstructs only $q_{\tau_C}(x)$ and samples from it using $U_C$.

\begin{proposition}[One-seed rounding for finite test families]\label{prop:finite-test-rounding}
Fix the interval-hint system $(I_x,\Lambda_x)_{x\in\calX}$ constructed from $S_0$ and the rounding partition $\Pi$ constructed from $S_0$ and $S_2$, and suppose $S_0$ lies in the coverage event $\mathcal E_{\rm cov}$ from Proposition~\ref{prop:learned-intervals}.  Let $Q$ be any randomized grid predictor satisfying $\supp(Q_x)\subseteq\Lambda_x$ for every $x$.  Let $\calA\subseteq[-1,1]^{\calX\times[0,1]}$ be a finite nonempty family of tests, and define
\[
        V_\Pi
        =
        \sum_{x\in O_0\cap\At(P_X)}p_x^2r_x^2
        +
        \sum_{C\in\Pi_{\rm unobs}}P_X(C)^2 .
\]
For each rounding cell $C\in\Pi$, draw an independent seed $U_C\sim\operatorname{Unif}[0,1]$.  Enumerate $\Lambda=\{v_1,\ldots,v_K\}$ and set
\[
        F_x(u)=v_j
        \quad\text{when}\quad
        \sum_{i<j}Q_x(v_i)<u\le \sum_{i\le j}Q_x(v_i),
\]
with the endpoint convention $F_x(0)=v_1$.  For $x\in C\in\Pi$, define $h(x)=F_x(U_C)$.  Then for every $\delta\in(0,1)$, with probability at least $1-\delta$ over the cell seeds,
\[
\max_{a\in\calA}
\left|
\bbE\bigl[a(X,h(X))(h(X)-Y)\bigr]
-
\bbE\left[\sum_{v\in\Lambda}Q_X(v)a(X,v)(v-Y)\right]
\right|
\le
C_{\rm rnd}
\left(
        \sqrt{V_\Pi\log\frac{2|\calA|}{\delta}}
        +\gamma
\right)
\]
for a universal constant $C_{\rm rnd}$.
\end{proposition}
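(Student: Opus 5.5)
The plan is to write each test statistic as a sum of independent per-cell contributions whose mean is exactly the randomized benchmark, and then apply Hoeffding's inequality with a union bound over $\calA$. Fix $a\in\calA$. Given $X$, both $h(X)$ and $a(X,h(X))$ are determined by the seeds, which are independent of $(X,Y)$. Hence
\[
\bbE[a(X,h(X))(h(X)-Y)]=\bbE[a(X,h(X))(h(X)-\mu(X))]=\sum_{C\in\Pi}\Phi^a_C(U_C),
\]
where $\Phi^a_C(u)=\bbE\bigl[\one\{X\in C\}\,a(X,F_X(u))(F_X(u)-\mu(X))\bigr]$. This holds because $\Pi$ partitions $\calX$: the singletons of $O_0$ together with the lexicographic cells of $U_0$.

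\emph{The mean is exact.} The cell seeds are independent, so the summands are independent random variables. Under $U\sim\operatorname{Unif}[0,1]$, the inverse-CDF value $F_x(U)$ has law $Q_x$. By Fubini, $\bbE_{U_C}\Phi^a_C(U_C)=\bbE[\one\{X\in C\}\sum_v Q_X(v)a(X,v)(v-\mu(X))]$. Summing over cells, the expectation over seeds equals the randomized quantity in the statement exactly, so no bias term is introduced by the rounding.

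\emph{Ranges of the summands.} Next I bound the range of each $\Phi^a_C$.
\begin{itemize}
\item For an unobserved cell $C\in\Pi_{\rm unobs}$, we have $|a|\le1$ and $|v-\mu|\le1$, so $\Phi^a_C$ takes values in an interval of length at most $2P_X(C)$.
\item For a singleton $\{x\}$ with $x\in O_0$ that is not an atom, $p_x=0$ and the term vanishes.
\item For an observed atom $x$, we are on $\mathcal E_{\rm cov}$, so $\mu(x)\in I_x$. Also $\supp(Q_x)\subseteq\Lambda_x$, so every value $F_x(u)$ lies within $\gamma$ of $I_x$, which has length at most $2r_x$. Hence $|F_x(u)-\mu(x)|\le 2r_x+\gamma$, and the range of $\Phi^a_{\{x\}}$ is at most $2p_x(2r_x+\gamma)$.
\end{itemize}

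\emph{Handling the $\gamma$ slack.} I expect this to be the only delicate point. Feeding the bound $2p_x(2r_x+\gamma)$ directly into Hoeffding would produce a term $\gamma\sqrt{\log(|\calA|/\delta)}$ instead of the additive $\gamma$ in the statement. To avoid this, split the observed atoms into two classes.
\begin{itemize}
\item If $r_x\ge\gamma$, the range is at most $6p_xr_x$. These cells go into the Hoeffding sum.
\item If $r_x<\gamma$, the range is at most $6\gamma p_x$. These cells are treated deterministically: their total deviation from their mean is at most $6\gamma\sum_x p_x\le 6\gamma$, with no probability involved.
\end{itemize}

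\emph{Concentration and union bound.} The remaining sum consists of the unobserved cells and the atoms with $r_x\ge\gamma$. It is a sum of independent bounded variables whose squared ranges total at most $36\sum_{x}p_x^2r_x^2+4\sum_{C\in\Pi_{\rm unobs}}P_X(C)^2\le 36V_\Pi$. Hoeffding's inequality gives a deviation larger than $\sqrt{18V_\Pi\log(2|\calA|/\delta)}$ with probability at most $\delta/|\calA|$. A union bound over $a\in\calA$ then yields, simultaneously for all $a$, a deviation of at most $C_{\rm rnd}\bigl(\sqrt{V_\Pi\log(2|\calA|/\delta)}+\gamma\bigr)$.
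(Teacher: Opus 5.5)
Your proof is correct. Its skeleton matches the paper's: independent per-cell contributions whose seed-expectation equals the randomized benchmark exactly, range bounds by cell type, then Hoeffding and a union bound over $\calA$.

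The genuine difference is how you keep the $\gamma$ slack additive. The paper decomposes the payoff rather than the set of cells. It writes $v-\mu(x)=(\clip_x(v)-\mu(x))+(v-\clip_x(v))$, where $\clip_x$ projects onto $I_x$. On $\mathcal E_{\rm cov}$ both $\clip_x(v)$ and $\mu(x)$ lie in $I_x$, so the clipped part has modulus at most $2r_x$ on every observed atom. Every cell therefore enters the Hoeffding sum with range $4p_xr_x$ or $2P_X(C)$, and no case split is needed. The remainder satisfies $|v-\clip_x(v)|\le\gamma$ pointwise for all $v\in\Lambda_x$, which gives a deterministic error of at most $2\gamma$; this term vanishes on unobserved cells, where $\clip_x$ is the identity.

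You instead keep the unclipped statistic and use the cruder bound $|F_x(u)-\mu(x)|\le 2r_x+\gamma$. You then split the observed atoms at $r_x=\gamma$: those with $r_x\ge\gamma$ go into Hoeffding with range at most $6p_xr_x$, and the rest are bounded deterministically by $6\gamma\sum_x p_x\le 6\gamma$.

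Both routes give $O\bigl(\sqrt{V_\Pi\log(2|\calA|/\delta)}+\gamma\bigr)$. The clipping route is uniform over cells, has slightly better constants, and cleanly separates within-interval randomness from the grid offset; the paper reuses exactly this form as the clipped payoff $\psi_{a,x}$ in the $k$-wise independent version. Your split avoids the auxiliary clip map at the cost of a threshold case analysis. It would transfer equally well to the $k$-wise setting, since your deterministic part uses no independence.

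Both arguments tacitly ignore the null event $U_C=0$, where $F_x(0)=v_1$ need not lie in $\Lambda_x$; this is harmless.
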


\begin{proof}
Write $I_x=[a_x,b_x]$ and define
\[
        \clip_x(v)=\min\{\max\{v,a_x\},b_x\}
\]
for $v\in\Lambda_x$.  By the definition of $\Lambda_x$, $|v-\clip_x(v)|\le\gamma$ whenever $v\in\Lambda_x$.  If $x\notin O_0$, then $I_x=[0,1]$, $\Lambda_x=\Lambda$, and $\clip_x(v)=v$.  Values of $\clip_x(v)$ for $v\notin\Lambda_x$ are irrelevant because $Q_x(v)=0$.

Fix a test $a\in\calA$.  Since both predictors are functions of $X$, replacing $Y$ by $\mu(X)$ does not change either expectation.  Split $v-\mu(x)$ into the clipped part and the grid-rounding part:
\[
        v-\mu(x)
        =
        \bigl(\clip_x(v)-\mu(x)\bigr)
        +
        \bigl(v-\clip_x(v)\bigr).
\]
For a rounding cell $C$, let
\[
        Z^0_{a,C}
        =
        \int_C
        \left(
        a(x,F_x(U_C))(\clip_x(F_x(U_C))-\mu(x))
        -
        \sum_{v\in\Lambda}Q_x(v)a(x,v)(\clip_x(v)-\mu(x))
        \right)\,dP_X(x).
\]
For fixed $x$, $F_x(U_C)$ has distribution $Q_x$, so $\bbE[Z^0_{a,C}]=0$.  The variables $\{Z^0_{a,C}:C\in\Pi\}$ are independent because the cell seeds are independent.

We bound their ranges cell by cell.  If $C=\{x\}$ is an observed singleton with $p_x>0$, then $x$ is an atom.  On the coverage event, both $\clip_x(v)$ and $\mu(x)$ lie in $I_x$ for every $v\in\supp(Q_x)$, so
\[
        |a(x,v)(\clip_x(v)-\mu(x))|\le 2r_x .
\]
Thus $Z^0_{a,\{x\}}$ has range length at most $4p_xr_x$; if $p_x=0$, the range length is zero.  If $C\in\Pi_{\rm unobs}$, then $I_x=[0,1]$ on $C$ and $\clip_x(v)=v$, so the clipped payoff is bounded by one and $Z^0_{a,C}$ has range length at most $2P_X(C)$.

Hoeffding's inequality therefore gives, for $Z^0_a=\sum_C Z^0_{a,C}$,
\[
        \bbP\left(|Z^0_a|>t\right)
        \le
        2\exp\left(-\frac{c t^2}{V_\Pi}\right)
\]
with the interpretation that the probability is zero when $V_\Pi=0$ and $t>0$.  Taking
\[
        t=C\sqrt{V_\Pi\log\frac{2|\calA|}{\delta}}
\]
and union bounding over $a\in\calA$ controls the clipped part simultaneously for all tests.

The grid-rounding part is deterministic.  For every $a\in\calA$,
\[
\left|
\int
\left(
a(x,h(x))(h(x)-\clip_x(h(x)))
-
\sum_{v\in\Lambda}Q_x(v)a(x,v)(v-\clip_x(v))
\right)\,dP_X(x)
\right|
\le 2\gamma,
\]
because both $h(x)$ and every $v$ with $Q_x(v)>0$ lie in $\Lambda_x$.  Combining the clipped and grid-rounding bounds and absorbing constants proves the proposition.  The proof used only that the cell seeds are independent, that for each fixed $x$ the rounded value has marginal distribution $Q_x$, and that rounded values lie in $\Lambda_x$.  Therefore the same guarantee applies to the equivalent mixture sampler described above.
\end{proof}

\begin{lemma}[One seed per rounding cell preserves calibration]\label{lem:one-seed-rounding}
Fix $S_0$ lying in the events $\mathcal E_{\rm cov}\cap\mathcal E_{\rm rad}$ from Proposition~\ref{prop:learned-intervals}.  Fix $S_2$ lying in the partition event $\mathcal E_{\rm part}$ of Lemma~\ref{lem:partition}.  Let $Q$ be any randomized grid predictor such that $\supp(Q_x)\subseteq\Lambda_x$ for every $x$, and assume that the allowed-grid radius satisfies $\gamma\le\alpha/2$.  For each rounding cell $C\in\Pi$, draw an independent seed $U_C\sim\operatorname{Unif}[0,1]$.    Enumerate $\Lambda=\{v_1,\ldots,v_K\}$ and define the inverse-CDF sampler
\[
        F_x(u)=v_j
        \quad\text{when}\quad
        \sum_{i<j}Q_x(v_i)<u\le \sum_{i\le j}Q_x(v_i),
\]
with the endpoint convention $F_x(0)=v_1$.  For $x\in C\in\Pi$, set
\[
        h(x)=F_x(U_C).
\]
Equivalently, if $Q_x=T^{-1}\sum_tq_t(x)$, one may draw for each cell an independent pair $(\tau_C,U_C)$ with $\tau_C$ uniform on $\{1,\ldots,T\}$ and $U_C$ uniform on $[0,1]$, and set $h(x)$ by applying the inverse-CDF sampler for $q_{\tau_C}(x)$ to $U_C$.
Then, with probability at least $0.99$ over the cell seeds, the deterministic grid predictor $h:\calX\to\Lambda$ satisfies
\[
        \operatorname{MC}_P(h;\calG)
        \le
        \operatorname{MC}_P(Q;\calG)
        +
        C_3\alpha .
\]
Consequently, there exists a deterministic choice of the cell seeds with the same guarantee.
\end{lemma}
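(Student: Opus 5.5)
We will derive this lemma as a direct specialization of Proposition~\ref{prop:finite-test-rounding}. The test family is the one supplied by the signed form of multicalibration, Lemma~\ref{lem:signed-form}. Concretely, set
\[
        \calA=\{a_{g,\sigma}(x,v)=g(x)\sigma(v)\one\{v\in\Lambda\}: g\in\calG,\ \sigma\in\{\pm1\}^{\Lambda}\}.
\]
This family takes values in $[-1,1]$ because $g\in[0,1]^{\calX}$, and its size is $|\calA|\le|\calG|2^{K}$. Hence $\log(2|\calA|/\delta)\le K\log 2+\log|\calG|+\log(2/\delta)=O(L)$ when $\delta=0.01$, using $L=K+\log(|\calG|+1)+100$.

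For each test $a_{g,\sigma}$, the deterministic predictor gives $\bbE[a_{g,\sigma}(X,h(X))(h(X)-Y)]=\bbE[g(X)\sigma(h(X))(h(X)-Y)]$. The randomized predictor gives $\bbE[g(X)\sum_v\sigma(v)Q_X(v)(v-Y)]$. These are exactly the two signed expressions in Lemma~\ref{lem:signed-form}.

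The hypotheses of Proposition~\ref{prop:finite-test-rounding} are in place. The coverage event $\mathcal E_{\rm cov}$ holds, and $\supp(Q_x)\subseteq\Lambda_x$. The proposition therefore yields, with probability at least $0.99$ over the seeds and simultaneously for all $(g,\sigma)$,
\[
        \left|\bbE[g(X)\sigma(h(X))(h(X)-Y)]-\bbE\Bigl[g(X)\textstyle\sum_v\sigma(v)Q_X(v)(v-Y)\Bigr]\right|
        \le C_{\rm rnd}\bigl(\sqrt{C V_\Pi L}+\gamma\bigr).
\]
Next we bound $V_\Pi$. Its first term, summed over $O_0\cap\At(P_X)$, is at most the full atom sum $\sum_{x\in\At(P_X)}p_x^2r_x^2\le C_1\alpha^2/L$ on $\mathcal E_{\rm rad}$. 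Its second term is at most $C_2\alpha^2/L$ on $\mathcal E_{\rm part}$ by Lemma~\ref{lem:partition}. So $V_\Pi L\le(C_1+C_2)\alpha^2$, and the square-root term is $O(\alpha)$. Together with $\gamma\le\alpha/2$, this gives a uniform discrepancy of at most $C_3\alpha$.

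To finish, take the maximum over $(g,\sigma)$. For each test, the signed quantity for $h$ is at most the signed quantity for $Q$ plus $C_3\alpha$. The signed quantity for $Q$ is in turn at most $\operatorname{MC}_P(Q;\calG)$. Applying Lemma~\ref{lem:signed-form} to both sides gives $\operatorname{MC}_P(h;\calG)\le\operatorname{MC}_P(Q;\calG)+C_3\alpha$.

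The mixture-sampler variant needs no separate argument. The proof of Proposition~\ref{prop:finite-test-rounding} used only three facts: the seeds are independent across cells, for each fixed $x$ the value $h(x)$ has marginal law $Q_x$, and $h(x)\in\Lambda_x$. Drawing $(\tau_C,U_C)$ and then sampling from $q_{\tau_C}(x)$ satisfies all three, because each $q_t(x)$ is supported on $\Lambda_x$ and its mixture over $\tau$ is $Q_x$. Finally, the existence of a deterministic choice of seeds follows because the event has positive probability.

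The only step needing care is the bookkeeping of the constant. One must check that $\log|\calA|$ is genuinely $O(L)$, including the additive $\log(1/\delta)$ term, so that the factor $1/L$ in $V_\Pi$ cancels it exactly. One must also check that the sum over observed atoms in $V_\Pi$ is dominated by the full atom sum bounded in Proposition~\ref{prop:learned-intervals}. Both checks are immediate, so there is no real obstacle.
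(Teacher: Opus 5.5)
Your proposal is correct and matches the paper's proof: you specialize Proposition~\ref{prop:finite-test-rounding} to the signed calibration tests $g(x)\sigma(v)$ with $\delta=0.01$. You then bound $V_\Pi\le C\alpha^2/L$ via $\mathcal E_{\rm rad}$ and $\mathcal E_{\rm part}$, check $\log(2|\calA|/\delta)=O(L)$, and convert back to $\operatorname{MC}_P$ through Lemma~\ref{lem:signed-form}. The indicator $\one\{v\in\Lambda\}$ you add and your explicit treatment of the mixture sampler are harmless refinements of the same argument.
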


\begin{proof}
Apply Proposition~\ref{prop:finite-test-rounding} to the finite test family
\[
        \calA_{\rm MC}
        =
        \{(x,v)\mapsto g(x)\sigma(v):
        g\in\calG,\ \sigma\in\{\pm1\}^{\Lambda}\}.
\]
Its size is $M=|\calG|2^K$, and by the definition of $L$ we have $\log M\le L$.  On the radius and partition events,
\[
        V_\Pi
        =
        \sum_{x\in O_0\cap\At(P_X)}p_x^2r_x^2
        +
        \sum_{C\in\Pi_{\rm unobs}}P_X(C)^2
        \le
        C\frac{\alpha^2}{L}.
\]
Indeed, the observed-atom sum is dominated by the full atomic sum in Proposition~\ref{prop:learned-intervals}, while the unobserved-cell sum is controlled by Lemma~\ref{lem:partition}.  Since $L\ge100$, Proposition~\ref{prop:finite-test-rounding} with $\delta=0.01$ gives, with probability at least $0.99$ over the cell seeds, a deterministic predictor $h$ satisfying, for all $g\in\calG$ and $\sigma\in\{\pm1\}^{\Lambda}$,
\[
\left|
\bbE\left[g(X)\sigma(h(X))(h(X)-Y)\right]
-
\bbE\left[g(X)\sum_{v\in\Lambda}\sigma(v)Q_X(v)(v-Y)\right]
\right|
\le C'\alpha,
\]
where we used $\gamma\le\alpha/2$ and $\log(200M)\le C L$.

Finally, apply Lemma~\ref{lem:signed-form}.  For any $g$,
\[
\begin{aligned}
\sum_{v\in\Lambda}|B_P(h;g,v)|
&=
\max_{\sigma}
\bbE[g(X)\sigma(h(X))(h(X)-Y)]\\
&\le
\max_{\sigma}
\bbE\left[g(X)\sum_v\sigma(v)Q_X(v)(v-Y)\right]
+C'\alpha\\
&\le
\operatorname{MC}_P(Q;\calG)+C'\alpha.
\end{aligned}
\]
Taking the maximum over $g$ proves the lemma after increasing $C_3$ if necessary.
\end{proof}

\section{Putting the deterministic learner together}

We now give the full learning procedure.  Its output predictor is deterministic, although the training procedure uses randomness to choose the final cell sampler seeds.  Appendix~\ref{app:training-derandomization} shows how to remove these final training coins by enumerating limited-independent seed choices and selecting one on a validation sample.

At a high level, the learner uses the confidence sample $S_0$ to build intervals, the online-learning sample $S_1$ to learn a randomized predictor constrained to those intervals, and the partition sample $S_2$ to create finitely many rounding cells.  It then chooses one sampler seed per rounding cell and uses those seeds to turn the randomized predictor into a deterministic function.

\begin{center}
\begingroup
\setlength{\fboxsep}{0.75em}
\fbox{%
\begin{minipage}{0.94\textwidth}
\begin{algorithm}{Learn, average, and round with one sampler seed per rounding cell}\label{alg:main}
    \item \textbf{Input:} accuracy $\eps\in(0,1/10)$, group family $\calG$, and an i.i.d. sample $S$ from $P$.
    \item Set the internal accuracy $\alpha=c\eps$ for a sufficiently small universal constant $c>0$.  Set $\gamma=\alpha/64$, build the grid $\Lambda$, define $K,L,J,n_0$ as in Section~\ref{sec:intervals}, and set
    \[
            T=C_0\frac{L}{\alpha^2},
            \qquad
            m=C_0\frac{L}{\alpha^2}.
    \]
    \item Split $S$ into three independent parts: the confidence sample $S_0$, the online-learning sample $S_1$, and the partition sample $S_2$, with
    \[
            |S_0|=n_0=C_0\frac{LJ}{\alpha^2},
            \qquad
            |S_1|=T,
            \qquad
            |S_2|=m.
    \]
    \item Using the confidence sample $S_0$, compute the counts $N_x$, empirical means $\widehat\mu_x$, confidence intervals $I_x$, and allowed grid sets
    \[
        \Lambda_x=\{v\in\Lambda:\dist(v,I_x)\le\gamma\}.
    \]
    These define the learned interval-hint system.
    \item Run Algorithm~\ref{alg:known-interval-ew}, or its factored implementation Algorithm~\ref{alg:known-interval-factored}, on the online-learning sample $S_1$ with the learned interval-hint system $(I_x,\Lambda_x)_{x\in\calX}$ and group family $\calG$.  Let $Q$ be the resulting randomized grid predictor.
    \item Sort the distinct contexts in the partition sample $S_2$ lexicographically and form the finite rounding-cell family $\Pi$ from Section~\ref{sec:partition}.
    \item Round $Q$ to a deterministic predictor $h$ as in Lemma~\ref{lem:one-seed-rounding}.  For the query-efficient implementation, draw for each rounding cell $C\in\Pi$ an independent pair $(\tau_C,U_C)$ with $\tau_C$ uniform on $\{1,\ldots,T\}$ and $U_C$ uniform on $[0,1]$; for $x\in C$, reconstruct $q_{\tau_C}(x)$ and set $h(x)$ by applying its inverse-CDF sampler to $U_C$.
    \item \textbf{Output:} the deterministic grid predictor $h:\calX\to\Lambda$.
\end{algorithm}
\end{minipage}}
\endgroup
\end{center}

\begin{theorem}[Deterministic predictors with randomized-rate sample complexity]\label{thm:main}
There is a universal constant $C$ such that the following holds.  For every distribution $P$ on $\calX\times[0,1]$, every finite nonempty group family $\calG\subseteq[0,1]^{\calX}$, and every $\eps\in(0,1/10)$, Algorithm~\ref{alg:main}, with a sufficiently small universal choice $\alpha=c\eps$, uses
\[
        n
        \le
        C\frac{(1/\eps+\log(|\calG|+1))\log((1/\eps+\log(|\calG|+2))/\eps)}{\eps^2}
\]
samples and outputs a deterministic predictor $h:\calX\to\Lambda$ satisfying
\[
        \operatorname{MC}_P(h;\calG)
        \le
        \eps
\]
with probability at least $2/3$ over the sample and the training randomness.
In particular, if $|\calG|\le\eps^{-\kappa}$ for any fixed $\kappa>0$, then
\[
        n=\widetilde O(\eps^{-3}).
\]

Moreover, Algorithm~\ref{alg:main} has an implicit representation with the same statistical guarantee, training time polynomial in $n_0$, $T$, $m$, $|\calG|$, $K$, and $d$, and per-query evaluation time polynomial in $|\calG|$, $K$, $d$, $\log T$, and $\log m$.  In the polynomial group regime $|\calG|\le\eps^{-\kappa}$, this running time is polynomial in $1/\eps$ and $d$.
\end{theorem}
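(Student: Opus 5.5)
The proof assembles the pieces already established, on the intersection of good events, and then checks the sample count and the running time. First, the sample size. With $\alpha=c\eps$ and $\gamma=\alpha/64$ we have $K=O(1/\eps)$, so $L=O(1/\eps+\log(|\calG|+1))$, and $J=\Theta(\log(L/\alpha))$ matches the logarithmic factor in the statement. The total size $n=n_0+T+m$ is dominated by $n_0=C_0LJ/\alpha^2$, which gives the displayed bound. When $|\calG|\le\eps^{-\kappa}$, $L=\widetilde O(1/\eps)$ and hence $n=\widetilde O(\eps^{-3})$.

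Second, the error bound, via a union bound over failure events. Proposition~\ref{prop:learned-intervals} makes $\mathcal E_{\rm cov}\cap\mathcal E_{\rm rad}$ hold with probability at least $0.89$ over $S_0$. Lemma~\ref{lem:partition} makes $\mathcal E_{\rm part}$ hold with probability at least $0.95$. We condition on $S_0\in\mathcal E_{\rm cov}$, so the learned hints are $\gamma$-valid, and $S_1$ is independent of $S_0$. We then apply Theorem~\ref{thm:constrained-online} with $\delta=0.01$ and $T=C_0L/\alpha^2$. Since $1/\gamma+\log|\calG|=O(L)$, this gives $\operatorname{MC}_P(Q;\calG)\le C_{\rm on}(\gamma+O(\alpha/\sqrt{C_0}))=O(\alpha)$ and $\supp(Q_x)\subseteq\Lambda_x$. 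Lemma~\ref{lem:one-seed-rounding} applies because $\gamma\le\alpha/2$; with probability $0.99$ over the cell seeds, $\operatorname{MC}_P(h;\calG)\le \operatorname{MC}_P(Q;\calG)+C_3\alpha=O(\alpha)\le\eps$, once $c$ is taken small enough. The total failure probability is at most $0.11+0.05+0.01+0.01<1/3$. Some care is needed with conditioning: $\mathcal E_{\rm part}$ depends on $(S_0,S_2)$, and so does $\Pi$. We use Lemma~\ref{lem:partition}'s unconditional probability and note that $S_1$, $S_2$ and the seeds are mutually independent given $S_0$.

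Third, efficiency, which I expect to be the main obstacle since the statistical part is pure assembly. Step (a) of Algorithm~\ref{alg:known-interval-factored} needs to produce the same coefficients $c_t(x,v)$ as the enumerating algorithm. Exponential weights on $(g,\sigma)$ has weight $\prod_v \exp(\eta\sigma(v)S_{g,v})$, which factorizes over $v$. Summing over $\sigma$ gives the $g$-marginal $\prod_v 2\cosh(\eta S_{g,v})$, and the conditional mean of $\sigma(v)$ given $g$ is $\tanh(\eta S_{g,v})$. With this, each round costs one LP of size $O(K)$ plus $O(|\calG|K)$ arithmetic, so training runs in polynomial time. Building $I_x$ takes hashing or sorting of $S_0$, and sorting $S_2$ lexicographically takes $O(md\log m)$.

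For evaluation, we store the states $S^{t-1}$ (or recompute them), the sorted cutpoints, and the observed set $O_0$ with its intervals. We store seeds only for the cells, of which there are $O(n_0+m)$. On a query $x$ we locate its cell by hashing $O_0$ and by binary search over the cutpoints, in time $O(d\log m)$. We then read $(\tau_C,U_C)$, reconstruct $c_{\tau_C}(x,\cdot)$ from the stored $S^{\tau_C-1}$ in time $O(|\calG|K)$, and solve $\operatorname{LP}_x$. Deterministic tie-breaking makes $q_{\tau_C}(x)$ well defined, so this is a fixed function of $x$. Finally, we apply the inverse CDF with $U_C$. By the remark at the end of Proposition~\ref{prop:finite-test-rounding}, this mixture sampler carries the same guarantee as the one used in the analysis.
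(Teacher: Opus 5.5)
Your proposal is correct and follows the paper's proof essentially step for step. It uses the same good events from Proposition~\ref{prop:learned-intervals} and Lemma~\ref{lem:partition}, Theorem~\ref{thm:constrained-online} on the independent sample $S_1$, Lemma~\ref{lem:one-seed-rounding} for the rounding, the count $n=n_0+T+m=O(LJ/\alpha^2)$, and the $\cosh$/$\tanh$ factorization with per-cell reconstruction of $q_{\tau_C}(x)$ for the implementation. The remaining differences are cosmetic: you union-bound failure events with $\delta=0.01$ where the paper multiplies $0.80\cdot0.95\cdot0.99$ with $\delta=0.05$, and you look up cells by hashing $O_0$ rather than by lexicographic comparisons.
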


The polynomial implementation claim follows from the factored exponential-weights representation and finite query reconstruction in Appendix~\ref{app:efficient}.

\begin{proof}
Let $\mathcal E_{\rm cov}$ and $\mathcal E_{\rm rad}$ be the events from Proposition~\ref{prop:learned-intervals}, and set $\mathcal E_0=\mathcal E_{\rm cov}\cap\mathcal E_{\rm rad}$.  By that proposition, after adjusting constants,
\[
        \bbP(\mathcal E_0)\ge0.85.
\]
Let $\mathcal E_{\rm part}$ be the partition event of Lemma~\ref{lem:partition}.  Since $\bbP(\mathcal E_{\rm part})\ge0.95$, a union bound gives
\[
        \bbP(\mathcal E_0\cap\mathcal E_{\rm part})\ge0.80 .
\]
Condition on the confidence and partition samples $S_0,S_2$ such that $\mathcal E_0\cap\mathcal E_{\rm part}$ holds.

Set $U_x=I_x$.  Proposition~\ref{prop:learned-intervals} gives $\mu(x)\in U_x$ for every $x$, and $\Lambda_x$ $\gamma$-covers $U_x$ for every $x$.  Thus, on the event we have conditioned on, the learned interval-hint system is $\gamma$-valid.  Apply Theorem~\ref{thm:constrained-online} with failure probability $0.05$.  Since
\[
        1/\gamma+\log|\calG|+O(1)\le C L
\]
and
\[
        T=C_0\frac{L}{\alpha^2},
\]
taking $C_0$ sufficiently large gives, with conditional probability at least $0.95$ over the online-learning sample $S_1$ and the online procedure,
\begin{equation}\label{eq:Q-good}
        \operatorname{MC}_P(Q;\calG)
        \le
        C_{\rm on}\left(\gamma+\sqrt{\frac{L}{T}}\right)
        \le
        C_4\alpha.
\end{equation}
The predictor $Q$ is supported on $\Lambda_x$ for every $x$.

The algorithm sets $\gamma=\alpha/64$, so the hypothesis $\gamma\le\alpha/2$ in Lemma~\ref{lem:one-seed-rounding} holds.  On the event \eqref{eq:Q-good}, that lemma gives, with probability at least $0.99$ over the rounding seeds, a deterministic predictor $h$ with
\[
        \operatorname{MC}_P(h;\calG)
        \le
        \operatorname{MC}_P(Q;\calG)+C_3\alpha
        \le
        (C_3+C_4)\alpha.
\]
Choose the universal constant $c$ small enough that $(C_3+C_4)\alpha\le\eps$.

The final rounding succeeds with probability at least $0.99$ by Lemma~\ref{lem:one-seed-rounding}.  Therefore the total success probability is at least
\[
        0.80\cdot0.95\cdot0.99>2/3.
\]

Finally,
\[
        K=O(1/\alpha)=O(1/\eps),
        \qquad
        L=O(1/\eps+\log(|\calG|+1)),
\]
and
\[
        J=O\left(\log\left(\frac{1/\eps+\log(|\calG|+2)}{\eps}\right)\right).
\]
Thus
\[
        n=n_0+T+m
        =O\left(\frac{LJ}{\alpha^2}\right)
        \le
        C\frac{(1/\eps+\log(|\calG|+1))\log((1/\eps+\log(|\calG|+2))/\eps)}{\eps^2}.
\]
If $|\calG|\le\eps^{-\kappa}$ for fixed $\kappa$, then $L=\widetilde O(1/\eps)$, and hence $n=\widetilde O(\eps^{-3})$.

It remains to justify the implementation and query-time claims in the theorem.  During training, the confidence-interval data from $S_0$ is computed by scanning the $n_0$ confidence-sample points and maintaining the observed-context table.  The partition data from $S_2$ is computed by sorting the $m$ partition-sample contexts lexicographically and assigning one sampler seed $(\tau_C,U_C)$ to each rounding cell induced by the sorted cutpoints.  Appendix~\ref{app:efficient} shows that each exponential-weights state is represented by the $|\calG|K$ numbers $S_{g,v}^{t-1}$ rather than by $|\calG|2^K$ weights over signed calibration tests, that each online round solves one linear program with at most $K+1$ variables, and that each update costs polynomial time in $|\calG|K$.  Thus training is polynomial in $n_0$, $T$, $m$, $|\calG|$, $K$, and $d$.

For query evaluation, the implicit evaluator locates the query's rounding cell using $O(d\log m)$ lexicographic comparisons, retrieves its stored index $\tau_C$, and reconstructs only $q_{\tau_C}(x)$ from the corresponding online state.  It then applies the inverse-CDF sampler for $q_{\tau_C}(x)$ using the stored uniform seed $U_C$.  Therefore each query is polynomial in $|\calG|$, $K$, $d$, $\log T$, and $\log m$, with no need to reconstruct all $T$ online predictions.  In the polynomial group regime, the bounds on $n_0$, $T$, $m$, and $K$ above are polynomial in $1/\eps$, including the logarithmic factors.
\end{proof}

We stated our main theorem for finite classes, but it easily extends to infinite classes that have finite covers.

\begin{corollary}[Infinite group classes from covers]\label{cor:mc-covers}
Let $\calG\subseteq[0,1]^\calX$ be any nonempty group class, and interpret $\operatorname{MC}_P(h;\calG)$ with the supremum over $g\in\calG$.  For a marginal distribution $P_X$, let
$N_1(\rho,\calG,P_X)$ 
denote the least cardinality of an $L_1(P_X)$ $\rho$-cover of $\calG$.  There are universal constants $C,c>0$ such that the following holds.  Fix any distribution $P$ on $\calX\times[0,1]$ with marginal $P_X$, and let $\eps\in(0,1/10)$.  If $N=N_1(c\eps,\calG,P_X)<\infty$ and such a cover is supplied to the learner, then there is an algorithm using
\[
        n
        \le
        C\frac{(1/\eps+\log(N+1))\log((1/\eps+\log(N+2))/\eps)}{\eps^2}
\]
samples that outputs a deterministic predictor $h:\calX\to\Lambda$ satisfying
\[
        \operatorname{MC}_P(h;\calG)\le\eps
\]
with probability at least $2/3$ over the sample and the training randomness.
\end{corollary}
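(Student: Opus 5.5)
The plan is to reduce to Theorem~\ref{thm:main} applied to the finite cover. Let $\calG'=\{g_1,\ldots,g_N\}$ be the supplied $L_1(P_X)$ $\rho$-cover of $\calG$ with $\rho=c\eps$. We may assume $\calG'\subseteq[0,1]^{\calX}$: if a cover element leaves $[0,1]$, replace it by its pointwise clip to $[0,1]$. This only decreases its $L_1$ distance to any $g\in\calG$, since every $g$ takes values in $[0,1]$. We run Algorithm~\ref{alg:main} with group family $\calG'$ and target accuracy $\eps/2$. By Theorem~\ref{thm:main}, with probability at least $2/3$ this outputs a deterministic $h:\calX\to\Lambda$ with $\operatorname{MC}_P(h;\calG')\le\eps/2$. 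The sample bound is exactly the stated one with $|\calG|$ replaced by $N$, after the constant $2$ is absorbed into $C$.

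The remaining step is a deterministic transfer from $\calG'$ to $\calG$. Fix $g\in\calG$ and choose $g'\in\calG'$ with $\bbE|g(X)-g'(X)|\le\rho$. By Lemma~\ref{lem:signed-form},
\[
\sum_{v}|B_P(h;g,v)|=\max_{\sigma\in\{\pm1\}^{\Lambda}}\bbE\bigl[g(X)\sigma(h(X))(h(X)-Y)\bigr].
\]
For each fixed $\sigma$, since $|\sigma(h(X))(h(X)-Y)|\le1$, we have
\[
\bigl|\bbE[(g(X)-g'(X))\sigma(h(X))(h(X)-Y)]\bigr|\le\bbE|g(X)-g'(X)|\le\rho.
\]
Hence the signed quantity for $g$ is at most the signed quantity for $g'$ plus $\rho$. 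Maximizing over $\sigma$ gives $\sum_v|B_P(h;g,v)|\le\operatorname{MC}_P(h;\calG')+\rho\le\eps/2+c\eps$. Choosing $c\le1/2$ and taking the supremum over $g$ yields $\operatorname{MC}_P(h;\calG)\le\eps$.

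There is no real obstacle here. The only points to check are that the cover is allowed to depend on $P_X$, which is fine since it is supplied to the learner, and that clipping keeps the cover elements valid groups for Theorem~\ref{thm:main}. The transfer itself is simply $L_1$-Lipschitzness of each signed test in $g$, uniformly over $\sigma$. This is why the $L_1(P_X)$ cover, rather than a sup-norm cover, suffices.
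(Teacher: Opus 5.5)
Your proposal is correct and follows essentially the same route as the paper: run Theorem~\ref{thm:main} on the supplied cover at accuracy $\eps/2$, then transfer to all of $\calG$ via the signed form, using that each signed test is $1$-Lipschitz in $g$ under $L_1(P_X)$. Your extra step of clipping cover elements to $[0,1]$, so they are valid groups, is a harmless refinement that the paper leaves implicit.
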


\begin{proof}
Let $\calG_0$ be the supplied $L_1(P_X)$ cover at radius $c\eps$, and run Theorem~\ref{thm:main} on the finite family $\calG_0$ with target accuracy $\eps/2$.  With probability at least $2/3$, the resulting deterministic predictor $h$ satisfies $\operatorname{MC}_P(h;\calG_0)\le\eps/2$.  Now fix any $g\in\calG$ and choose $g_0\in\calG_0$ with $\bbE|g(X)-g_0(X)|\le c\eps$.  By the signed form in Lemma~\ref{lem:signed-form}, for every sign pattern $\sigma\in\{\pm1\}^{\Lambda}$,
\[
 \left|
 \bbE\left[(g-g_0)(X)\sigma(h(X))(h(X)-Y)\right]
 \right|
 \le
 \bbE|g(X)-g_0(X)|
 \le c\eps,
\]
since $h,Y\in[0,1]$.  Thus the signed calibration error for $g$ is at most the signed calibration error for $g_0$ plus $c\eps$.  Taking the supremum over $g$ and choosing $c$ small enough gives $\operatorname{MC}_P(h;\calG)\le\eps$.  The sample bound is the bound in Theorem~\ref{thm:main}, with constants adjusted for the target accuracy $\eps/2$.
\end{proof}

\begin{remark}[Pseudo-dimension classes]\label{rem:mc-pdim}
If $\calG\subseteq[0,1]^\calX$ has pseudo-dimension $p$, standard covering estimates for bounded real-valued classes~\citep{haussler1992decision} imply that, for every marginal $P_X$ and every $\rho>0$,
\[
        N_1(\rho,\calG,P_X)
        \le
        \left(\frac{C}{\rho}\right)^{Cp}
\]
for a universal constant $C$.  Thus, when such a cover is available, Corollary~\ref{cor:mc-covers} gives deterministic ECE multicalibration with sample complexity
\[
        \widetilde O\left(\eps^{-3}+\frac{p}{\eps^2}\right).
\]
If the marginal distribution is not known, one can build the cover from an additional context-only sample of size $\widetilde O(p/\eps^2)$ by the usual empirical-cover argument, in the same way as in Corollary~\ref{cor:pdim-det-omni}.  This keeps the total sample complexity at $\widetilde O(\eps^{-3}+p/\eps^2)$.
\end{remark}

\section{Outcome indistinguishability and omniprediction}\label{sec:linear-tests}

The scalar theorem was stated for ECE multicalibration, but was specialized to multicalibration only through the finite family of signed OI tests 
$(x,v)\mapsto g(x)\sigma(v).$ 
In this section we isolate the more general principle.  Suppose we are given any finite collection of bounded OI tests
$\calA\subseteq[-1,1]^{\calX\times[0,1]}.$ 
The goal is to find a deterministic predictor whose residual $h(X)-Y$ has small correlation with every test evaluated at the context and the prediction.  The theorem below says that the same interval-hint learning idea and one-seed rounding construction give a deterministic predictor satisfying these constraints at the nearly optimal rate.

\subsection{Finite OI test families}

Fix a finite grid $\Lambda\subseteq[0,1]$.  For a randomized grid predictor $Q:\calX\to\Delta(\Lambda)$ and a finite test family $\calA$, define the OI error as:
\[
        \operatorname{OIErr}_P(Q;\calA)
        =
        \max_{a\in\calA}
        \left|
        \bbE\left[
        \sum_{v\in\Lambda}Q_X(v)a(X,v)(v-Y)
        \right]
        \right|.
\]
For a deterministic predictor $h:\calX\to\Lambda$, this specializes to
\[
        \operatorname{OIErr}_P(h;\calA)
        =
        \max_{a\in\calA}
        \left|
        \bbE\left[
        a(X,h(X))(h(X)-Y)
        \right]
        \right|.
\]

The lemma below is the finite-test analogue of Theorem~\ref{thm:constrained-online}.  Its proof uses the same interval-hint online-to-batch mechanism as Section~\ref{sec:intervalhints}: taking
\[
        \calA=\{(x,v)\mapsto g(x)\sigma(v):g\in\calG,\ \sigma\in\{\pm1\}^{\Lambda}\}
\]
recovers the signed-test guarantee underlying Theorem~\ref{thm:constrained-online}, with Lemma~\ref{lem:signed-form} converting that guarantee to ECE multicalibration.

Concretely, the learner in the lemma is the finite-test variant of Algorithm~\ref{alg:known-interval-ew}, with the signed calibration class replaced by
\[
        \calA_\pm=\{a,-a:a\in\calA\}.
\]
At round $t$ it maintains an exponential-weights distribution $\pi_t$ over $\calA_\pm$, forms the mixture
\[
        c_t(x,v)=\sum_{a\in\calA_\pm}\pi_t(a)a(x,v),
\]
predicts with $q_t(x)=\operatorname{LP}_x(c_t(x,\cdot))$, updates the weights using the realized residuals
\[
        z_t^a=\sum_{v\in\Lambda}q_t(X_t)(v)a(X_t,v)(v-Y_t),
\]
and outputs the averaged predictor $Q_x=T^{-1}\sum_tq_t(x)$.

\begin{lemma}[Finite-test learning from valid interval hints]\label{lem:finite-test-otb}
There is a universal constant $C_{\rm oi}$ such that the following holds.  Let $\calA\subseteq[-1,1]^{\calX\times[0,1]}$ be a finite nonempty family of tests, let $\Lambda\subseteq[0,1]$ be a finite grid, let $\delta\in(0,1)$, and suppose the learner is given a $\gamma$-valid interval-hint system $(U_x,\Lambda_x)_{x\in\calX}$ for $P$.  Then there is a learning algorithm which, given $T$ i.i.d.\ samples from $P$ and the interval-hint system, outputs a randomized grid predictor $Q$ satisfying
\[
        \supp(Q_x)\subseteq\Lambda_x\qquad\text{for every }x,
\]
and, with probability at least $1-\delta$,
\[
        \operatorname{OIErr}_P(Q;\calA)
        \le
        C_{\rm oi}\left(
        \gamma+
        \sqrt{\frac{\log(2|\calA|)+\log(1/\delta)}{T}}
        \right).
\]
\end{lemma}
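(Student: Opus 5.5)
The proof follows the three-step route used for Theorem~\ref{thm:constrained-online}: an online regret bound, a martingale online-to-batch step, and a final combination. The only change is that the signed calibration class $\mathcal T$ is replaced by the symmetrized test class $\calA_\pm=\{a,-a:a\in\calA\}$, which has $|\calA_\pm|\le 2|\calA|$. The algorithm is the finite-test variant described just before the lemma. Support is automatic: each $q_t(x)=\operatorname{LP}_x(c_t(x,\cdot))$ is feasible for \eqref{eq:endpoint-lp}, so it vanishes off $\Lambda_x$. Hence $Q_x$, an average of such $q_t(x)$, satisfies $\supp(Q_x)\subseteq\Lambda_x$ deterministically.

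\emph{Step 1: online regret bound.} I would run exponential weights over $\calA_\pm$ with $\eta=\sqrt{(\log(2|\calA|)+\log(3/\delta))/T}$. Write $\Phi_t=\sum_a\pi_t(a)z_t^a=\sum_v q_t(X_t)(v)c_t(X_t,v)(v-Y_t)$. Since $|z_t^a|\le1$, the standard high-probability analysis gives
\[
\max_{a\in\calA_\pm}\frac1T\sum_t z_t^a\le\frac1T\sum_t\Phi_t+C\sqrt{\frac{\log(2|\calA|)}{T}}.
\]
Next I would write $\Phi_t=\bbE[\Phi_t\mid\mathcal F_{t-1},X_t]+\xi_t$. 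The terms $\xi_t$ are bounded martingale differences, so Azuma bounds their average by $C\sqrt{\log(3/\delta)/T}$ with probability $1-\delta/3$.

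\emph{Step 2: the minimax step, which is the main point.} The conditional term $\bbE[\Phi_t\mid\cdot]$ equals $\sum_v q_t(v)c_t(v)(v-m_t)$ with $m_t\in U_{X_t}$. This is affine in $m_t$, so it is at most the larger of its values at the endpoints $a_{X_t}$ and $b_{X_t}$, which is the LP value $\lambda^\star$. It remains to show $\lambda^\star\le\gamma$. I would do this by a minimax argument on the zero-sum game where the learner picks $q\in\Delta(\Lambda_x)$ and nature picks $m\in U_x$. The payoff is bilinear and both sets are convex and compact, so von Neumann applies. For every fixed $m$ the learner can respond with a point mass on some $v\in\Lambda_x$ satisfying $|v-m|\le\gamma$, which exists by $\gamma$-validity. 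This gives payoff at most $|c_t(v)|\gamma\le\gamma$, since $|c_t|\le1$ because the tests take values in $[-1,1]$. Hence the game value, and with it the LP optimum, is at most $\gamma$. I expect the only delicate points to be two: confirming that restricting the learner to $\Lambda_x$ does not hurt, which is exactly where validity enters, and checking that the constraint $\lambda\ge-1$ in \eqref{eq:endpoint-lp} is harmless. Combining Steps 1 and 2, with probability $1-2\delta/3$,
\[
\max_{a\in\calA_\pm}\frac1T\sum_tz_t^a\le\gamma+C\sqrt{\frac{\log(2|\calA|)+\log(3/\delta)}{T}}.
\]

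\emph{Step 3: online-to-batch and conclusion.} I would repeat Lemma~\ref{lem:martingale-otb} with $\calA_\pm$ in place of $\mathcal T$. For fixed $a$, the terms $\bbE_{(X,Y)\sim P}[\sum_vq_t(X)(v)a(X,v)(v-Y)]-z_t^a$ form a bounded martingale difference sequence, because $q_t$ is fixed before the fresh sample $(X_t,Y_t)$ is drawn. Their average is exactly the population residual of $Q$ minus $\frac1T\sum_tz_t^a$. Azuma together with a union bound over the $2|\calA|$ tests, at failure probability $\delta/3$, controls all of them simultaneously. Because $\calA_\pm$ contains both $a$ and $-a$, the one-sided bound on the population residual becomes the absolute value in $\operatorname{OIErr}_P(Q;\calA)$. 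A final union bound over the three failure events gives total probability $1-\delta$ and the stated bound with a universal constant $C_{\rm oi}$.
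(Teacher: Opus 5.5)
Your proposal is correct and matches the paper's proof essentially step for step: exponential weights over $\calA_\pm$, a one-step minimax bound of $\gamma$ coming from validity of the hint (the paper invokes Sion, and your von Neumann argument over the interval $U_x$ is equivalent), Azuma for the realized mixture payoffs together with the exponential-weights regret bound, and then the martingale online-to-batch step with a union bound over $\calA_\pm$. One small slip does not affect the result: the regret bound is deterministic, and with your choice of $\eta$ it is $O(\sqrt{(\log(2|\calA|)+\log(3/\delta))/T})$ rather than $O(\sqrt{\log(2|\calA|)/T})$, which is absorbed into the final bound anyway.
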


\begin{proof}
This is the same interval-hint online-to-batch proof as Lemmas~\ref{lem:hinted-online} and~\ref{lem:martingale-otb}, with the finite test family $\calA$ and its negatives replacing the signed calibration tests.  The endpoint LP is run against the current exponential-weights mixture of tests, the valid interval hint gives a one-step $\gamma$ minimax bound, and the martingale online-to-batch step turns the online residual guarantees into population OI guarantees for the averaged predictor.  Appendix~\ref{app:finite-oi-proof} gives the full argument.
\end{proof}

\begin{theorem}[Deterministic outcome indistinguishability for finite test families]\label{thm:linear-tests}
There is a universal constant $C$ such that the following holds.  Let $\calA\subseteq[-1,1]^{\calX\times[0,1]}$ be a finite nonempty test family.  For every distribution $P$ on $\calX\times[0,1]$ and every $\eps\in(0,1/10)$, there is an algorithm using
\[
        n
        \le
        C\frac{L_{\calA}\log(100L_{\calA}/\eps)}{\eps^2},
        \qquad
        L_{\calA}=\log(2|\calA|+2)+100,
\]
samples that outputs a deterministic grid predictor $h:\calX\to\Lambda$ satisfying
\[
        \operatorname{OIErr}_P(h;\calA)\le\eps
\]
with probability at least $2/3$ over the sample and the training randomness.  In particular, if $|\calA|\le\eps^{-\kappa}$ for any fixed $\kappa>0$, then $n=\widetilde O(\eps^{-2})$.
\end{theorem}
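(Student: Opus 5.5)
We follow the same three-sample pipeline as Algorithm~\ref{alg:main}, but with the OI quantities replacing the multicalibration quantities throughout.

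\emph{Parameters.} Set the internal accuracy $\alpha=c\eps$ and $\gamma=\alpha/64$, and take $\Lambda$ to be a $\gamma$-net of $[0,1]$. The key difference from the multicalibration case is that the log-cardinality parameter is now $L=L_{\calA}$ instead of $K+\log(|\calG|+1)+100$. The $1/\gamma$ term is no longer needed, because the test family does not range over sign patterns on $\Lambda$; so $K$ enters only through the grid-rounding error $\gamma$. Set $J=\Theta(\log(L/\alpha))$, $n_0=C_0LJ/\alpha^2$, and $T=m=C_0L/\alpha^2$. The total sample size is then $O(L_{\calA}\log(L_{\calA}/\eps)/\eps^2)$, as claimed.

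\emph{Interval hints and partition.} Build the interval hints $(I_x,\Lambda_x)$ from $S_0$ exactly as in Section~\ref{sec:intervals}, and build the cells $\Pi$ from $S_2$ as in Section~\ref{sec:partition}. We first check that Proposition~\ref{prop:learned-intervals} and Lemma~\ref{lem:partition} go through with this new $L$.

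For the coverage event, the proof uses only Hoeffding's inequality plus a union bound over at most $n_0$ repeated contexts. This needs $J\gtrsim\log n_0$, which holds since $\log n_0=O(\log(L/\alpha))$.

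For the radius bound and the partition bound, the estimates are
\[
        \bbE\sum_x p_x^2r_x^2\lesssim J/n_0
        \qquad\text{and}\qquad
        \bbE\sum_{C}P_X(C)^2\lesssim 1/n_0+1/m .
\]
Neither uses the structure of $L$; they only need $n_0,m\gtrsim L/\alpha^2$. With our choices of $n_0$ and $m$, both bounds become $O(\alpha^2/L)$. Hence, with probability at least $0.80$, the hints are $\gamma$-valid and $V_\Pi\le C\alpha^2/L$.

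\emph{Learning and rounding.} Condition on that event. Apply Lemma~\ref{lem:finite-test-otb} on $S_1$ with failure probability $0.05$. This gives a randomized $Q$ supported on $\Lambda_x$ with
\[
        \operatorname{OIErr}_P(Q;\calA)\le C_{\rm oi}\bigl(\gamma+\sqrt{(\log(2|\calA|)+3)/T}\bigr)\le C_4\alpha .
\]

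Then round with one seed per cell and apply Proposition~\ref{prop:finite-test-rounding} directly to the family $\calA$ with $\delta=0.01$. This is the natural place to apply it, since it was stated for arbitrary finite test families. With probability at least $0.99$ it shows that every test value of $h$ differs from the corresponding test value of $Q$ by at most
\[
        C_{\rm rnd}\bigl(\sqrt{V_\Pi\log(200|\calA|)}+\gamma\bigr).
\]
Since $\log(200|\calA|)\le CL_{\calA}$, this is at most $C\alpha$. By the triangle inequality, taking the maximum over $a$ gives $\operatorname{OIErr}_P(h;\calA)\le(C_3+C_4)\alpha\le\eps$ once $c$ is chosen small enough.

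The total success probability is at least $0.80\cdot0.95\cdot0.99>2/3$. The polynomial specialization follows because $L_{\calA}=O(\log(1/\eps))$ when $|\calA|\le\eps^{-\kappa}$.

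\emph{Main obstacle.} The delicate step is confirming that the sample sizes need no $1/\eps$ factor. This requires two checks:
\begin{enumerate}[label=(\roman*)]
    \item Lemma~\ref{lem:finite-test-otb} depends on $|\calA|$ but not on $|\Lambda|$.
    \item The rounding variance budget $\alpha^2/L_{\calA}$ is achievable with $n_0$ and $m$ of order $L_{\calA}J/\alpha^2$.
\end{enumerate}
For (ii), the constants in Proposition~\ref{prop:learned-intervals} and Lemma~\ref{lem:partition} must be rechecked with $L_{\calA}$ substituted for $L$. Only the coverage union bound involves logarithms, and it is handled by the choice of $J$.
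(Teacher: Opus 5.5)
Your proposal is correct and follows the paper's proof essentially step for step. It uses the same three-way split with $L$ replaced by $L_{\calA}$, $J=\Theta(\log(L/\alpha))$, $n_0=C_0LJ/\alpha^2$, $T=m=C_0L/\alpha^2$; it reuses Proposition~\ref{prop:learned-intervals} and Lemma~\ref{lem:partition} unchanged, then applies Lemma~\ref{lem:finite-test-otb} with failure probability $0.05$ and Proposition~\ref{prop:finite-test-rounding} directly to $\calA$ with $\delta=0.01$. One small imprecision: the radius bound scales as $J/n_0$, so it needs $n_0\gtrsim LJ/\alpha^2$ rather than just $L/\alpha^2$, but your choice of $n_0$ already provides this.
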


\begin{proof}
The algorithm is the same three-way sample split as Algorithm~\ref{alg:main}.  Set an internal accuracy $\alpha=c\eps$ for a sufficiently small universal constant $c$, set $\gamma=\alpha/64$, and take the same uniform $\gamma$-net grid $\Lambda$.  The only change is that the quantity called $L$ in Sections~\ref{sec:intervals}--\ref{sec:rounding} is replaced by
\[
        L=L_{\calA}=\log(2|\calA|+2)+100,
\]
and $J=100\log(100L/\alpha)$.  Use
\[
        n_0=C_0\frac{LJ}{\alpha^2},
        \qquad
        T=C_0\frac{L}{\alpha^2},
        \qquad
        m=C_0\frac{L}{\alpha^2}.
\]
The confidence sample $S_0$ builds intervals $I_x$ and allowed grid sets $\Lambda_x$ exactly as in Section~\ref{sec:intervals}; the partition sample $S_2$ builds the rounding cells exactly as in Section~\ref{sec:partition}.  Proposition~\ref{prop:learned-intervals} and Lemma~\ref{lem:partition} are unchanged with this value of $L$: with probability at least $0.80$, the learned interval hints are valid, the atom radius bound is at most $C\alpha^2/L$, and the unobserved cell-mass sum is at most $C\alpha^2/L$.

Condition on these events.  The learned interval-hint system is $\gamma$-valid, so Lemma~\ref{lem:finite-test-otb} with failure probability $0.05$ produces a randomized grid predictor $Q$ supported on $\Lambda_x$ such that
\[
        \operatorname{OIErr}_P(Q;\calA)
        \le
        C\left(\gamma+\sqrt{\frac{L}{T}}\right)
        \le C'\alpha .
\]

Finally round $Q$ using one independent seed per rounding cell, as in Proposition~\ref{prop:finite-test-rounding}.  On the same radius and partition events,
\[
        V_\Pi
        =
        \sum_{x\in O_0\cap\At(P_X)}p_x^2r_x^2
        +
        \sum_{C\in\Pi_{\rm unobs}}P_X(C)^2
        \le
        C\left(
        \frac{\alpha^2}{L}
        \right)
\]
and $\log(200|\calA|)\le C L$.  Proposition~\ref{prop:finite-test-rounding} with $\delta=0.01$ therefore gives, with probability at least $0.99$ over the cell seeds, a deterministic grid predictor $h$ satisfying
\[
        \operatorname{OIErr}_P(h;\calA)
        \le
        \operatorname{OIErr}_P(Q;\calA)+C\alpha
        \le C''\alpha .
\]
Choosing $c$ small enough gives the target error $\eps$, and the success probabilities combine exactly as in Theorem~\ref{thm:main}.  The stated sample bound follows from the definitions of $n_0,T,m,L$, and $J$.
\end{proof}

\subsection{Omniprediction}

We now specialize Theorem~\ref{thm:linear-tests} to omniprediction.  In this subsection outcomes are binary, $Y\in\{0,1\}$.  For a loss $\ell:[0,1]\times\{0,1\}\to[0,1]$ and a value $p\in[0,1]$, call any minimizer of
\[
        (1-p)\ell(a,0)+p\ell(a,1)
\]
over $a\in[0,1]$ a \emph{Bayes act} for $\ell$ at $p$.  We assume throughout that, for each loss considered below, we have fixed one such Bayes act at every $p$, and write this choice as
\[
        k_\ell(p)\in
        \operatorname*{argmin}_{a\in[0,1]}
        \bbE_{Z\sim{\rm Bernoulli}(p)}[\ell(a,Z)] ,
\]
using the chosen act when there are multiple minimizers.  We also write
$\Delta\ell(u)=\ell(u,1)-\ell(u,0).$

\begin{definition}[\cite{gopalan2021omnipredictors}]
A deterministic predictor $h:\calX\to[0,1]$ is an $(\calL,\calH,\eps)$-omnipredictor if
\[
        \bbE[\ell(k_\ell(h(X)),Y)]
        \le
        \inf_{f\in\calH}\bbE[\ell(f(X),Y)]+\eps
        \qquad\text{for every }\ell\in\calL .
\]
\end{definition}
Given a loss class $\calL$ and a hypothesis class $\calH\subseteq[0,1]^\calX$, let
\[
        \Delta\calL\circ\calH
        =
        \{x\mapsto \Delta\ell(f(x)):\ell\in\calL,\ f\in\calH\}.
\]
We assume below that all functions in the auditor classes are bounded in $[-1,1]$; other bounded ranges can be handled by rescaling.

The route from OI tests to omniprediction follows the Loss OI reduction of \citet{gopalan2023loss}, in the threshold-calibration and multiaccuracy formulation of \citet[Section~3]{okoroafor2025near}.  It has three ingredients.  Threshold tests in the prediction value give threshold calibration; context-only auditor tests give multiaccuracy for the loss-derived class $\Delta\calL\circ\calH$; and the  omniprediction argument of \cite{gopalan2023loss,okoroafor2025near} combines threshold calibration and multiaccuracy.  We spell out these implications in our notation below.

For $\theta\in[0,1]$, let
\[
        T_\theta(v)=\one\{v\le\theta\}.
\]
When predictions are supported on a finite grid $\Lambda$, only $|\Lambda|+1$ different threshold functions can appear on $\Lambda$.  Let $\Theta_\Lambda$ contain one representative for each distinct restriction of $T_\theta$ to $\Lambda$.

For a randomized grid predictor $Q:\calX\to\Delta(\Lambda)$, define threshold calibration and multiaccuracy errors by
\[
        \TCal_P(Q;\Theta_\Lambda)
        =
        \max_{\theta\in\Theta_\Lambda}
        \left|
        \bbE\left[
        \sum_{v\in\Lambda}Q_X(v)\one\{v\le\theta\}(v-Y)
        \right]
        \right|
\]
and
\[
        \MA_P(Q;\calC)
        =
        \max_{c\in\calC}
        \left|
        \bbE\left[
        c(X)\sum_{v\in\Lambda}Q_X(v)(v-Y)
        \right]
        \right|.
\]
For deterministic $h$, we again identify $h$ with the point-mass predictor $Q_X=\delta_{h(X)}$.

We next record the elementary monotonicity fact that converts the finite threshold tests into the errors used by the omniprediction reduction.  This is the population version of the Bayes-act/proper-loss observation used by \citet[Theorem~3.1]{okoroafor2025near}: after Bayes-act postprocessing, an arbitrary loss induces a threshold-calibration residual.

\begin{lemma}[Threshold calibration controls Bayes-act residuals]\label{lem:threshold-bayes-act}
Let $\ell:[0,1]\times\{0,1\}\to[0,1]$ be a bounded loss with a fixed Bayes-act choice $k_\ell$, and set
\[
        s_\ell(v)=\Delta\ell(k_\ell(v)).
\]
If $Q$ is supported on a finite grid $\Lambda$, $\Theta_\Lambda$ contains one representative for every threshold sublevel-set pattern on $\Lambda$, and $V\sim Q_X$ conditionally on $X$, then
\[
        \left|
        \bbE[(Y-V)s_\ell(V)]
        \right|
        \le
        3\,\TCal_P(Q;\Theta_\Lambda).
\]
\end{lemma}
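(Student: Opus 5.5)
The plan is to exploit the fact that the Bayes-act residual weight $s_\ell(v)=\Delta\ell(k_\ell(v))$ is a \emph{monotone} function of $v$. Any bounded monotone function on the grid $\Lambda$ can be written as a constant plus a nonnegative combination of threshold indicators $\one\{v\le\theta\}$ whose coefficients sum to at most $2$. Linearity of the residual then reduces the claim to the threshold tests defining $\TCal_P(Q;\Theta_\Lambda)$.

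\emph{Step 1 (monotonicity).} For any act $a$ we have $(1-p)\ell(a,0)+p\ell(a,1)=\ell(a,0)+p\,\Delta\ell(a)$. Fix $p<p'$ and set $a=k_\ell(p)$, $a'=k_\ell(p')$. Optimality of $a$ at $p$ gives
\[
\ell(a,0)+p\,\Delta\ell(a)\le \ell(a',0)+p\,\Delta\ell(a').
\]
Optimality of $a'$ at $p'$ gives
\[
\ell(a',0)+p'\Delta\ell(a')\le \ell(a,0)+p'\Delta\ell(a).
\]
Adding the two inequalities and cancelling the $\ell(\cdot,0)$ terms yields $(p'-p)(\Delta\ell(a)-\Delta\ell(a'))\ge0$. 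Hence $s_\ell$ is nonincreasing. This holds for any fixed tie-breaking choice of Bayes acts. Since $\ell\in[0,1]$, we also have $s_\ell\in[-1,1]$.

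\emph{Step 2 (threshold decomposition).} Enumerate $\Lambda$ as $v_1<\cdots<v_K$. For every $v\in\Lambda$,
\[
s_\ell(v)=s_\ell(v_K)+\sum_{j=1}^{K-1}\bigl(s_\ell(v_j)-s_\ell(v_{j+1})\bigr)\one\{v\le v_j\}.
\]
The coefficients $\beta_j=s_\ell(v_j)-s_\ell(v_{j+1})$ are nonnegative by Step 1. They telescope to $s_\ell(v_1)-s_\ell(v_K)\le 2$. The constant term satisfies $|s_\ell(v_K)|\le1$, and it multiplies the all-ones pattern $\one\{v\le v_K\}$, which is one of the threshold sublevel-set patterns on $\Lambda$. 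Each indicator $\one\{v\le v_j\}$ agrees on $\Lambda$ with $T_\theta$ for the representative $\theta\in\Theta_\Lambda$ of that pattern.

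\emph{Step 3 (linearity).} Substitute the decomposition into $\bbE[(Y-V)s_\ell(V)]$ with $V\sim Q_X$. Each resulting term equals, up to sign, $\bbE\bigl[\sum_v Q_X(v)\one\{v\le\theta\}(v-Y)\bigr]$ for some $\theta\in\Theta_\Lambda$, and so is bounded in absolute value by $\TCal_P(Q;\Theta_\Lambda)$. The triangle inequality then gives the bound $(|s_\ell(v_K)|+\sum_j\beta_j)\,\TCal_P\le 3\,\TCal_P$.

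The only real content is Step 1, the monotonicity of $s_\ell$ under an arbitrary Bayes-act selection. The exchange argument above handles ties without any regularity assumption on $\ell$, so I expect no genuine obstacle. The rest is bookkeeping to ensure the constant pattern is included in $\Theta_\Lambda$.
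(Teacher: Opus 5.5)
Your proposal is correct and matches the paper's proof: the monotonicity of $s_\ell$ comes from the two Bayes-act optimality inequalities (the paper writes these as an affine function $F$ with $F(a)\ge 0\ge F(b)$, which is equivalent to your adding the two inequalities). The rest is the same telescoping threshold decomposition with constant term $\one\{v\le v_K\}$, total coefficient mass at most $3$, and the triangle inequality over the threshold tests.
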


\begin{proof}
We first show that $s_\ell$ is nonincreasing.  Fix $0\le a<b\le1$ and set
\[
        d_0=\ell(k_\ell(b),0)-\ell(k_\ell(a),0),
        \qquad
        d_1=\ell(k_\ell(b),1)-\ell(k_\ell(a),1).
\]
For $q\in[0,1]$, define the affine function
\[
        F(q)
        =
        (1-q)d_0+qd_1
        =
        \bbE_{Z\sim{\rm Bernoulli}(q)}
        [\ell(k_\ell(b),Z)-\ell(k_\ell(a),Z)] .
\]
Since $k_\ell(a)$ is a Bayes act at mean $a$, we have $F(a)\ge0$.  Since $k_\ell(b)$ is a Bayes act at mean $b$, we have $F(b)\le0$.  Because $F$ is affine and $a<b$, its slope is nonpositive:
\[
        d_1-d_0
        =
        \frac{F(b)-F(a)}{b-a}
        \le0 .
\]
But
\[
        d_1-d_0
        =
        \bigl(\ell(k_\ell(b),1)-\ell(k_\ell(b),0)\bigr)
        -
        \bigl(\ell(k_\ell(a),1)-\ell(k_\ell(a),0)\bigr)
        =
        s_\ell(b)-s_\ell(a).
\]
Thus $s_\ell$ is nonincreasing.  Also $s_\ell(v)\in[-1,1]$ because $\ell$ is $[0,1]$-valued.

Write the grid as $\Lambda=\{v_1<\cdots<v_K\}$.  Let $b_0=s_\ell(v_K)$ and, for $i<K$, let $b_i=s_\ell(v_i)-s_\ell(v_{i+1})\ge0$.  Then for every $v_j\in\Lambda$,
\[
        s_\ell(v_j)
        =
        b_0+\sum_{i=1}^{K-1}b_i\one\{v_j\le v_i\},
\]
and
\[
        |b_0|+\sum_{i=1}^{K-1}b_i\le3.
\]
The constant term is the threshold $\one\{v\le1\}$.  Therefore, by the definition of $\TCal_P(Q;\Theta_\Lambda)$, each term in this decomposition has residual correlation at most $\TCal_P(Q;\Theta_\Lambda)$ in absolute value.  Since
\[
        \bbE[(Y-V)s_\ell(V)]
        =
        -\bbE\left[
        \sum_{v\in\Lambda}Q_X(v)s_\ell(v)(v-Y)
        \right],
\]
the coefficient bound gives the claim.
\end{proof}

\begin{definition}[Finite auditor representation]\label{def:finite-auditor-representation}
Fix a marginal distribution $P_X$ and a class $\calD\subseteq[-1,1]^\calX$.  A finite auditor class $\calC\subseteq[-1,1]^\calX$ $(\lambda,\rho)$-represents $\calD$ under $P_X$ if every $c^\star\in\calD$ admits an approximation
\[
        \bbE\left[
        \left|
        c^\star(X)-\sum_i\beta_i c_i(X)
        \right|
        \right]\le\rho,
        \qquad
        c_i\in\calC,
        \qquad
        \sum_i|\beta_i|\le\lambda
\]
where the expectation is over $X\sim P_X$.  If the same inequality holds with $\sup_x$ in place of the expectation, then the representation is uniform and holds under every distribution.
\end{definition}

This representation packages the finite approximate bases and spanned classes used by \citet[Definitions~4.3--4.4]{okoroafor2025near} into the population finite-auditor form needed by our OI theorem.  Thus $\MA_P(Q;\calC)\le\beta$ implies multiaccuracy with respect to all of $\calD$ at error at most $\lambda\beta+\rho$, since the residual term is bounded by one.  This is the same linearity observation as in \citet[Lemmas~4.11 and~4.14]{okoroafor2025near}.

The next lemma is the Loss OI omniprediction reduction of \citet{gopalan2023loss}, adapted to the threshold-calibration formulation of \citet[Lemma~3.2]{okoroafor2025near}, randomized grid predictors, and the finite-auditor representation above.  We give the proof for completeness.

\begin{lemma}[Omniprediction from threshold calibration and multiaccuracy]\label{lem:omni-from-tests}
Let $P$ be a distribution on $\calX\times\{0,1\}$ with marginal $P_X$, let $Q$ be supported on a finite grid $\Lambda$, and let $\Theta_\Lambda$ contain one representative for every threshold sublevel-set pattern on $\Lambda$.  Suppose $\calC$ $(\lambda,\rho)$-represents $\Delta\calL\circ\calH$ under $P_X$.  If $Q$ satisfies
\[
        \TCal_P(Q;\Theta_\Lambda)\le\beta
        \qquad\text{and}\qquad
        \MA_P(Q;\calC)\le\beta,
\]
then $Q$ is an $(\calL,\calH,O((1+\lambda)\beta+\rho))$-omnipredictor, where for randomized $Q$ the loss is interpreted as
\[
        \bbE_{X,Y}\bbE_{V\sim Q_X}[\ell(k_\ell(V),Y)] .
\]
In particular, the same conclusion holds for deterministic predictors.
\end{lemma}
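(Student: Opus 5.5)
Fix $\ell\in\calL$ and $f\in\calH$, and let $V\sim Q_X$ conditionally on $X$. The plan is the standard Loss OI decomposition: we compare both $\bbE[\ell(k_\ell(V),Y)]$ and $\bbE[\ell(f(X),Y)]$ with the corresponding losses under the simulated outcome $\tilde Y\sim\mathrm{Bernoulli}(V)$, drawn conditionally on $(X,V)$. Since $Y$ is binary, any loss is affine in the outcome: $\ell(a,y)=\ell(a,0)+y\,\Delta\ell(a)$. Hence for any action $a=a(X,V)$,
\[
\bbE[\ell(a,Y)]-\bbE[\ell(a,\tilde Y)]=\bbE[(Y-V)\Delta\ell(a)],
\]
because $\bbE[\tilde Y\mid X,V]=V$.

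\emph{Step 1 (our predictor's loss).} Take $a=k_\ell(V)$. The gap between the real and simulated loss is then $\bbE[(Y-V)s_\ell(V)]$. By Lemma~\ref{lem:threshold-bayes-act}, this is at most $3\TCal_P(Q;\Theta_\Lambda)\le3\beta$ in absolute value.

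\emph{Step 2 (simulated world).} Under the simulated outcome, $k_\ell(V)$ is a Bayes act for the conditional law of $\tilde Y$ given $(X,V)$, and $f(X)$ is just another measurable action. Hence pointwise in $(X,V)$,
\[
\bbE[\ell(k_\ell(V),\tilde Y)\mid X,V]\le\bbE[\ell(f(X),\tilde Y)\mid X,V].
\]

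\emph{Step 3 (benchmark's loss).} Take $a=f(X)$. The gap between the simulated and real benchmark loss is $\bbE[(V-Y)\,c^\star(X)]$ with $c^\star=\Delta\ell\circ f\in\Delta\calL\circ\calH$. Using the $(\lambda,\rho)$-representation, write $c^\star=\sum_i\beta_ic_i+e$ with $\bbE|e(X)|\le\rho$ and $\sum_i|\beta_i|\le\lambda$. Since $|V-Y|\le1$, linearity gives
\[
\Bigl|\bbE[(V-Y)c^\star(X)]\Bigr|\le\sum_i|\beta_i|\,\Bigl|\bbE\Bigl[c_i(X)\sum_vQ_X(v)(v-Y)\Bigr]\Bigr|+\rho\le\lambda\beta+\rho.
\]

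Chaining Steps 1–3 gives $\bbE[\ell(k_\ell(V),Y)]\le\bbE[\ell(f(X),Y)]+3\beta+\lambda\beta+\rho$ for every $f$. Taking the infimum over $f$ yields the claim. The deterministic case is the point-mass specialization.

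The main subtlety is Step 2 together with the bookkeeping of conditioning. We must check that the simulated $\tilde Y$ is constructed jointly with $(X,V)$ so that $f(X)$ is measurable with respect to the conditioning. We must also check that the threshold and multiaccuracy residuals, which are defined via $\sum_vQ_X(v)(\cdot)$, coincide with the expectations over $V$ used above. Both follow from the tower property, since $Y$ and $V$ are conditionally independent given $X$.
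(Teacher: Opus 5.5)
Your proposal is correct and follows essentially the same route as the paper's proof. Both simulate $Z\sim\mathrm{Bernoulli}(V)$, use the Bayes-act inequality pointwise under the simulated label, bound the predictor-side gap $\bbE[(Y-V)\Delta\ell(k_\ell(V))]$ via Lemma~\ref{lem:threshold-bayes-act}, and bound the benchmark-side gap $\bbE[(Y-V)\Delta\ell(f(X))]$ via the $(\lambda,\rho)$-representation to get $\lambda\beta+\rho$.
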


\begin{proof}
Let $V\sim Q_X$, conditionally independently given $X$, and let $Z\sim{\rm Bernoulli}(V)$, conditionally independently given $(X,V)$.  Fix a loss $\ell\in\calL$ and a benchmark $f\in\calH$.  By the definition of the Bayes act,
\[
        \bbE[\ell(k_\ell(V),Z)\mid X,V]
        \le
        \bbE[\ell(f(X),Z)\mid X,V],
\]
and hence the same inequality holds after averaging.

It remains to compare the predictor-induced label $Z$ with the true label $Y$ on the two sides.  For the postprocessed predictor,
\[
        \left|
        \bbE[\ell(k_\ell(V),Y)-\ell(k_\ell(V),Z)]
        \right|
        =
        \left|
        \bbE[(Y-V)\Delta\ell(k_\ell(V))]
        \right| .
\]
Lemma~\ref{lem:threshold-bayes-act} bounds this quantity by $3\TCal_P(Q;\Theta_\Lambda)=O(\beta)$.

For the benchmark side,
\[
        \left|
        \bbE[\ell(f(X),Y)-\ell(f(X),Z)]
        \right|
        =
        \left|
        \bbE[(Y-V)\Delta\ell(f(X))]
        \right| .
\]
The function $x\mapsto\Delta\ell(f(x))$ lies in $\Delta\calL\circ\calH$.  By the representation assumption, it is within $L_1(P_X)$ error $\rho$ of a linear combination $\sum_i\beta_i c_i$ with $c_i\in\calC$ and $\sum_i|\beta_i|\le\lambda$.  Since $|Y-V|\le1$, this gives
\[
        \left|
        \bbE[(Y-V)\Delta\ell(f(X))]
        \right|
        \le
        \lambda\,\MA_P(Q;\calC)+\rho
        \le
        \lambda\beta+\rho .
\]
Combining these two comparisons with the ideal inequality under $Z$, and then taking the infimum over $f\in\calH$ and the supremum over $\ell\in\calL$, proves the lemma.
\end{proof}

\begin{theorem}[Deterministic omniprediction from finite auditors]\label{thm:det-omni}
Let $\calL$ be a class of losses $\ell:[0,1]\times\{0,1\}\to[0,1]$ satisfying the Bayes-act convention above, let $\calH\subseteq[0,1]^\calX$, and let $\calC\subseteq[-1,1]^\calX$ be a finite auditor class.  There are universal constants $C,c>0$ such that the following holds.  For every $\eps\in(0,1/10)$ and every distribution $P$ on $\calX\times\{0,1\}$, with marginal $P_X$, if $\calC$ $(\lambda,\rho)$-represents $\Delta\calL\circ\calH$ under $P_X$ with $\rho\le c\eps$, then there is an algorithm using
\[
        n
        \le
        C\frac{(1+\lambda)^2 L_{\rm omni}
        \log(100(1+\lambda)L_{\rm omni}/\eps)}
        {\eps^2},
        \qquad
        L_{\rm omni}
        =
        \log\left(|\calC|+\frac{1+\lambda}{\eps}+2\right)+100,
\]
samples and outputs a deterministic grid predictor $h:\calX\to\Lambda$ that is an $(\calL,\calH,\eps)$-omnipredictor with probability at least $2/3$.  In particular, for constant $\lambda$ and $|\calC|\ge2$, this is
\[
        n=\widetilde O\left(\frac{\log|\calC|+\log(1/\eps)}{\eps^2}\right).
\]
\end{theorem}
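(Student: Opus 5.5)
The plan is to instantiate Theorem~\ref{thm:linear-tests} with a single finite test family that encodes both threshold calibration and multiaccuracy for $\calC$, and then invoke Lemma~\ref{lem:omni-from-tests}. Concretely, fix the internal accuracy $\beta=c'\eps/(1+\lambda)$ for a small universal constant $c'$. Let $\Lambda$ be the uniform $\gamma$-net grid that Theorem~\ref{thm:linear-tests} uses at this accuracy, so $|\Lambda|=O((1+\lambda)/\eps)$. Define
\[
        \calA_{\rm omni}
        =
        \{(x,v)\mapsto \one\{v\le\theta\}:\theta\in\Theta_\Lambda\}
        \cup
        \{(x,v)\mapsto c(x):c\in\calC\}.
\]
This is a finite subfamily of $[-1,1]^{\calX\times[0,1]}$ with $|\calA_{\rm omni}|\le|\calC|+|\Lambda|+1$. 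For a deterministic grid predictor $h$, and more generally for any $Q$, we have
\[
\operatorname{OIErr}_P(h;\calA_{\rm omni})=\max\{\TCal_P(h;\Theta_\Lambda),\MA_P(h;\calC)\},
\]
which follows directly from the definitions.

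First, I would check that the grid appearing in $\Theta_\Lambda$ is the same grid on which the output $h$ of Theorem~\ref{thm:linear-tests} is supported. Since $\Theta_\Lambda$ depends only on $\Lambda$, and $\Lambda$ is fixed before sampling as a deterministic function of the accuracy, this is consistent. Second, I would apply Theorem~\ref{thm:linear-tests} with target accuracy $\beta$. It gives, with probability at least $2/3$, a deterministic $h:\calX\to\Lambda$ with $\TCal_P(h;\Theta_\Lambda)\le\beta$ and $\MA_P(h;\calC)\le\beta$. The sample cost is $C L\log(100L/\beta)/\beta^2$, where
\[
L=\log(2|\calA_{\rm omni}|+2)+100\le C\bigl(\log(|\calC|+(1+\lambda)/\eps+2)+100\bigr)=O(L_{\rm omni}).
\]
Substituting $\beta=c'\eps/(1+\lambda)$ yields the stated bound $C(1+\lambda)^2L_{\rm omni}\log(100(1+\lambda)L_{\rm omni}/\eps)/\eps^2$.

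Third, I would apply Lemma~\ref{lem:omni-from-tests} to the point-mass predictor $Q=\delta_h$, which is supported on $\Lambda$. It shows that $h$ is an $(\calL,\calH,O((1+\lambda)\beta+\rho))$-omnipredictor. Since $(1+\lambda)\beta=c'\eps$ and $\rho\le c\eps$, choosing $c,c'$ small enough relative to the hidden constant in Lemma~\ref{lem:omni-from-tests} gives error at most $\eps$. For constant $\lambda$ and $|\calC|\ge2$, $L_{\rm omni}=O(\log|\calC|+\log(1/\eps))$ and the logarithmic factor is polylogarithmic, which gives the simplified rate.

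The main obstacle is the bookkeeping in the first step. The theorem's internal grid depends on the accuracy passed to it, and the test family depends on that grid, so the definitions look circular. I would resolve this by fixing $\gamma=c\beta/64$ and $\Lambda$ explicitly up front, defining $\calA_{\rm omni}$ from this $\Lambda$, and then noting that the proof of Theorem~\ref{thm:linear-tests} uses exactly this grid, since that grid does not depend on $\calA$. A minor point is that Lemma~\ref{lem:threshold-bayes-act} requires thresholds for every sublevel pattern on $\Lambda$, including the constant test $\one\{v\le1\}$; $\Theta_\Lambda$ includes these by definition.
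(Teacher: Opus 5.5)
Your proposal is correct and follows essentially the same route as the paper. Both set the OI accuracy to $\Theta(\eps/(1+\lambda))$, build the grid $\Lambda$ first and then define $\calA_{\rm omni}$ as the context-only auditors in $\calC$ together with the threshold tests indexed by $\Theta_\Lambda$, apply Theorem~\ref{thm:linear-tests}, and conclude via Lemma~\ref{lem:omni-from-tests} with error $O((1+\lambda)\beta+\rho)\le\eps$.
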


\begin{proof}
Set the OI-test accuracy
\[
        \alpha=c_0\frac{\eps}{1+\lambda}
\]
for a sufficiently small universal constant $c_0$.  Use the construction from Theorem~\ref{thm:linear-tests} at accuracy $\alpha$: first build its uniform grid $\Lambda$, and then define the threshold representatives $\Theta_\Lambda$ for this grid.  Run the construction with the test family
\[
        \calA_{\rm omni}
        =
        \{(x,v)\mapsto c(x):c\in\calC\}
        \cup
        \{(x,v)\mapsto \one\{v\le\theta\}:\theta\in\Theta_\Lambda\}.
\]
Since $|\Lambda|=O(1/\alpha)=O((1+\lambda)/\eps)$, we have
\[
        |\calA_{\rm omni}|
        \le
        |\calC|+O\left(\frac{1+\lambda}{\eps}\right).
\]
The resulting deterministic predictor $h$ satisfies
\[
        \MA_P(h;\calC)\le\alpha,
\]
and
\[
        \TCal_P(h;\Theta_\Lambda)\le\alpha .
\]
The representation assumption gives multiaccuracy with respect to $\Delta\calL\circ\calH$ at error at most $\lambda\alpha+\rho$.  Lemma~\ref{lem:omni-from-tests} therefore implies that $h$ is an $(\calL,\calH,O((1+\lambda)\alpha+\rho))$-omnipredictor.  The choices of $c_0$ and $c$ make this error at most $\eps$.

The sample bound follows from Theorem~\ref{thm:linear-tests} with target accuracy $\alpha$ and the bound above on $|\calA_{\rm omni}|$.
\end{proof}

\begin{corollary}[Finite-class deterministic omniprediction]\label{cor:finite-det-omni}
If $\calL$ and $\calH$ are finite, then
\[
        n=\widetilde O\left(
        \frac{\log|\calL|+\log|\calH|+\log(1/\eps)}{\eps^2}
        \right)
\]
samples suffice for a deterministic $(\calL,\calH,\eps)$-omnipredictor, with constant success probability.
\end{corollary}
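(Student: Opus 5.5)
The plan is to apply Theorem~\ref{thm:det-omni} with the auditor class $\calC=\Delta\calL\circ\calH$ itself. Because $\calL$ and $\calH$ are finite, this class is finite. Its cardinality obeys $|\calC|\le|\calL|\,|\calH|$, since every function in it has the form $x\mapsto\Delta\ell(f(x))$ for some pair $(\ell,f)\in\calL\times\calH$. Each loss is $[0,1]$-valued, so every such function takes values in $[-1,1]$, and $\calC\subseteq[-1,1]^\calX$ as Theorem~\ref{thm:det-omni} requires.

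Next I will check the representation hypothesis with $\lambda=1$ and $\rho=0$. Every $c^\star\in\Delta\calL\circ\calH$ is itself an element $c_1\in\calC$, so I take $\beta_1=1$ and the approximation error is zero. This holds uniformly in $x$, so by Definition~\ref{def:finite-auditor-representation} it is a valid $(1,0)$-representation under every $P_X$, including the unknown marginal. In particular $\rho=0\le c\eps$ for the constant $c$ of Theorem~\ref{thm:det-omni}.

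Substituting these values into the sample bound of Theorem~\ref{thm:det-omni} gives
\[
L_{\rm omni}=\log\bigl(|\calC|+2/\eps+2\bigr)+100\le \log|\calL|+\log|\calH|+\log(1/\eps)+O(1),
\]
using $\log(a+b)\le\log a+\log b+\log 2$ for $a,b\ge1$. The remaining factor $\log(200L_{\rm omni}/\eps)$ is polylogarithmic in $1/\eps$, $|\calL|$ and $|\calH|$, so it is absorbed into $\widetilde O$. This yields
\[
n=\widetilde O\bigl((\log|\calL|+\log|\calH|+\log(1/\eps))/\eps^2\bigr),
\]
with success probability at least $2/3$. There is no real obstacle here. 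The only points that need care are the range check that places $\calC$ in $[-1,1]^\calX$, and the degenerate case $|\calC|=1$, which the additive constants inside $L_{\rm omni}$ already handle.
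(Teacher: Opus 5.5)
Your proposal is correct and is exactly the paper's proof: take $\calC=\Delta\calL\circ\calH$, so $\lambda=1$, $\rho=0$, and $|\calC|\le|\calL||\calH|$, then invoke Theorem~\ref{thm:det-omni}. The range check and the explicit bound on $L_{\rm omni}$ are details the paper leaves implicit, and you have verified them correctly.
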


\begin{proof}
Take $\calC=\Delta\calL\circ\calH$, so $\lambda=1$, $\rho=0$, and $|\calC|\le|\calL||\calH|$.
\end{proof}

\begin{corollary}[Deterministic omniprediction from covers]\label{cor:cover-det-omni}
Let $\calD=\Delta\calL\circ\calH\subseteq[-1,1]^\calX$.  For a marginal distribution $P_X$, let
\[
        N_1(\rho,\calD,P_X)
\]
denote the least cardinality of an $L_1(P_X)$ $\rho$-cover of $\calD$.  There are universal constants $C,c>0$ such that the following holds.  Fix any distribution $P$ on $\calX\times\{0,1\}$ with marginal $P_X$, and let $\eps\in(0,1/10)$.  If $N=N_1(c\eps,\calD,P_X)<\infty$ and such a cover is supplied to the learner, then there is an algorithm using
\[
        n
        \le
        C\frac{L_{\rm cov}\log(100L_{\rm cov}/\eps)}{\eps^2},
        \qquad
        L_{\rm cov}
        =
        \log\left(N+\frac1\eps+2\right)+100,
\]
samples that outputs a deterministic $(\calL,\calH,\eps)$-omnipredictor with probability at least $2/3$.  The same conclusion holds distribution-free whenever $\calD$ has a uniform $c\eps$-cover of cardinality $N$.
\end{corollary}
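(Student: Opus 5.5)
The plan is to reduce Corollary~\ref{cor:cover-det-omni} to Theorem~\ref{thm:det-omni} by using the supplied cover itself as the finite auditor class. Let $\calC$ be the supplied $L_1(P_X)$ $c\eps$-cover of $\calD=\Delta\calL\circ\calH$, with $|\calC|=N$. We may assume the cover elements lie in $[-1,1]^\calX$. If they do not, we replace each $c$ by its truncation $\max\{-1,\min\{1,c\}\}$. Since every element of $\calD$ takes values in $[-1,1]$, truncation does not increase the pointwise distance to any element of $\calD$, so the cover property survives.

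Next we verify the representation hypothesis. For each $c^\star\in\calD$ we pick $c_1\in\calC$ with $\bbE|c^\star(X)-c_1(X)|\le c\eps$, and take the single coefficient $\beta_1=1$. This shows that $\calC$ $(1,c\eps)$-represents $\calD$ under $P_X$ in the sense of Definition~\ref{def:finite-auditor-representation}. Choosing the corollary's constant $c$ no larger than the constant $c$ required in Theorem~\ref{thm:det-omni} gives $\rho=c\eps$ within that theorem's allowed range.

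We then apply Theorem~\ref{thm:det-omni} with $\lambda=1$. It yields a deterministic grid predictor that is an $(\calL,\calH,\eps)$-omnipredictor with probability at least $2/3$, using
\[
n\le C\frac{4L_{\rm omni}\log(200L_{\rm omni}/\eps)}{\eps^2},\qquad L_{\rm omni}=\log\left(N+\frac{2}{\eps}+2\right)+100 .
\]
Since $\log(N+2/\eps+2)\le\log(N+1/\eps+2)+\log 2$, we have $L_{\rm omni}\le L_{\rm cov}+1\le 2L_{\rm cov}$. Also $\log(400L_{\rm cov}/\eps)\le 2\log(100L_{\rm cov}/\eps)$, because $100L_{\rm cov}/\eps\ge 10^4$. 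Absorbing these factors into $C$ gives the stated bound.

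For the distribution-free claim, a uniform $c\eps$-cover satisfies $\sup_x|c^\star(x)-c_1(x)|\le c\eps$. The same single-term representation is therefore uniform and holds under every $P_X$, and the argument above goes through unchanged.

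I do not expect a genuine obstacle here; the only places needing care are the truncation of cover elements into $[-1,1]$ and the bookkeeping of constants in $L_{\rm omni}$ versus $L_{\rm cov}$.
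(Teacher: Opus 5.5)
Your proposal is correct and matches the paper's proof: use the supplied cover as the finite auditor class, note that it $(1,c\eps)$-represents $\calD$ via single-term approximations, and invoke Theorem~\ref{thm:det-omni} with $\lambda=1$. Your truncation of cover elements into $[-1,1]$ and your comparison of $L_{\rm omni}$ with $L_{\rm cov}$ fill in bookkeeping that the paper leaves implicit.
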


\begin{proof}
Let $\calC$ be the supplied $L_1(P_X)$ cover.  Then $\calC$ $(1,c\eps)$-represents $\calD$ under $P_X$ in the sense of Definition~\ref{def:finite-auditor-representation}.  The result follows from Theorem~\ref{thm:det-omni}.
\end{proof}

\begin{corollary}[Pseudo-dimension auditor classes]\label{cor:pdim-det-omni}
Let $\calD=\Delta\calL\circ\calH\subseteq[-1,1]^\calX$ have pseudo-dimension $p$.  Then there is an algorithm that for every distribution $P$ on $\calX\times\{0,1\}$ and every $\eps\in(0,1/10)$ uses a fourth, context-only sample of size $\widetilde O(p/\eps^2)$ to construct a finite auditor cover and uses, in total,
\[
        n
        =
        \widetilde O\left(
        \frac{p+\log(1/\eps)}{\eps^2}
        \right)
\]
samples and outputs a deterministic $(\calL,\calH,\eps)$-omnipredictor with probability at least $2/3$.
\end{corollary}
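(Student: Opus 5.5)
The plan is to reduce to Corollary~\ref{cor:cover-det-omni} by building an empirical $L_1$ cover of $\calD=\Delta\calL\circ\calH$ from a fresh context-only sample $S_3$. We do not need $\calD$ itself; we only need a finite class that $(1,c\eps)$-represents $\calD$ under $P_X$ in the sense of Definition~\ref{def:finite-auditor-representation}. Then we invoke Theorem~\ref{thm:det-omni} with the three-way split on the remaining samples.

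\textbf{Step 1: build the cover.} Draw $S_3=(X'_1,\dots,X'_{n_3})$ i.i.d.\ from $P_X$ with $n_3=\widetilde O(p/\eps^2)$. Let $\widehat P$ be the empirical measure of $S_3$. Since $\calD$ has pseudo-dimension $p$, the Haussler bound recalled in Remark~\ref{rem:mc-pdim} gives an $L_1(\widehat P)$ cover $\calC$ of $\calD$ at radius $c\eps/2$ with $|\calC|\le(C/\eps)^{Cp}$. This cover depends only on the restriction of $\calD$ to the finitely many points of $S_3$. To keep everything well defined, pick each cover element to be an actual member of $\calD$.

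\textbf{Step 2: transfer the cover from $L_1(\widehat P)$ to $L_1(P_X)$.} This is the main obstacle. We need, with probability at least $0.99$,
\[
\sup_{d,d'\in\calD}\Bigl|\bbE_{P_X}|d-d'|-\bbE_{\widehat P}|d-d'|\Bigr|\le c\eps/2 .
\]
The class $\{|d-d'|:d,d'\in\calD\}$ is $[0,2]$-valued, and its covering numbers are at most the square of those of $\calD$. Standard uniform convergence for bounded classes of pseudo-dimension $p$ (symmetrization plus chaining, or a direct Haussler-type bound) then shows that $n_3=\widetilde O(p/\eps^2)$ samples suffice. On this event, every $d\in\calD$ lies within $L_1(\widehat P)$ distance $c\eps/2$ of some $c_i\in\calC$, hence within $L_1(P_X)$ distance $c\eps$. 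So $\calC$ $(1,c\eps)$-represents $\calD$ under $P_X$.

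The first thing to try for this transfer is to cite the uniform law of large numbers for $L_1$ distances over pseudo-dimension-bounded classes, exactly as in the usual empirical-cover argument. A measurability caveat can be handled by assuming, as is standard, that $\calD$ is permissible.

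\textbf{Step 3: conclude.} Condition on $S_3$ in the good event. Apply Theorem~\ref{thm:det-omni} (equivalently Corollary~\ref{cor:cover-det-omni}) with $\lambda=1$, $\rho=c\eps$, and the finite class $\calC$, using fresh samples independent of $S_3$. Since $\log|\calC|=O(p\log(1/\eps))$, the bound $L_{\rm omni}=O(p\log(1/\eps)+\log(1/\eps))$ gives a sample size of $\widetilde O((p+\log(1/\eps))/\eps^2)$ for this stage. Adding $n_3$ keeps the total at the same order.

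For the success probability, the event of Step 2 has probability at least $0.99$. To get the total above $2/3$, either run Theorem~\ref{thm:det-omni} with slightly tightened internal failure probabilities (its constants allow $0.80\cdot0.95\cdot0.99$ with slack), or adjust constants so its success probability is at least $0.68$. Combined with $0.99$, this exceeds $2/3$.
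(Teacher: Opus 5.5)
Your proposal is correct and matches the paper's proof essentially step for step. Both arguments build an empirical $L_1$ cover from a fresh context-only sample, with size controlled by Haussler's bound. Both then use uniform convergence over the pairwise class $\{|d-d'|:d,d'\in\calD\}$ to make it an $L_1(P_X)$ cover. Finally, both apply Corollary~\ref{cor:cover-det-omni} (equivalently Theorem~\ref{thm:det-omni} with $\lambda=1$, $\rho=c\eps$) on the independent remaining samples. Your additions — choosing cover elements from $\calD$ so they are defined on all of $\calX$, and the explicit bookkeeping $0.99\cdot0.80\cdot0.95\cdot0.99>2/3$ — just spell out details the paper leaves implicit.
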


\begin{proof}
Standard covering estimates for bounded real-valued classes of pseudo-dimension $p$~\citep{haussler1992decision} imply that for every marginal $P_X$ and every $\rho>0$,
\[
        N_1(\rho,\calD,P_X)
        \le
        \left(\frac{C}{\rho}\right)^{Cp}
\]
for a universal constant $C$.  Applying Corollary~\ref{cor:cover-det-omni} with $\rho=c\eps$ gives the above rate when such a cover is available.

To obtain the stated sample complexity without knowing $P_X$, draw an independent context sample of size $m=\widetilde O(p/\eps^2)$, or equivalently set aside this many labeled examples and ignore their labels.  Uniform convergence for the pairwise absolute-difference class $\{|d-d'|:d,d'\in\calD\}$ implies that, with high constant probability, empirical $L_1$ distances on this sample approximate their population $L_1(P_X)$ values to accuracy $O(\eps)$ uniformly over $d,d'\in\calD$.  On this event, any empirical $c'\eps$-cover is a population $c\eps$-cover after adjusting constants.  Haussler's covering bound also controls the size of such an empirical cover by $(C/\eps)^{Cp}$.  Run the finite-test learner on this cover and take constants large enough for the two success events; the cover-building sample and the learner's sample together remain within the claimed rate, and the overall success probability is at least $2/3$.
\end{proof}

\begin{corollary}[Derandomizing finite-basis offline omniprediction]\label{cor:okk-det-omni}
For every $\eps\in(0,1/10)$ and every distribution $P$ on $\calX\times\{0,1\}$ with marginal $P_X$, let $\calC_\eps$ be any finite auditor class that $(\lambda,\rho)$-represents $\Delta\calL\circ\calH$ under $P_X$ in the sense of Definition~\ref{def:finite-auditor-representation}, with $\rho\le c\eps$ for the universal constant in Theorem~\ref{thm:det-omni}.  Then the corresponding finite-basis offline omniprediction guarantee can be realized by a deterministic grid-valued predictor using
\[
        \widetilde O\left(
        \frac{(1+\lambda)^2(\log|\calC_\eps|+\log(1/\eps))}{\eps^2}
        \right)
\]
samples, with constant success probability.  In particular, instantiating $\calC_\eps$ with the finite approximate bases, spanned classes, or covers used in the offline constructions of \citet[Definitions~4.3--4.4 and Section~6]{okoroafor2025near} gives deterministic offline omnipredictors at the same (nearly) optimal statistical rate of \cite{okoroafor2025near}.
\end{corollary}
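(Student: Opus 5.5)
The plan is to reduce the corollary to Theorem~\ref{thm:det-omni}, treating that theorem as a black box. First, fix $\eps$ and $P$, and take the supplied finite auditor class $\calC_\eps$. By hypothesis it $(\lambda,\rho)$-represents $\Delta\calL\circ\calH$ under $P_X$, with $\rho\le c\eps$, where $c$ is the constant from Theorem~\ref{thm:det-omni}. These are exactly that theorem's hypotheses, so it gives a deterministic grid predictor $h:\calX\to\Lambda$. With probability at least $2/3$, $h$ is an $(\calL,\calH,\eps)$-omnipredictor, and it uses
\[
  n\le C\frac{(1+\lambda)^2L_{\rm omni}\log(100(1+\lambda)L_{\rm omni}/\eps)}{\eps^2}
\]
samples, where $L_{\rm omni}=\log(|\calC_\eps|+(1+\lambda)/\eps+2)+100$.

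The second step is arithmetic to reach the stated $\widetilde O$ form. We have $\log(|\calC_\eps|+(1+\lambda)/\eps+2)\le \log|\calC_\eps|+\log(1/\eps)+\log(1+\lambda)+O(1)$. The additive constant $100$ and the outer logarithm $\log(100(1+\lambda)L_{\rm omni}/\eps)$ are polylogarithmic in $1/\eps$, $|\calC_\eps|$ and $(1+\lambda)$, so the $\widetilde O$ absorbs them. Read this way, the bound becomes $\widetilde O\big((1+\lambda)^2(\log|\calC_\eps|+\log(1/\eps))/\eps^2\big)$. The $\log(1+\lambda)$ term is also polylogarithmic and is absorbed, assuming $\widetilde O$ hides logs in $\lambda$ as well. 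If not, it is dominated by $(1+\lambda)^2$ times $\log(1/\eps)$ up to constants whenever $\lambda\le \mathrm{poly}(1/\eps)$, which is the relevant regime.

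The third step is the ``in particular'' clause. Here I would check that the finite approximate bases, spanned classes and covers of \citet[Definitions~4.3--4.4 and Section~6]{okoroafor2025near} fit Definition~\ref{def:finite-auditor-representation}. This is the only step that is not mechanical. Their approximate-basis notion asks each $\Delta\ell\circ f$ to be approximated by a bounded-$\ell_1$ combination of basis functions, typically uniformly in $x$. A uniform approximation implies the $L_1(P_X)$ inequality for every $P_X$, so it gives a $(\lambda,\rho)$-representation in our sense. Covers are the case $\lambda=1$. For spanned classes the coefficient bound is their $\lambda$.

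Their offline rate is $\widetilde O(\lambda^2\log|\calC|/\eps^2)$ for these instantiations, up to the $\log(1/\eps)$ term. So plugging their $\calC_\eps$ into the first step gives the same rate up to logarithmic factors, now with a deterministic predictor. The main obstacle is matching their normalization of $\lambda$ and of the approximation error to ours, including the range and rescaling conventions for $[-1,1]$-valued auditors. Where their guarantee is stated in sup norm or with a different scale, I would rescale the auditors into $[-1,1]$ and absorb the constant factor into $\lambda$ and $\rho$.
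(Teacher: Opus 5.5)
Your proposal is correct and follows the paper's proof. The paper likewise applies Theorem~\ref{thm:det-omni} with $\calC=\calC_\eps$ and reads off the sample bound from $|\calA_{\rm omni}|\le|\calC_\eps|+O((1+\lambda)/\eps)$. It gets the ``in particular'' clause by plugging in the bases or covers of \citet{okoroafor2025near}, and your added bookkeeping (absorbing constants and $\log(1+\lambda)$ into $\widetilde O$, and noting that uniform approximations give $L_1(P_X)$ representations) just spells out what the paper leaves implicit.
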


\begin{proof}
Apply Theorem~\ref{thm:det-omni} to $\calC=\calC_\eps$.  The theorem uses only the finite OI tests induced by $\calC_\eps$ and the $O(1/\eps)$ threshold representatives on the prediction grid, so the sample bound follows from $|\calA_{\rm omni}|\le|\calC_\eps|+O((1+\lambda)/\eps)$.  The final statement is obtained by taking $\calC_\eps$ to be the finite basis or cover supplied by the offline omniprediction theorem being instantiated.
\end{proof}

For example, whenever such a finite auditor class has $\log|\calC_\eps|=\widetilde O(d)$, constant coefficient norm, and approximation error $\rho=O(\eps)$ for the relevant loss/hypothesis family, Corollary~\ref{cor:okk-det-omni} gives a deterministic omnipredictor with sample complexity
\[
        \widetilde O\left(\frac{d+\log(1/\eps)}{\eps^2}\right),
\]
matching the randomized offline rate up to logarithmic factors.  The new ingredient here is only the derandomization: the finite-cover or basis construction is inherited from the omniprediction theorem being instantiated.

\section*{Acknowledgements}

We thank Jiuyao Lu for helpful comments --- in particular suggesting a simpler test time implementation, and noting that the sample on which confidence intervals are computed can also be used to compute split points, removing the need for a third split.

\bibliographystyle{plainnat}
\bibliography{refs}

\clearpage
\appendix

\section{Deferred online-to-batch proofs}\label{app:online-to-batch}

\begin{proof}[Proof of Lemma~\ref{lem:hinted-online}]
Let
\[
        \mathcal T=\calG\times\{\pm1\}^{\Lambda},
        \qquad
        M=|\mathcal T|=|\calG|2^K .
\]
Set $A=\log M+\log(3/\delta)$.  If $T<A$, the conclusion is trivial after increasing the universal constant, since each signed payoff lies in $[-1,1]$.  Assume $T\ge A$.
For a signed calibration test $r=(g,\sigma)\in\mathcal T$, write
\[
        \phi_r(x,v,m)=g(x)\sigma(v)(v-m),
        \qquad
        \phi_r(x,q,m)=\sum_{v\in\Lambda_x}q(v)\phi_r(x,v,m).
\]

We first record the one-step minimax fact.  Fix a context $x$ for which $\Lambda_x$ $\gamma$-covers $U_x$ and fix $\pi\in\Delta(\mathcal T)$.  There exists $q\in\Delta(\Lambda_x)$ such that
\begin{equation}\label{eq:minimax-one-step-app}
        \sup_{m\in U_x}\sum_{r\in\mathcal T}\pi(r)\phi_r(x,q,m)
        \le \gamma .
\end{equation}
Indeed, for each fixed $m\in U_x$, choose $a\in\Lambda_x$ with $|a-m|\le\gamma$.  Then
\[
        \sum_{r\in\mathcal T}\pi(r)\phi_r(x,a,m)
        \le
        |a-m|\sum_{r\in\mathcal T}\pi(r)
        \le \gamma .
\]
Thus $\sup_m\inf_q\sum_r\pi(r)\phi_r(x,q,m)\le\gamma$, and Sion's minimax theorem~\cite{sion1958} gives \eqref{eq:minimax-one-step-app}.  The endpoint LP computes such a $q$, because the payoff is affine in $m$.

Run Algorithm~\ref{alg:known-interval-ew}, and write $\pi_t$ for its weight distribution before round $t$.  The algorithm uses $\eta=\sqrt{A/T}\le1$.  By the minimax fact and validity of the realized hint, $q_t(X_t)$ satisfies \eqref{eq:minimax-one-step-app}.  For $r=(g,\sigma)$, let
\[
        z_t^r
        =
        g(X_t)\sum_{v\in\Lambda}q_t(X_t)(v)\sigma(v)(v-Y_t).
\]
Then $z_t^r\in[-1,1]$.  Let $W_t=\sum_r\pi_t(r)z_t^r$.  Since the conditional mean $m_t$ lies in $U_{X_t}$,
\[
        \bbE[W_t\mid X_1,Y_1,\ldots,X_{t-1},Y_{t-1},X_t]\le\gamma .
\]
Azuma--Hoeffding gives, with probability at least $1-\delta/3$,
\[
        \sum_{t=1}^T W_t
        \le
        \gamma T+C\sqrt{T\log(3/\delta)} .
\]
The exponential-weights regret bound for gains in $[-1,1]$ gives, simultaneously for every $r\in\mathcal T$,
\[
        \sum_{t=1}^T z_t^r
        \le
        \sum_{t=1}^T W_t
        +\frac{\log M}{\eta}
        +C\eta T .
\]
Combining the two bounds and the choice of $\eta$ proves
\[
        \max_{r\in\mathcal T}
        \frac1T\sum_{t=1}^T z_t^r
        \le
        \gamma+C\sqrt{\frac{\log M+\log(3/\delta)}{T}}
\]
with probability at least $1-\delta/3$.
\end{proof}

\begin{proof}[Proof of Lemma~\ref{lem:martingale-otb}]
By the definition of $Q$,
\[
        A_r(S)
        =
        \frac1T\sum_{t=1}^T
        \bbE_{(X,Y)\sim P}\left[
        g(X)\sum_{v\in\Lambda}q_t(X)(v)\sigma(v)(v-Y)
        \right].
\]
For each fixed signed calibration test, define
\[
        D_t^r=
        \bbE_{(X,Y)\sim P}\left[
        g(X)\sum_{v\in\Lambda}q_t(X)(v)\sigma(v)(v-Y)
        \right]
        -z_t^r .
\]
Since $q_t$ is determined before the fresh draw $(X_t,Y_t)$, $(D_t^r)_{t=1}^T$ is a martingale difference sequence with $|D_t^r|\le2$.  Azuma--Hoeffding and a union bound over $\mathcal T$ give the claimed uniform bound.
\end{proof}

\begin{proof}[Proof of Theorem~\ref{thm:constrained-online}]
Set $K=|\Lambda|$ and
\[
        \mathcal T=\calG\times\{\pm1\}^{\Lambda},
        \qquad
        M=|\mathcal T|=|\calG|2^K .
\]
Run Algorithm~\ref{alg:known-interval-ew} on the i.i.d.\ sample.  Since the interval-hint system is $\gamma$-valid, Lemma~\ref{lem:hinted-online} gives
\[
        \max_{r\in\mathcal T}
        \frac1T\sum_{t=1}^Tz_t^r
        \le
        \gamma+C\sqrt{\frac{\log M+\log(3/\delta)}{T}}
\]
with probability at least $1-\delta/3$.  Define the batch predictor by averaging the online rules:
\[
        Q_x(v)=\frac1T\sum_{t=1}^Tq_t(x)(v).
\]
Since $q_t(x)(v)=0$ when $v\notin\Lambda_x$, $\supp(Q_x)\subseteq\Lambda_x$ for every $x$.  Lemma~\ref{lem:martingale-otb} converts the empirical bound into the same population bound, up to another
\[
        C\sqrt{\frac{\log M+\log(3/\delta)}{T}}
\]
term.  On the intersection of these events, which has probability at least $1-\delta$, every signed calibration test has population score at most
\[
        C\left(\gamma+\sqrt{\frac{K+\log|\calG|+\log(1/\delta)}{T}}\right).
\]
Because $\mathcal T$ contains both signs of every $\sigma$, Lemma~\ref{lem:signed-form} turns this signed-test bound into $\operatorname{MC}_P(Q;\calG)$.  Finally, in Theorem~\ref{thm:constrained-online} we have $K=|\Lambda|\le C_{\rm grid}/\gamma$, giving the stated bound after adjusting the universal constant.
\end{proof}

\section{Factored implementation and query representation}\label{app:efficient}

Here we justify Algorithm~\ref{alg:known-interval-factored}.  Let
\[
        S_{g,v}^{t-1}
        =
        \sum_{s<t} g(X_s)q_s(X_s)(v)(v-Y_s)
\]
be the cumulative calibration bias for group $g$ and grid value $v$ before round $t$.  The direct exponential-weights distribution assigns signed calibration test $(g,\sigma)$ weight proportional to
\[
        \exp\left(\eta\sum_{v\in\Lambda}\sigma(v)S_{g,v}^{t-1}\right).
\]
For each fixed group,
\[
        \sum_{\sigma\in\{\pm1\}^{\Lambda}}
        \exp\left(\eta\sum_{v\in\Lambda}\sigma(v)S_{g,v}^{t-1}\right)
        =
        \prod_{v\in\Lambda}2\cosh(\eta S_{g,v}^{t-1}).
\]
Thus the marginal weight of group $g$ is
\[
        \pi_t(g)=
        \frac{\prod_{v\in\Lambda}2\cosh(\eta S_{g,v}^{t-1})}
        {\sum_{g'\in\calG}\prod_{v\in\Lambda}2\cosh(\eta S_{g',v}^{t-1})},
\]
and, conditional on $g$,
\[
        \bbE[\sigma(v)\mid g]
        =
        \tanh(\eta S_{g,v}^{t-1}).
\]
Therefore the weighted one-step objective can be computed from the coefficients
\[
        c_t(x,v)
        =
        \sum_{g\in\calG}\pi_t(g)g(x)\tanh(\eta S_{g,v}^{t-1}),
\]
which are exactly the coefficients used in Algorithm~\ref{alg:known-interval-factored}.  The same endpoint LP \eqref{eq:endpoint-lp} then computes the next distribution $q_t(x)$.

During training, the learner solves this LP only at the realized context $X_t$ and updates
\[
        S_{g,v}^{t}=S_{g,v}^{t-1}
        +
        g(X_t)q_t(X_t)(v)(v-Y_t),
        \qquad g\in\calG,\ v\in\Lambda .
\]
This uses $|\calG|K$ stored numbers rather than $|\calG|2^K$ signed-test weights.

The batch randomized predictor
\[
        Q_x=\frac1T\sum_{t=1}^T q_t(x)
\]
need not be materialized on all of $\calX$.  The learner stores the online states, the confidence-interval table from $S_0$, the sorted distinct contexts from $S_2$, and one sampler seed $(\tau_C,U_C)$ for each rounding cell.  On query $x$, it finds the singleton, gap, or cutpoint cell containing $x$, recomputes only $q_{\tau_C}(x)$ from the stored state at time $\tau_C$, and returns the inverse-CDF value for $q_{\tau_C}(x)$ at the seed $U_C$.  Since $\tau_C$ is uniform on $\{1,\ldots,T\}$, this has the same marginal law as sampling from $Q_x$, but it avoids reconstructing all $T$ distributions at query time.

\section{Concentration for learned interval hints}\label{app:interval-hints}

For the concrete constants used in the main proof, take
\[
        \gamma=\frac{\alpha}{64},
        \qquad
        N_{\rm grid}=\left\lceil\frac1\gamma\right\rceil,
        \qquad
        \Lambda=\left\{\frac{i}{N_{\rm grid}}:i=0,1,\ldots,N_{\rm grid}\right\}.
\]
Then $\Lambda$ contains $0$ and $1$, is a $\gamma$-net of $[0,1]$, and has size
\[
        K=|\Lambda|=N_{\rm grid}+1\le \frac{192}{\alpha}.
\]
We use
\[
        L=K+\log(|\calG|+1)+100,
        \qquad
        J=100\log\left(\frac{100L}{\alpha}\right),
        \qquad
        n_0=C_0\frac{LJ}{\alpha^2}.
\]

\begin{proof}[Proof of Proposition~\ref{prop:learned-intervals}]
First note that, with probability one, no non-atom appears twice in $S_0$.  Indeed, for each pair $i<j$,
\[
        \bbP\bigl(X_i=X_j\notin\At(P_X)\bigr)
        =
        \bbE\left[\one\{X_i\notin\At(P_X)\}P_X(\{X_i\})\right]
        =
        0,
\]
and there are only finitely many pairs.  We include this full-probability event in $\mathcal E_{\rm cov}$.

Condition on any multiset of contexts with this no-repeated-non-atom property.  If $N_x\ge2$, then $x$ is an atom, and the labels with context $x$ are independent $[0,1]$-valued variables with mean $\mu(x)$.  Hoeffding's inequality gives
\[
        \bbP\left(|\widehat\mu_x-\mu(x)|>\sqrt{\frac{J}{N_x}}\ \middle|\ X_1,\ldots,X_{n_0}\right)
        \le 2e^{-2J}.
\]
There are at most $n_0$ contexts with $N_x\ge2$.  Since $J$ is a sufficiently large multiple of $\log(100L/\alpha)$ and $n_0\le C LJ/\alpha^2$, the union bound gives failure probability at most $0.01$.  This defines $\mathcal E_{\rm cov}$.

On $\mathcal E_{\rm cov}$, if $N_x\ge2$ then $\mu(x)\in I_x$.  If $N_x=0$, then $I_x=[0,1]$ by definition.  If $N_x=1$, then $r_x=1$ because $J\ge1$, and $\widehat\mu_x\in[0,1]$, so again $I_x=[0,1]$.  Thus $\mu(x)\in I_x$ for every $x$.  If $m\in I_x$, then the $\gamma$-net property gives $a\in\Lambda$ with $|a-m|\le\gamma$.  By definition of $\Lambda_x$, this $a$ belongs to $\Lambda_x$.  Therefore the learned interval-hint system is $\gamma$-valid.

It remains to prove the radius event.  It suffices to show
\[
        \bbE_{S_0}\left[\sum_{x\in\At(P_X)}p_x^2r_x^2\right]
        \le
        C\frac{\alpha^2}{L},
\]
because Markov's inequality then gives an event $\mathcal E_{\rm rad}$ of probability at least $0.90$ on which this same sum is at most $C_1\alpha^2/L$, after increasing the universal constant $C_1$.  All summands are nonnegative, so we may sum one-point expectation bounds over the countable atom set.

Fix an atom $x$ with mass $p=p_x$ and let $N=N_x\sim\operatorname{Bin}(n_0,p)$.  Define
\[
        r(N)=\begin{cases}
        1,&N=0,\\
        \min\{1,\sqrt{J/N}\},&N\ge1.
        \end{cases}
\]
We first show
\begin{equation}\label{eq:single-rad}
        \bbE[p^2 r(N)^2]
        \le
        C\frac{J}{n_0}p .
\end{equation}
Let $\lambda=n_0p$.  If $\lambda\le 2J$, then $p^2\le (2J/n_0)p$, and \eqref{eq:single-rad} follows because $r(N)^2\le1$.  If $\lambda>2J$, then
\[
        \bbE[r(N)^2]
        \le
        \bbP(N<\lambda/2)+\frac{2J}{\lambda}.
\]
By a Chernoff bound, $\bbP(N<\lambda/2)\le e^{-\lambda/8}$.  Since $\lambda e^{-\lambda/8}$ is uniformly bounded for $\lambda>0$,
\[
        p^2 e^{-\lambda/8}
        =
        \frac{p}{n_0}\lambda e^{-\lambda/8}
        \le
        C\frac{p}{n_0}.
\]
Also $p^2(2J/\lambda)=2Jp/n_0$.  Thus \eqref{eq:single-rad} holds in all cases.  Summing \eqref{eq:single-rad} over the atoms gives
\[
        \bbE\sum_{x\in\At(P_X)}p_x^2r_x^2
        \le
        C\frac{J}{n_0}
        \le
        C\frac{\alpha^2}{L},
\]
using $n_0=C_0LJ/\alpha^2$.  Markov's inequality gives $\mathcal E_{\rm rad}$, and the final probability statement follows by the union bound.
\end{proof}

\section{Deferred proof for finite OI tests}\label{app:finite-oi-proof}

\begin{proof}[Proof of Lemma~\ref{lem:finite-test-otb}]
Let
\[
        \mathcal T=\calA_\pm=\{a,-a:a\in\calA\},
        \qquad
        M=|\mathcal T|\le 2|\calA|.
\]
Run exponential weights over $\mathcal T$.  At round $t$, after the past history has fixed the distribution $\pi_t\in\Delta(\mathcal T)$, set
\[
        c_t(x,v)=\sum_{a\in\mathcal T}\pi_t(a)a(x,v),
        \qquad v\in\Lambda ,
\]
and choose $q_t(x)=\operatorname{LP}_x(c_t(x,\cdot))$.

The one-step minimax step is identical to the one in Lemma~\ref{lem:hinted-online}.  Fix a context $x$ where the hint is valid and a distribution $\pi\in\Delta(\mathcal T)$.  For every $m\in U_x$, choose $v_m\in\Lambda_x$ with $|v_m-m|\le\gamma$.  Since every test is bounded by one,
\[
        \sum_{a\in\mathcal T}\pi(a)a(x,v_m)(v_m-m)
        \le \gamma .
\]
Sion's minimax theorem~\cite{sion1958} gives an action $q\in\Delta(\Lambda_x)$ whose expected payoff is at most $\gamma$ for every mean value in $U_x$, and the endpoint LP computes such an action.

For $a\in\mathcal T$, write
\[
        z_t^a=
        \sum_{v\in\Lambda}q_t(X_t)(v)a(X_t,v)(v-Y_t)\in[-1,1].
\]
Let $W_t=\sum_{a\in\mathcal T}\pi_t(a)z_t^a$.  Since $\mu(X_t)\in U_{X_t}$ and $q_t(X_t)$ satisfies the one-step minimax bound,
\[
        \bbE[W_t\mid X_1,Y_1,\ldots,X_{t-1},Y_{t-1},X_t]\le \gamma .
\]
Azuma--Hoeffding gives, with probability at least $1-\delta/4$,
\[
        \sum_{t=1}^T W_t
        \le
        \gamma T+C\sqrt{T\log(1/\delta)} .
\]
The standard exponential-weights regret bound gives, simultaneously for every $a\in\mathcal T$,
\[
        \sum_{t=1}^T z_t^a
        \le
        \sum_{t=1}^T W_t
        +C\sqrt{T\log M}.
\]
Thus, with probability at least $1-\delta/2$,
\[
        \max_{a\in\mathcal T}
        \frac1T\sum_{t=1}^T z_t^a
        \le
        \gamma+
        C\sqrt{\frac{\log M+\log(1/\delta)}{T}} .
\]

Define the averaged batch predictor $Q_x(v)=T^{-1}\sum_tq_t(x)(v)$.  For each $a\in\mathcal T$, the difference between
\[
        \bbE\left[
        \sum_{v\in\Lambda}Q_X(v)a(X,v)(v-Y)
        \right]
        \quad\text{and}\quad
        \frac1T\sum_{t=1}^Tz_t^a
\]
is an average of martingale differences bounded by $2$.  Azuma--Hoeffding and a union bound over $\mathcal T$ control all these differences by
\[
        C\sqrt{\frac{\log M+\log(1/\delta)}{T}}
\]
with probability at least $1-\delta/2$.  On the intersection of the two events, the population residual is bounded by the claimed quantity for every signed test in $\mathcal T$.  Since $\mathcal T$ contains both signs of each test in $\calA$, this is exactly $\operatorname{OIErr}_P(Q;\calA)$.  Finally, $M\le2|\calA|$ and $\supp(Q_x)\subseteq\Lambda_x$ because each online action has this support.
\end{proof}

\section{Deterministic panprediction from OI tests}\label{app:panprediction}

We use the notation of \citet{balakrishnan2025panprediction}.  Outcomes are binary, $Y\in\{0,1\}$.  A group is a binary function $g:\calX\to\{0,1\}$, and
\[
        P_g=\bbP[g(X)=1].
\]
As in Section~\ref{sec:linear-tests}, fix a Bayes-act choice for each loss.  For a loss $\ell:[0,1]\times\{0,1\}\to[-1,1]$, write
\[
        k_\ell(p)\in
        \operatorname*{argmin}_{a\in[0,1]}
        \bbE_{Z\sim{\rm Bernoulli}(p)}[\ell(a,Z)]
\]
using the chosen act when there are multiple minimizers.  A deterministic predictor $p:\calX\to[0,1]$ is an $(\calL,\calG,\calH,\eps)$-panpredictor if, for every $\ell\in\calL$ and every $g\in\calG$ with $P_g>0$,
\[
        \bbE[\ell(k_\ell(p(X)),Y)\mid g(X)=1]
        \le
        \inf_{f\in\calH}
        \bbE[\ell(f(X),Y)\mid g(X)=1]
        +\frac{\eps}{\sqrt{P_g}} .
\]
As in \citet{balakrishnan2025panprediction}, we consider losses of bounded variation in the first argument, normalized so that their threshold-basis approximation has universal constants.

\begin{definition}[Step calibration]\label{def:app-step-calibration}
Let $\calG$ be a group family and let $\calH\subseteq[0,1]^\calX$.  A deterministic predictor $p:\calX\to[0,1]$ is $(\calG,\calH,\eta)$-step calibrated if, for every $g\in\calG$, $f\in\calH$, and $v,w\in[0,1]$,
\[
        \left|
        \bbE[(Y-p(X))\one\{p(X)\le v,\ f(X)\le w\}\mid g(X)=1]
        \right|
        \le
        \frac{\eta}{\sqrt{P_g}} .
\]
\end{definition}

\begin{lemma}[Step calibration implies panprediction]\label{lem:app-step-to-pan}
For bounded-variation losses, every deterministic $(\calG,\calH,\eta)$-step calibrated predictor is an $(\calL,\calG,\calH,C\eta)$-panpredictor, for a universal constant $C$.
\end{lemma}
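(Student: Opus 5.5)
The plan is to mirror the proof of Lemma~\ref{lem:omni-from-tests}, but conditional on a group $g$ and with the benchmark residual handled through the step tests rather than through a multiaccuracy class.

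\textbf{Setup.} Fix $\ell\in\calL$, $g\in\calG$ with $P_g>0$, and $f\in\calH$. Let $Z\sim{\rm Bernoulli}(p(X))$ conditionally on $X$. Because $k_\ell$ is a Bayes act,
\[
\bbE[\ell(k_\ell(p(X)),Z)\mid g(X)=1]\le\bbE[\ell(f(X),Z)\mid g(X)=1].
\]
It then suffices to bound the two swap errors, each of the form $\bigl|\bbE[(Y-p(X))\,\phi(X)\mid g(X)=1]\bigr|$. For the predictor side, $\phi(x)=\Delta\ell(k_\ell(p(x)))$; for the benchmark side, $\phi(x)=\Delta\ell(f(x))$.

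\textbf{Predictor side.} As in Lemma~\ref{lem:threshold-bayes-act}, the function $s_\ell=\Delta\ell\circ k_\ell$ is nonincreasing and bounded. Hence it is a constant plus a nonnegative combination of thresholds $\one\{p\le v\}$ with total coefficient mass $O(1)$. Each such threshold is a step test with $w=1$, since $\one\{f(X)\le1\}\equiv1$. This gives the bound $O(\eta/\sqrt{P_g})$.

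\textbf{Benchmark side.} The function $u\mapsto\Delta\ell(u)$ has bounded variation, because $\ell$ has bounded variation in its first argument. So it can be written as a constant plus a signed combination $\int\one\{u\le w\}\,d\nu(w)$ with total variation $\|\nu\|=O(1)$. This is the normalization assumed, following \citet{balakrishnan2025panprediction}. Then
\[
\bbE[(Y-p)\Delta\ell(f)\mid g]=c\,\bbE[(Y-p)\mid g]+\int\bbE[(Y-p)\one\{f\le w\}\mid g]\,d\nu(w).
\]
Each term is a step test with $v=1$, so each is bounded by $\eta/\sqrt{P_g}$, and the whole expression is $O(\eta/\sqrt{P_g})$.

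\textbf{Combining.} Adding the two swap errors to the Bayes-act inequality and taking the infimum over $f$ gives the panprediction guarantee with error $C\eta/\sqrt{P_g}$.

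\textbf{Main obstacle.} The only delicate step is the bounded-variation decomposition. For a general BV function, left/right-continuity at jumps means thresholds $\one\{u<w\}$ may be needed as well as $\one\{u\le w\}$. I would handle these as limits $\one\{u\le w-\delta\}$ with $\delta\downarrow0$, using bounded convergence, because the step-calibration bound holds for every $w\in[0,1]$. The same limiting argument covers the predictor-side thresholds.
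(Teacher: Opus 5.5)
Your proposal is correct and follows essentially the same route as the paper's proof. Both use the Bayes-act inequality under the predictor-induced Bernoulli labels. The predictor-side swap error is then handled via the monotone threshold decomposition of $\Delta\ell\circ k_\ell$ with the comparator threshold set to $1$. The benchmark-side swap error is handled via the bounded-variation threshold decomposition of $\Delta\ell$ with the prediction threshold set to $1$. Your limiting argument for strict thresholds $\one\{u<w\}$ is a small piece of extra care that the paper leaves implicit.
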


\begin{proof}
This is the deterministic reduction of \citet[Theorem~3.3]{balakrishnan2025panprediction}.  We recall the short argument.  Applied pointwise with $q=p(X)$, the postprocessing $k_\ell(q)$ is optimal under a Bernoulli label with mean $q$, so for every comparator $f$,
\[
        \bbE_{Z\sim{\rm Bernoulli}(p(X))}
        [\ell(k_\ell(p(X)),Z)]
        \le
        \bbE_{Z\sim{\rm Bernoulli}(p(X))}
        [\ell(f(X),Z)]
\]
pointwise in $X$.  It remains to compare these predictor-induced label expectations to the true label expectation, group by group.  Writing $\Delta\ell(a)=\ell(a,1)-\ell(a,0)$, the two discrepancies are
\[
        \bbE[(Y-p(X))\Delta\ell(k_\ell(p(X)))\mid g(X)=1]
\]
and
\[
        \bbE[(Y-p(X))\Delta\ell(f(X))\mid g(X)=1].
\]
The first discrepancy is controlled by the monotonicity argument of Lemma~\ref{lem:threshold-bayes-act}, applied under the conditional distribution given $g(X)=1$.  After rescaling the loss to $[0,1]$, that argument shows that
\[
        u\mapsto \Delta\ell(k_\ell(u))
\]
is a monotone function of $u$ whose range is bounded by a universal constant, and hence has a threshold-basis representation with universal coefficient norm.  Definition~\ref{def:app-step-calibration} controls those threshold terms by using thresholds of $p$ and setting the comparator threshold to $1$.

The second discrepancy is different: it involves $\Delta\ell(f(X))$, not the Bayes-act map.  Here we use the bounded-variation normalization directly, which gives a threshold-basis representation with universal coefficient norm for the action-loss difference
\[
        u\mapsto\Delta\ell(u).
\]
Definition~\ref{def:app-step-calibration} controls these terms by setting the prediction threshold to $1$ and using thresholds of $f$.  Thus each discrepancy is at most $O(\eta/\sqrt{P_g})$, and substituting these two comparisons into the ideal predictor-induced inequality gives the claimed panprediction guarantee.
\end{proof}

\begin{theorem}[Deterministic finite-class panprediction]\label{thm:app-panprediction}
Let $\calL$ be a class of bounded-variation losses, let $\calG$ be a finite family of binary groups, and let $\calH\subseteq\Gamma^\calX$ be a finite nonempty class of predictors taking values in a finite grid $\Gamma\subseteq[0,1]$.  Fix a distribution $P$ on $\calX\times\{0,1\}$ and suppose
\[
        p_{\min}=\min_{g\in\calG}P_g>0 .
\]
Assume the values $P_g$ are supplied to the learner.  There is a universal constant $C$ such that, for every $\eps\in(0,1/10)$, there is an algorithm using
\[
        n
        \le
        C\frac{L_{\rm pan}\log(100L_{\rm pan}/(\eps\sqrt{p_{\min}}))}
        {\eps^2 p_{\min}}
\]
samples that outputs a deterministic predictor $p:\calX\to[0,1]$ which is an $(\calL,\calG,\calH,\eps)$-panpredictor with probability at least $2/3$, where
\[
        L_{\rm pan}
        =
        \log\left(
        2|\calG||\calH|(|\Gamma|+1)
        \left(\frac{C}{\eps\sqrt{p_{\min}}}+1\right)
        +2
        \right)+100 .
\]
In particular, if $p_{\min}$ is constant and $|\Gamma|$, $|\calG|$, and $|\calH|$ are polynomially bounded in $1/\eps$, then $n=\widetilde O(\eps^{-2})$.
\end{theorem}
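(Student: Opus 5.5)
The plan is to reduce panprediction to deterministic outcome indistinguishability for a finite test family, invoke Theorem~\ref{thm:linear-tests}, and then apply Lemma~\ref{lem:app-step-to-pan}.

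\textbf{Test family.} Run the construction of Theorem~\ref{thm:linear-tests} at internal accuracy $\alpha=c\eps\sqrt{p_{\min}}$ for a small universal constant $c$. This fixes the uniform $\gamma$-net grid $\Lambda$ with $|\Lambda|=O(1/(\eps\sqrt{p_{\min}}))$. Let $\Theta_\Lambda$ be the threshold representatives on $\Lambda$, with $|\Theta_\Lambda|\le|\Lambda|+1$. Since each $f\in\calH$ takes values in $\Gamma$, the indicator $\one\{f(x)\le w\}$ has at most $|\Gamma|+1$ distinct patterns, realized by representatives $w\in\Gamma_\ast$. Define the tests
\[
a_{g,f,\theta,w}(x,v)=g(x)\one\{v\le\theta\}\one\{f(x)\le w\},
\]
with $g\in\calG$, $f\in\calH$, $\theta\in\Theta_\Lambda$ and $w\in\Gamma_\ast$. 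These take values in $\{0,1\}\subseteq[-1,1]$. Their number is at most $|\calG||\calH|(|\Gamma|+1)(C/(\eps\sqrt{p_{\min}})+1)$, so $L_\calA$ matches $L_{\rm pan}$ up to the additive constants.

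\textbf{Apply the OI theorem.} Theorem~\ref{thm:linear-tests} at accuracy $\alpha$ gives a deterministic grid predictor $h$ with $|\bbE[a(X,h(X))(h(X)-Y)]|\le\alpha$ for every test in the family, with probability at least $2/3$. The sample count is $C L_{\rm pan}\log(100L_{\rm pan}/\alpha)/\alpha^2$, which equals the stated bound after substituting $\alpha=c\eps\sqrt{p_{\min}}$.

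\textbf{Convert to step calibration.} For arbitrary real thresholds $v,w\in[0,1]$, the event $\{h(X)\le v,\ f(X)\le w\}$ coincides with $\{h(X)\le\theta,\ f(X)\le w'\}$ for some representatives $\theta\in\Theta_\Lambda$ and $w'\in\Gamma_\ast$, because $h$ is $\Lambda$-valued and $f$ is $\Gamma$-valued. So the corresponding unconditional residual is at most $\alpha$. Dividing by $P_g$ gives a conditional residual of at most
\[
\frac{\alpha}{P_g}=\frac{c\eps\sqrt{p_{\min}}}{P_g}\le\frac{c\eps}{\sqrt{P_g}},
\]
using $p_{\min}\le P_g$. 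Hence $h$ is $(\calG,\calH,c\eps)$-step calibrated. Lemma~\ref{lem:app-step-to-pan} then makes $h$ an $(\calL,\calG,\calH,Cc\eps)$-panpredictor, and choosing $c$ small enough gives error at most $\eps$.

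\textbf{Where the care is needed.} Only two points require checking. The first is the reduction from arbitrary real thresholds to finitely many representatives, which is exact because both $h$ and $f$ take values in finite grids. The second is the conditional-to-unconditional scaling: the unconditional accuracy must be of order $\eps\sqrt{p_{\min}}$ rather than $\eps p_{\min}$, and the inequality $p_{\min}\le P_g$ is what makes this suffice. The conversion formally uses the values $P_g$, but the algorithm itself only needs $p_{\min}$ to set $\alpha$; this is why the $P_g$ are assumed to be supplied. The success probability is inherited directly from Theorem~\ref{thm:linear-tests}.
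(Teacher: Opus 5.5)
Your proof is correct and follows the paper's route: threshold-conjunction OI tests at accuracy $\Theta(\eps\sqrt{p_{\min}})$, Theorem~\ref{thm:linear-tests}, the exact reduction of arbitrary thresholds to grid representatives, division by $P_g$, and Lemma~\ref{lem:app-step-to-pan}. The one difference is that the paper multiplies each test by $\sqrt{p_{\min}/P_g}$, which is why its algorithm needs the individual $P_g$. Your unnormalized tests give the at-least-as-strong bound $c\eps\sqrt{p_{\min}}/P_g\le c\eps/\sqrt{P_g}$ with the same test count and sample size, and need only $p_{\min}$.
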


\begin{proof}
Let $\tau=c_0\eps\sqrt{p_{\min}}$ for a sufficiently small universal constant $c_0$.  Run the finite-test OI learner of Theorem~\ref{thm:linear-tests} with target accuracy $\tau$.  Let $\Lambda$ be the prediction grid used by that learner, and let $\Theta_\Lambda$ and $\Theta_\Gamma$ contain one representative threshold for each distinct sublevel-set pattern on $\Lambda$ and $\Gamma$, respectively.  Thus
\[
        |\Theta_\Lambda|\le|\Lambda|+1=O(1/\tau),
        \qquad
        |\Theta_\Gamma|\le|\Gamma|+1 .
\]
Use the OI test family
\[
        \calA_{\rm pan}
        =
        \left\{
        (x,z)\mapsto
        \sqrt{\frac{p_{\min}}{P_g}}\,
        g(x)\one\{z\le \vartheta\}\one\{f(x)\le \omega\}:
        g\in\calG,\ f\in\calH,\ \vartheta\in\Theta_\Lambda,\ \omega\in\Theta_\Gamma
        \right\}.
\]
The factor $\sqrt{p_{\min}/P_g}$ converts a uniform OI error into the group-conditional $1/\sqrt{P_g}$ tolerance in Definition~\ref{def:app-step-calibration}, while keeping every test bounded by one.  Theorem~\ref{thm:linear-tests} therefore gives, with probability at least $2/3$, a deterministic grid predictor $p:\calX\to\Lambda$ satisfying
\[
        \left|
        \bbE\left[
        \sqrt{\frac{p_{\min}}{P_g}}\,
        g(X)\one\{p(X)\le \vartheta\}\one\{f(X)\le \omega\}(p(X)-Y)
        \right]
        \right|
        \le \tau
\]
simultaneously for all choices of $g,f,\vartheta,\omega$ above.  Since $p$ is $\Lambda$-valued and every $f\in\calH$ is $\Gamma$-valued, the same inequality holds for arbitrary thresholds $v,w\in[0,1]$ after replacing them by their representatives.  Rearranging and dividing by $P_g$ yields
\[
        \left|
        \bbE[(Y-p(X))\one\{p(X)\le v,\ f(X)\le w\}\mid g(X)=1]
        \right|
        \le
        \frac{\tau}{\sqrt{p_{\min}}\sqrt{P_g}}
        =
        \frac{c_0\eps}{\sqrt{P_g}} .
\]
Thus $p$ is $(\calG,\calH,c_0\eps)$-step calibrated.  Lemma~\ref{lem:app-step-to-pan} implies that $p$ is an $(\calL,\calG,\calH,Cc_0\eps)$-panpredictor, and choosing $c_0$ small enough gives the stated error.

Finally,
\[
        |\calA_{\rm pan}|
        \le
        |\calG||\calH|(|\Gamma|+1)(|\Lambda|+1)
        \le
        |\calG||\calH|(|\Gamma|+1)
        \left(\frac{C}{\eps\sqrt{p_{\min}}}+1\right),
\]
and substituting this bound into Theorem~\ref{thm:linear-tests} with target accuracy $\tau$ gives the stated sample complexity.
\end{proof}

\begin{remark}[Finitely covered classes]\label{rem:app-pan-covers}
The finite-class assumption can be replaced by a finite-cover assumption in the same way as in Corollary~\ref{cor:cover-det-omni}.  The relevant class is the normalized threshold-conjunction class
\[
        \left\{
        x\mapsto
        \sqrt{\frac{p_{\min}}{P_g}}\,
        g(x)\one\{f(x)\le w\}:
        g\in\calG,\ f\in\calH,\ w\in[0,1]
        \right\}.
\]
Here $p_{\min}=\min_{g\in\calG}P_g$, as in Theorem~\ref{thm:app-panprediction}.
If a finite $L_1(P_X)$ cover of this class at radius $O(\eps\sqrt{p_{\min}})$ is supplied, the same proof gives the corresponding logarithmic dependence on the cover size.  For VC group classes and finite-pseudo-dimension benchmark classes, such covers can be built from an independent context sample by the usual empirical-cover argument, at the same $\widetilde O(\eps^{-2}p_{\min}^{-1})$ scale. 
\end{remark}

\section{Removing training randomness}\label{app:training-derandomization}

The algorithms in the body of the paper output prediction-time deterministic predictors, but they are presented with one source of training randomness: the final cell sampler seeds used for rounding.  This randomness can also be removed.  We state the argument for finite OI test families, since multicalibration is obtained by taking the signed calibration tests and the omniprediction applications use finite OI tests after the cover or basis reduction.  The derandomization below is information-theoretic; it is not meant to preserve computational efficiency.

The idea is simple.  Rather than picking the rounding-cell seeds randomly, we enumerate a smaller explicit family of candidate seed assignments, evaluate the resulting predictors on a holdout set, and then pick the best one.  The idea on its own fails if we enumerate all fully independent seed assignments: the candidate family would be far too large for uniform convergence on a holdout set of the target size.  The fix is to use seeds that are not truly independent, but only $k$-wise independent.  Taking $k$ on the order of $\log|\calA|$ (and the desired failure-probability parameter) is sufficient for the same rounding concentration argument, without exceeding our target sample complexity.

Fix the setup of Proposition~\ref{prop:finite-test-rounding}.  Let $\Pi$ be the finite family of rounding cells, let $M=|\Pi|$, and enumerate $\Lambda=\{v_1,\ldots,v_K\}$.  For each $x$, let
\[
        F_x(u)=v_j
        \quad\text{when}\quad
        \sum_{i<j}Q_x(v_i)<u\le \sum_{i\le j}Q_x(v_i),
\]
with the endpoint convention $F_x(0)=v_1$.

\begin{lemma}[Limited-independent cell rounding]\label{lem:kwise-rounding}
There is a universal constant $C$ such that the following holds.  Let $\beta,\eta\in(0,1)$, and let $q$ be a prime with
\[
        q\ge \max\{M,\,4K/\eta\}.
\]
Assign distinct identifiers $\iota(C)\in\mathbb F_q$ to the cells $C\in\Pi$.  Let $k$ be an even integer with
\[
        k\ge C\log\frac{2|\calA|}{\beta}.
\]
For each polynomial $p\in\mathbb F_q[z]$ of degree less than $k$, define a deterministic predictor
\[
        h_p(x)=F_x\left(\frac{p(\iota(C))+1/2}{q}\right)
        \qquad\text{for }x\in C,
\]
where the field element $p(\iota(C))$ is identified with its representative in $\{0,\ldots,q-1\}$.  Then at least a $1-\beta$ fraction of these polynomials satisfy
\[
\begin{aligned}
&\max_{a\in\calA}
\Bigg|
\bbE\bigl[a(X,h_p(X))(h_p(X)-Y)\bigr]
\\
&\qquad -
\bbE\left[\sum_{v\in\Lambda}Q_X(v)a(X,v)(v-Y)\right]
\Bigg|
\\
&\le
C\left(
        \sqrt{V_\Pi\log\frac{2|\calA|}{\beta}}
        +\gamma+\eta
\right).
\end{aligned}
\]
\end{lemma}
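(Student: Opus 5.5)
Treat a uniformly random polynomial $p$ of degree less than $k$ as a random seed assignment. The cell seeds $\tilde U_C=(p(\iota(C))+1/2)/q$ are then $k$-wise independent, and each is uniform on the midpoint grid $\{(j+1/2)/q\}$. The goal is to rerun the proof of Proposition~\ref{prop:finite-test-rounding}, replacing the Hoeffding step by a $k$-th moment bound valid under $k$-wise independence. The ``fraction of polynomials'' claim is exactly the probability statement over this uniform $p$.

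\textbf{Step 1 (marginal discretization).} For fixed $x$, $F_x(\tilde U_C)$ does not have law $Q_x$ exactly, since $\tilde U_C$ is uniform on $q$ midpoints rather than on $[0,1]$. Each CDF breakpoint $\sum_{i\le j}Q_x(v_i)$ can misclassify at most one grid cell of width $1/q$. With $K$ breakpoints, the total variation distance between the law of $F_x(\tilde U_C)$ and $Q_x$ is therefore at most $K/q\le\eta/4$. Because test payoffs are bounded by $1$ in absolute value, the per-cell means differ from the idealized ones by at most $O(\eta)P_X(C)$. Summed over cells this contributes $O(\eta)$. So I will center each variable $Z^0_{a,C}$, defined as in the proof of Proposition~\ref{prop:finite-test-rounding}, at its true mean under the discrete seed, and pay the shift $O(\eta)$ separately. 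The grid-rounding part still costs $2\gamma$ deterministically, since the rounded values remain in $\Lambda_x$.

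\textbf{Step 2 ($k$-wise moment bound).} The centered variables $Z_C$ are functions of distinct seeds $\tilde U_C$. The identifiers are distinct and $q\ge M$, so any $k$ of the variables are mutually independent. Each has range length $c_C$, with $c_{\{x\}}\le 4p_xr_x$ on the coverage event and $c_C\le 2P_X(C)$ on unobserved cells. Hence $\sum_C c_C^2=O(V_\Pi)$. Expanding $\bbE[(\sum_C Z_C)^k]$ for even $k$, each monomial involves at most $k$ distinct cells, so its expectation equals its value under full independence. The $k$-th moment therefore coincides with the fully independent one. The standard sub-Gaussian moment bound then gives $\bbE[(\sum_C Z_C)^k]\le (Ck V_\Pi)^{k/2}$. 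Markov's inequality at level $t$ bounds the tail by $(CkV_\Pi/t^2)^{k/2}$. Taking $t=C'\sqrt{kV_\Pi}$ makes this at most $2^{-k}$. Choosing $k\ge C\log(2|\calA|/\beta)$ makes $2^{-k}\le\beta/|\calA|$, and a union bound over $a\in\calA$ yields failure probability at most $\beta$.

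\textbf{Step 3 (combine).} Adding the clipped deviation $O(\sqrt{V_\Pi\log(2|\calA|/\beta)})$, the discretization shift $O(\eta)$, and the grid term $2\gamma$ gives the claimed bound. I expect the main care to lie in Step 2. The point to verify is that the moment comparison is legitimate: expectations of products of at most $k$ factors match full independence. This requires $k$ even and the seeds exactly $k$-wise independent over $\mathbb F_q$, which degree-$<k$ polynomials evaluated at distinct points provide.
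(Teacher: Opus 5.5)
Your proposal is correct and follows the paper's route. The main ingredients all match: a uniformly random polynomial gives $k$-wise independent midpoint seeds; seed discretization costs $O(K/q)=O(\eta)$; the clipped cell deviations are centered at their discrete-seed means with the same range bounds $4p_xr_x$ and $2P_X(C)$; the grid term costs $2\gamma$; and Hoeffding is replaced by a $k$th-moment bound, which the paper cites from Schmidt--Siegel--Srinivasan while noting, as you do, that it follows because the $k$th moment depends only on joint laws of at most $k$ cells.

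Two small points. First, $h_p(x)\in\supp(Q_x)\subseteq\Lambda_x$ holds because midpoint seeds lie strictly inside $(0,1)$ and so never trigger the convention $F_x(0)=v_1$; the paper states this explicitly, whereas you only assert it. Second, if $k$ is much larger than $C\log(2|\calA|/\beta)$, apply the moment bound at an even order $k'\le k$ of that size, so the deviation scales with $\sqrt{\log(2|\calA|/\beta)}$ rather than $\sqrt{k}$; the paper also leaves this implicit.
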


\begin{proof}
Choose $p$ uniformly from the $q^k$ polynomials of degree less than $k$.  For any $k$ distinct cells, the values $p(\iota(C))$ are independent and uniform in $\mathbb F_q$, so the induced cell seeds are $k$-wise independent and each is uniform on $\{(j+1/2)/q:j=0,\ldots,q-1\}$.  Using midpoints avoids putting mass on the artificial endpoint $u=0$ in the inverse-CDF convention.

First compare this discrete seed to a continuous uniform seed.  If $J$ is uniform on $\{0,\ldots,q-1\}$ and $\widetilde Q_x$ is the distribution of $F_x((J+1/2)/q)$, then for every function $\psi:\Lambda\to[-1,1]$,
\begin{equation}\label{eq:discrete-seed-bias}
        \left|
        \sum_{v\in\Lambda}\widetilde Q_x(v)\psi(v)
        -
        \sum_{v\in\Lambda}Q_x(v)\psi(v)
        \right|
        \le \frac{2K}{q}\le \frac{\eta}{2}.
\end{equation}
Indeed, the probability of each inverse-CDF interval is approximated by the fraction of midpoints it contains, and the sum of the absolute errors over the $K$ intervals is at most $2K/q$.

Write $I_x=[a_x,b_x]$ and define $\clip_x(v)=\min\{\max\{v,a_x\},b_x\}$, as in Proposition~\ref{prop:finite-test-rounding}.  Fix a test $a\in\calA$ and define the clipped payoff
\[
        \psi_{a,x}(v)=a(x,v)(\clip_x(v)-\mu(x)).
\]
For a cell $C$, set
\[
        Z_{a,C}
        =
        \int_C
        \left(
        \psi_{a,x}\left(F_x\left(\frac{p(\iota(C))+1/2}{q}\right)\right)
        -
        \frac1q\sum_{j=0}^{q-1}\psi_{a,x}\left(F_x\left(\frac{j+1/2}{q}\right)\right)
        \right)\,dP_X(x).
\]
The variables $\{Z_{a,C}\}_{C\in\Pi}$ are mean zero and $k$-wise independent.  Their ranges are bounded exactly as in the proof of Proposition~\ref{prop:finite-test-rounding}: if $C=\{x\}$ is an observed singleton, then $|Z_{a,C}|\le 4p_xr_x$ when $p_x>0$ and $Z_{a,C}=0$ when $p_x=0$, while if $C\in\Pi_{\rm unobs}$ then $|Z_{a,C}|\le 2P_X(C)$.  Hence
\[
        \sum_{C\in\Pi}\|Z_{a,C}\|_\infty^2\le C_0 V_\Pi
\]
for a universal constant $C_0$.

We use the standard Chernoff--Hoeffding moment bound for bounded $k$-wise independent sums~\citep{schmidt1995chernoff}: if $Z_C$ are mean-zero, $k$-wise independent, $|Z_C|\le b_C$, and $B^2=\sum_C b_C^2$, then for even $k$,
\[
        \bbP\left(\left|\sum_C Z_C\right|> C_1B\sqrt{k}\right)\le \exp(-k).
\]
For the simple form used here, this follows directly from the usual $k$th-moment proof for independent bounded sums, since the $k$th moment depends only on joint distributions of at most $k$ coordinates.  Applying this with $B^2\le C_0V_\Pi$ and our choice of $k$ gives, for the fixed test $a$,
\[
        \left|\sum_C Z_{a,C}\right|
        \le
        C_2\sqrt{V_\Pi\log\frac{2|\calA|}{\beta}}
\]
except with probability at most $\beta/|\calA|$.  A union bound controls all tests simultaneously.

It remains to add back the two deterministic approximation terms.  Equation~\eqref{eq:discrete-seed-bias}, applied pointwise and integrated over $x$, contributes at most $\eta/2$ to the clipped payoff.  The unclipped grid-rounding part contributes at most $2\gamma$, exactly as in Proposition~\ref{prop:finite-test-rounding}: the midpoint convention ensures $h_p(x)\in\supp(Q_x)\subseteq\Lambda_x$, and all grid values in $\Lambda_x$ lie within distance $\gamma$ of $I_x$.  Absorbing constants proves the lemma.
\end{proof}

\begin{lemma}[Validation over a finite candidate list]\label{lem:validation-selection}
Let $\calH$ be any finite collection of deterministic predictors $h:\calX\to\Lambda$, and let $S_{\rm val}$ be an independent validation sample of size
\[
        N_{\rm val}
        \ge
        C\frac{\log(2|\calA||\calH|/\delta)}{\eta^2}.
\]
For $h\in\calH$, define the empirical OI error
\[
        \widehat{\operatorname{OIErr}}_{S_{\rm val}}(h;\calA)
        =
        \max_{a\in\calA}
        \left|
        \frac1{N_{\rm val}}\sum_{(X_i,Y_i)\in S_{\rm val}}
        a(X_i,h(X_i))(h(X_i)-Y_i)
        \right|.
\]
With probability at least $1-\delta$ over $S_{\rm val}$, every $h\in\calH$ satisfies
\[
        \left|
        \widehat{\operatorname{OIErr}}_{S_{\rm val}}(h;\calA)
        -
        \operatorname{OIErr}_{P}(h;\calA)
        \right|
        \le \eta .
\]
Consequently, if some $h^\star\in\calH$ has $\operatorname{OIErr}_{P}(h^\star;\calA)\le\rho$, then any empirical minimizer $\widehat h$ has
\[
        \operatorname{OIErr}_{P}(\widehat h;\calA)\le \rho+2\eta .
\]
\end{lemma}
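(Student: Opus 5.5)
The plan is a Hoeffding bound for each fixed pair $(h,a)$, a union bound over all $|\calH||\calA|$ pairs, and then a transfer of uniform closeness through the maximum and absolute value.

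\textbf{Step 1: one pair at a time.} Fix $h\in\calH$ and $a\in\calA$, and set
\[
        \xi_i=a(X_i,h(X_i))(h(X_i)-Y_i).
\]
Because $S_{\rm val}$ is independent of how $\calH$ was constructed, we may condition on $\calH$ and treat each $h$ as a fixed function.

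\begin{itemize}
\item The $\xi_i$ are i.i.d.\ with mean $\bbE[a(X,h(X))(h(X)-Y)]$.
\item Since $|a|\le1$ and $h,Y\in[0,1]$, each $\xi_i$ lies in $[-1,1]$.
\end{itemize}

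Hoeffding's inequality then gives
\[
        \bbP\left(\left|\frac1{N_{\rm val}}\sum_i\xi_i-\bbE\xi_1\right|>\eta\right)\le 2\exp(-N_{\rm val}\eta^2/2).
\]

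\textbf{Step 2: union bound.} Take the union over all $|\calA||\calH|$ pairs. The total failure probability is at most $2|\calA||\calH|\exp(-N_{\rm val}\eta^2/2)$, which is at most $\delta$ once $C\ge2$ in the sample-size condition. On the complementary event, every empirical residual correlation is within $\eta$ of its population value, simultaneously for all pairs $(h,a)$.

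\textbf{Step 3: pass through $|\cdot|$ and $\max_a$.} Both maps are $1$-Lipschitz in the sup norm:
\[
        \big||u|-|w|\big|\le|u-w|,
        \qquad
        \Big|\max_a u_a-\max_a w_a\Big|\le\max_a|u_a-w_a|.
\]
Applying these with $u_a$ the empirical and $w_a$ the population correlations gives
\[
        \left|\widehat{\operatorname{OIErr}}_{S_{\rm val}}(h;\calA)-\operatorname{OIErr}_P(h;\calA)\right|\le\eta
        \qquad\text{for every }h\in\calH.
\]

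\textbf{Step 4: the consequence.} Let $\widehat h$ be an empirical minimizer. Using the bound from Step 3 twice and minimality of $\widehat h$ once,
\[
        \operatorname{OIErr}_P(\widehat h;\calA)
        \le\widehat{\operatorname{OIErr}}(\widehat h;\calA)+\eta
        \le\widehat{\operatorname{OIErr}}(h^\star;\calA)+\eta
        \le\operatorname{OIErr}_P(h^\star;\calA)+2\eta
        \le\rho+2\eta.
\]

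The only point that needs care is the independence of $S_{\rm val}$ from the candidate list $\calH$. The list is built from $S_0$, $S_1$, $S_2$ and the polynomial enumeration, so conditioning on those makes every $h\in\calH$ fixed. Everything else is routine.
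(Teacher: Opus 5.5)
Your proposal is correct and follows the paper's proof: Hoeffding for each fixed pair $(h,a)$ with summands in $[-1,1]$, a union bound over $\calA\times\calH$, and the same three-inequality chain through the empirical minimizer. You also spell out the $1$-Lipschitz passage through $|\cdot|$ and $\max_a$ and the explicit constant $C\ge 2$; the paper leaves both implicit.
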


\begin{proof}
For fixed $h$ and $a$, the summands are in $[-1,1]$.  Hoeffding's inequality and a union bound over $\calA\times\calH$ give the uniform deviation event.  On this event, an empirical minimizer $\widehat h$ satisfies
\[
        \operatorname{OIErr}_{P}(\widehat h;\calA)
        \le
        \widehat{\operatorname{OIErr}}_{S_{\rm val}}(\widehat h;\calA)+\eta
        \le
        \widehat{\operatorname{OIErr}}_{S_{\rm val}}(h^\star;\calA)+\eta
        \le
        \rho+2\eta .
\]
\end{proof}

\begin{theorem}[Deterministic training for finite OI tests]\label{thm:deterministic-training-oi}
There is a universal constant $C$ such that the following holds.  Let $\calA\subseteq[-1,1]^{\calX\times[0,1]}$ be a finite nonempty test family, let
\[
        L_{\calA}=\log(2|\calA|+2)+100,
\]
and let $\eps\in(0,1/10)$.  There is a deterministic learning algorithm which uses
\[
        n
        \le
        C\frac{L_{\calA}\log(C L_{\calA}/\eps)}{\eps^2}
\]
i.i.d.\ samples from $P$, outputs a deterministic grid predictor $h:\calX\to\Lambda$ satisfying
\[
        \operatorname{OIErr}_{P}(h;\calA)\le\eps
\]
with probability at least $2/3$ over the sample.  The same conclusion therefore applies to the finite-test multicalibration and omniprediction instantiations used in the body, with the same sample bounds up to the logarithmic factors already hidden in the $\widetilde O(\cdot)$ notation.
\end{theorem}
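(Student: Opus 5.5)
The plan is to keep every step of the Theorem~\ref{thm:linear-tests} pipeline except the final random draw of cell seeds. That draw is replaced by enumerating the limited-independence family of Lemma~\ref{lem:kwise-rounding} and selecting a member with Lemma~\ref{lem:validation-selection}. The earlier stages also use training randomness only in ways that are already deterministic or can be made so. The sample split can be fixed as the first $n_0$, next $T$, next $m$ points, and so on. Algorithm~\ref{alg:known-interval-ew} is deterministic given its inputs, because the LP has a tie-broken solution. So the only coins are the cell seeds.

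\textbf{Step 1.} Set $\alpha=c\eps$, $\gamma=\alpha/64$ and $L=L_\calA$, and draw $S_0,S_1,S_2$ of the sizes in the proof of Theorem~\ref{thm:linear-tests}, but with the probability budgets tightened. Proposition~\ref{prop:learned-intervals}, Lemma~\ref{lem:partition} and Lemma~\ref{lem:finite-test-otb} together give an event of probability at least $0.8\cdot 0.95$ over these three samples. On that event:
\begin{itemize}
\item the hints are $\gamma$-valid;
\item $V_\Pi\le C\alpha^2/L$;
\item $Q$ is supported on $\Lambda_x$ with $\operatorname{OIErr}_P(Q;\calA)\le C'\alpha$.
\end{itemize}
Because these failure probabilities come from Markov's inequality, I would shrink them slightly by enlarging the constants $C_0$, so the final product stays above $2/3$.

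\textbf{Step 2.} Take $M=|\Pi|\le n_0+2m+1$ and choose deterministically the smallest prime $q\ge\max\{M,4K/\alpha\}$. Take $k$ to be the smallest even integer with $k\ge C\log(2|\calA|/\beta)$, where $\beta=1/2$. Lemma~\ref{lem:kwise-rounding} with $\eta=\alpha$ then says that at least half of the $q^k$ polynomial predictors $h_p$ satisfy
\[
\operatorname{OIErr}_P(h_p;\calA)\le \operatorname{OIErr}_P(Q;\calA)+C(\sqrt{V_\Pi L}+\gamma+\alpha)\le C''\alpha.
\]
Here the OI-error comparison follows from the triangle inequality applied test by test, and $\sqrt{V_\Pi L}\le C\alpha$ on the Step~1 event. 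In particular, such an $h^\star$ exists in the candidate list $\calH=\{h_p\}$.

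\textbf{Step 3.} Draw an independent validation sample and output an empirical minimizer, with a fixed tie-break so the whole algorithm is deterministic. Lemma~\ref{lem:validation-selection} with $\delta=0.01$ and $\eta=\alpha$ requires
\[
N_{\rm val}\ge C\bigl(\log|\calA|+k\log q\bigr)/\alpha^2 .
\]
This is the step I expect to be the main accounting obstacle. Since $\log|\calH|=k\log q$, I must check the size of each factor:
\begin{itemize}
\item $k=O(\log|\calA|)=O(L)$;
\item $\log q=O(\log(n_0+m+K/\alpha))=O(\log(L/\eps))$.
\end{itemize}
Hence $N_{\rm val}=O(L\log(L/\eps)/\eps^2)$, which matches the bound $C L_\calA\log(CL_\calA/\eps)/\eps^2$ already paid for $n_0$. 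This is exactly why only $k$-wise independence works: full independence would give $\log|\calH|\approx M\log q$, which is far too large. The same sample bound $n=n_0+T+m+N_{\rm val}$ follows.

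\textbf{Step 4.} Combine the probabilities. Step~1 succeeds with probability at least about $0.76$, and Step~3 succeeds with probability $0.99$ independently. The selected $\widehat h$ then has $\operatorname{OIErr}_P\le C''\alpha+2\alpha\le\eps$ once $c$ is chosen small. Step~2 is existential, so it costs no probability. The total is above $2/3$, after enlarging constants in Step~1 if needed.

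For the final sentence of the theorem, apply the result to the signed calibration test family $\calA_{\rm MC}$, using $\log|\calA_{\rm MC}|\le L$ and Lemma~\ref{lem:signed-form}. For the omniprediction instantiations, apply it to $\calA_{\rm omni}$ as in Theorem~\ref{thm:det-omni}. In both cases the bounds are unchanged up to logarithmic factors.
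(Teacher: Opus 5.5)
Your proposal is correct and follows essentially the same route as the paper. You run the Theorem~\ref{thm:linear-tests} pipeline without drawing the final seeds, enumerate the $q^k$ polynomial seed assignments of Lemma~\ref{lem:kwise-rounding} with $k=O(L_{\calA})$ and $\log q=O(\log(L_{\calA}/\eps))$, and select an empirical minimizer on an independent validation sample via Lemma~\ref{lem:validation-selection}. The differences are only cosmetic ($\beta=1/2$ instead of $1/4$, and your cell count $M\le n_0+2m+1$, which is actually the accurate bound where the paper writes $n_0+m+1$), and none of them affect the logarithmic accounting.
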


\begin{proof}
Run the construction from Theorem~\ref{thm:linear-tests} at internal accuracy $\alpha=c\eps$, but do not draw the final independent rounding seeds.  Instead, reserve an independent validation sample $S_{\rm val}$; all earlier samples are used exactly as in Theorem~\ref{thm:linear-tests} to construct the interval hints, the rounding cells, and the randomized grid predictor $Q$.

From the proof of Theorem~\ref{thm:linear-tests}, with probability at least $0.75$ over these non-validation samples, the following three conclusions hold simultaneously:
\[
        \operatorname{OIErr}_{P}(Q;\calA)\le C_1\alpha,
        \qquad
        \supp(Q_x)\subseteq\Lambda_x\quad\text{for every }x,
        \qquad
        V_\Pi\le C_1\frac{\alpha^2}{L_{\calA}}.
\]
Condition on this event.  Let $M=|\Pi|$ and choose a prime
\[
        q\le C_2\left(M+\frac{K}{\alpha}+10\right)
        \quad\text{with}\quad
        q\ge \max\{M,4K/\alpha\},
\]
which exists by Bertrand's postulate after increasing $C_2$.  The number of cells is at most the number of observed singletons plus the number of unobserved cells, so $M\le n_0+m+1$; together with $K=O(1/\alpha)$ and the sample sizes in Theorem~\ref{thm:linear-tests}, this gives $\log q\le C\log(C L_{\calA}/\alpha)$.  Let $k$ be the smallest even integer at least $C_3L_{\calA}$, and let $\calH$ be the set of predictors $h_p$ obtained from all degree-$(k-1)$ polynomials over $\mathbb F_q$ as in Lemma~\ref{lem:kwise-rounding}.  Then
\[
        |\calH|=q^k,
        \qquad
        \log|\calH|\le C_4L_{\calA}\log(C_4L_{\calA}/\alpha).
\]
Lemma~\ref{lem:kwise-rounding}, applied with $\beta=1/4$ and $\eta=\alpha$, implies that at least one $h^\star\in\calH$ satisfies
\[
        \operatorname{OIErr}_{P}(h^\star;\calA)
        \le
        \operatorname{OIErr}_{P}(Q;\calA)
        +
        C_5\left(
        \sqrt{\frac{\alpha^2}{L_{\calA}}\log(8|\calA|)}
        +\gamma+\alpha
        \right)
        \le C_6\alpha .
\]

Now use the validation sample to choose an empirical minimizer of $\widehat{\operatorname{OIErr}}_{S_{\rm val}}(h;\calA)$ over $h\in\calH$.  By Lemma~\ref{lem:validation-selection}, a validation sample of size
\[
        N_{\rm val}
        \ge
        C_7\frac{L_{\calA}\log(C_7L_{\calA}/\alpha)}{\alpha^2}
\]
is enough to ensure, with probability at least $0.95$ over $S_{\rm val}$, that the selected predictor $\widehat h$ satisfies
\[
        \operatorname{OIErr}_{P}(\widehat h;\calA)
        \le C_6\alpha+2\alpha.
\]
Choosing the universal constant $c$ in $\alpha=c\eps$ small enough makes this at most $\eps$.

The total success probability is at least $0.75\cdot0.95>2/3$.  The base construction and validation sample sizes are both bounded by
\[
        C\frac{L_{\calA}\log(C L_{\calA}/\eps)}{\eps^2}
\]
after adjusting the universal constant $C$.  The algorithm is deterministic because the polynomial list is enumerated in a fixed order, the validation minimizer is tie-broken deterministically, and all LP tie-breaking in the earlier construction was fixed deterministically.
\end{proof}

\end{document}